\documentclass[letterpaper]{article} 
\usepackage{aaai2026}
\usepackage{times}  
\usepackage{helvet}  
\usepackage{courier}  
\usepackage[hyphens]{url}  
\usepackage{graphicx} 
\usepackage{natbib}  
\usepackage{caption} 
\usepackage{soul}
\usepackage{amsmath}
\usepackage{amsfonts}
\usepackage{amsthm}
\usepackage{amssymb}
\usepackage{booktabs}
\usepackage{algorithm}
\usepackage{algorithmicx}
\usepackage[noend]{algpseudocode}
\usepackage{comment}
\usepackage{todonotes}
\usepackage{color, colortbl, etoolbox}
\usepackage{tabulary, array}
\usepackage{comment}
\usepackage{cancel}

\usepackage{array,multirow,makecell}
\usepackage[switch]{lineno}

\newtheorem{definition}{Definition}
\newtheorem{example}{Example}
\newtheorem{theorem}{Theorem}

\newtheorem{proposition}{Proposition}
 
\newtheorem{lemma}{Lemma}
\newtheorem{postulate}{Principle}

\newcommand{\cF}{{\mathcal{F}}}

\newcommand{\cL}{{\mathcal{L}}}

\newcommand{\bM}{{\mathbf{M}}}

\newcommand{\card}[1]{|#1|}
\newcommand{\tuple}[1]{\langle #1 \rangle}

\newcommand{\Arg}{{\mathtt{Arg}}(\cL)}
\newcommand{\ArgF}{{\mathtt{Arg}}(\cF)}

\newcommand{\supp}{\mathtt{S}}
\newcommand{\conc}{\mathtt{C}}

\newcommand{\CN}{\mathtt{CN}}

\newcommand{\CNF}{{\mathtt{CNF}}}

\newcommand{\uc}{{\sqcup}}

\newcommand{\agg}{g}

\newcommand{\similarity}{\mathtt{sim}}
\newcommand{\simFlatPos}{\similarity_{\text{ord}}^{\text{}}}
\newcommand{\simFlatUnord}{\similarity_{\text{unord}}^{\text{}}}
\newcommand{\simFlatPara}[1]{\similarity_{\text{para}}^{\text{}#1}}
\newcommand{\simi}{\similarity{\mathtt{Arg}}}
\newcommand{\simL}{\mathtt{sim{\mathtt{L}}}}
\newcommand{\simC}{\mathtt{sim{\mathtt{C}}}}
\newcommand{\simS}{\mathtt{sim{\mathtt{S}}}}
\newcommand{\simP}{\mathtt{sim{\mathtt{P}}}}

\newcommand{\setObj}{\mathbb{X}}

\newcommand{\avg}{\mathtt{avg}}

\newcommand{\jacc}{\mathtt{jac}}
\newcommand{\dice}{\mathtt{dic}}
\newcommand{\soren}{\mathtt{sor}}
\newcommand{\ander}{\mathtt{adb}}
\newcommand{\sok}{\mathtt{ss}}

\renewcommand{\min}{\mathtt{min}}
\renewcommand{\max}{\mathtt{max}}

\newcommand{\bm}{\mathtt{bm}}
\newcommand{\simsbert}{\mathtt{simP}^{\text{\tiny SBRT}}}
\newcommand{\best}[2]{\text{B}_{#1}^{#2}}  

\newcommand{\midsize}{\fontsize{9.5pt}{11pt}\selectfont}

\title{
Technical Report -- A Context-Sensitive Multi-Level Similarity Framework for First-Order Logic Arguments: An Axiomatic Study}
\author{
  Victor David$^{1}$, Jérôme Delobelle$^{2}$, 
Jean-Guy Mailly$^{3}$} 
\affiliations{
$^1$University Côte d'Azur, Inria, CNRS, I3S, Sophia-Antipolis, France \\
$^2$Université Paris Cité, LIPADE, Paris, France \\
$^3$Université Toulouse Capitole, IRIT, Toulouse, France\\
victor.david@inria.fr, jerome.delobelle@u-paris.fr, jean-guy.mailly@irit.fr}

\date{}

\begin{document}
\maketitle

\begin{abstract}
Similarity in formal argumentation has recently gained attention due to its significance in problems such as argument aggregation in semantics and enthymeme decoding. While existing approaches focus on propositional logic, we address the richer setting of First-Order Logic (FOL), where similarity must account for structured content. We introduce a comprehensive framework for FOL argument similarity, built upon: (1) an extended axiomatic foundation; (2) a four-level parametric model covering predicates, literals, clauses, and formulae similarity; (3) two model families, one syntax-sensitive via language models, both integrating contextual weights for nuanced and explainable similarity; and (4) formal constraints enforcing desirable properties.

\end{abstract}

\section{Introduction}

Formal argumentation typically involves two components: a representation module structuring arguments and their relations in a graph (support or attack), and a reasoning module determining the acceptability of arguments via a semantics. This paradigm is now widely explored in knowledge representation and reasoning, with applications in decision-making \cite{ZhongFLT19,LeturcB23}, XAI \cite{vcyras2021argumentative,guo2023argumentative}, judgmental forecasting \cite{irwin2022forecasting,Gorur0T23}, and enthymeme-based argumentation \cite{ben2024understanding,david2025logic}.

Across these diverse applications, a common concern has emerged: the need to assess not only whether arguments exist and interact, but also how similar they are. This notion of similarity, first formalized in the propositional setting~\cite{AmgoudD18}, has been explored through an axiomatic lens to ensure rational behavior~\cite{AmgoudD20,AmgoudD21}. One key motivation is to improve \emph{argument aggregation} in gradual semantics, where each argument receives a degree of acceptability. 
For example, consider three arguments: $A_1 = \langle \{p_1, p_1 \Rightarrow q\}, q \rangle$, $A_2 = \langle \{p_2, p_2 \Rightarrow q\}, q \rangle$, and $A_3 = \langle \{p_3, p_3 \Rightarrow r\}, r \rangle$. While $A_1$ and $A_2$ support the same claim $q$, $A_3$ supports an unrelated claim $r$. Suppose all three arguments attack a fourth argument $B = \langle \{\neg q \land \neg r\}, \neg q \land \neg r \rangle$. In gradual semantics, the degree of acceptability of $B$ is influenced by its attackers. If similarity is not taken into account, $A_1$ and $A_2$ may overly lower the acceptability of $B$ due to overlapping information. 
In contrast, a similarity-aware approach would recognize the overlap between $A_1$ and $A_2$, down-weighting their combined effect, while treating the impact of $A_3$ independently. This prevents redundancy, leading to more accurate evaluations.

More recently, similarity has also been used to evaluate enthymeme decoding~\cite{ben2024axiomatic}, where the goal is to reconstruct incomplete arguments. In that setting, similarity helps quantify how well the decoding preserves the original information.

While most existing approaches focus on propositional logic, only few works have considered First-Order Logic (FOL) \cite{david2023similarity,david2025similarity}.  
However, these approaches struggle with syntactic proximity, and face scalability issues due to variable instantiation. To overcome these limitations, we propose a new framework for FOL argument similarity that (i) captures syntactic sensitivity via language-model-based measures, (ii) avoids instantiation by approximating variables as constants, and (iii) supports context-sensitive weighting on symbols and clauses.

Our framework defines a four-level similarity model, from predicates to full formulae, with theoretical guarantees and high potential for NLP applications such as text similarity, or argument clustering. 
With recent progress in text-to-logic translation~\cite{han2022folio,lu2022parsing,yang2023harnessing,lalwani2024nl2fol,ryu2024divide,lee2025entailment}, applying our hybrid similarity models, combining logical structure with multi-level semantic evaluation, appears increasingly promising for natural language arguments, though their full potential remains an open research direction.

\section{Logic and Arguments}

\subsection{First-Order Logic and Arguments}

We consider a standard first-order logic (FOL) language $\cL$, built from a set of predicate symbols $P, Q, \dots$ and terms constructed from  constants, variables, and function symbols. A formula $\phi$ is built using the standard logical connectives ($\neg, \wedge, \vee, \Rightarrow, \Leftrightarrow$) and quantifiers ($\forall, \exists$).

We write $\Phi \vdash \alpha$ for deduction, and denote the deductive closure of $\Phi$ by $\CN(\Phi) = \{\alpha \mid \Phi \vdash \alpha\}$.
A set of formulae $\Phi$ is inconsistent if $\Phi \vdash \bot$.  
Following \cite{besnard2005practical}, a FOL argument is a minimal set of formulae sufficient to deduce a claim.

\begin{definition}\label{def:argument}
\upshape{
A FOL \textbf{argument} is a pair $A = \langle \Phi, \alpha \rangle$ s.t. $\Phi \subset \cL$, $\alpha \in \cL$ and: (1: consistency) $\Phi \not\vdash \bot$, (2: inference) $\Phi \vdash \alpha$, and (3: minimality) no $\Phi' \subset \Phi$ satisfies $\Phi' \vdash \alpha$. 
We write $\supp(A) = \Phi$ (support), $\conc(A) = \{\alpha\}$ (claim), and denote the set of all FOL arguments by $\Arg$. An argument is trivial if $\supp(A) = \emptyset$ and $\conc(A) = \top$.

}
\end{definition}

As discussed in \cite{AmgoudD18}, 
two finite sets of formulae $\Phi, \Psi \subseteq \cL$ are \emph{equivalent}, i.e., $\Phi \cong \Psi$, iff  there is a bijection ${f: \Phi \rightarrow \Psi}$ s.t. $\forall \phi \in \Phi$, \linebreak $\phi \equiv f(\phi)$. 
We use this restricted equivalence instead of \linebreak $\CN(\Phi) = \CN(\Psi)$ to prevent false matches from incorrect \linebreak information.
For example the sets {\small $\{\texttt{Square(a)}, \linebreak  \texttt{Square(a)} \Rightarrow \texttt{Rectangle(a)}\}$} and {\small $\{\texttt{Rectangle(a)}, \linebreak \texttt{Rectangle(a)} \Rightarrow  \texttt{Square(a)}\}$} should not be equivalent. 

We use pairwise equivalence between formula sets to define a bijective equivalence between FOL arguments.

\begin{definition}\label{def:bijeq}
\upshape{
Two FOL arguments $A, B \in \Arg$ are \textbf{bijectively equivalent}, denoted $A \approx B$, iff there exist bijections:
(1) $f: \supp(A) \rightarrow \supp(B)$ with $\forall \phi \in \supp(A),~ \phi \equiv f(\phi)$; (2) $f': \conc(A) \rightarrow \conc(B)$ with $\forall \psi \in \conc(A),~ \psi \equiv f'(\psi)$.
}
\end{definition}

\subsection{CNF Transformation}\label{sec:CNF}

Throughout the paper, we assume that $\cL$ is a first-order language over a finite signature, i.e., with a finite set of constants, functions, and predicates. This restriction ensures that the space of CNF formulae remains bounded and comparable, which is reasonable in our context, as formulae originate from natural language with a limited vocabulary.

A formula $\phi$ is in \textbf{conjunctive normal form} (CNF) if it is a conjunction of clauses $\phi = \bigwedge_i C_i$, each $C_i$ being a disjunction of literals. 
Following \cite{amgoud2021similarity}, we map each formula $\phi \in \cL$ to a compiled CNF formula $\CNF(\phi) \in \cF \subset \cL$, where $\cF$ is a fixed finite sublanguage containing only quantifier-free formulae in conjunctive normal form (CNF), possibly obtained after Skolemization. The CNF transformation involves standard steps: eliminating implications and biconditionals, pushing negations inward (De Morgan), standardizing variables, 
applying Skolemization to eliminate existential quantifiers with Skolem functions or constants, dropping universal quantifiers, distributing disjunctions over conjunctions, and applying deterministic syntactic normalization (e.g., ordering literals and clauses). 

\begin{example}
Consider the sentence:  
\emph{``Every dog loves some bone."}  
Its FOL form is:  
$\forall x.~ \texttt{Dog}(x) \Rightarrow \exists y.~ (\texttt{Bone}(y) \wedge \texttt{Loves}(x, y))$.  
CNF compilation yields:  
$\{\neg \texttt{Dog}(x) \vee \texttt{Bone}(f(x)),\ 
\neg \texttt{Dog}(x) \vee \texttt{Loves}(x, f(x))\}$.
\end{example}

This transformation is not unique, as CNF representations may vary with normalization or Skolem naming. 
We do not fix a strategy, but assume that one method \(\CNF\) is chosen.

\textbf{Important note:} 
Although $\CNF(\phi)$ is not always logically equivalent to $\phi$, since Skolemization preserves only satisfiability, it provides a canonical  approximation, well-suited for structural comparison in our similarity framework. 

This approximation is a deliberate and practical choice. Skolemization handles quantifier-heavy formulae by replacing existential quantifiers with terms. This is useful in natural language settings, where grounding is infeasible due to vast or undefined domains (e.g., all people, all locations), avoiding the combinatorial explosion of full instantiation.

By contrast, the Order-Sorted (OS) FOL framework proposed in~\cite{david2023similarity} preserves equivalence more precisely, but becomes impractical in large or open domains. Our approximation thus favors scalability and operational feasibility, while preserving sufficient structure for fine-grained similarity comparison.

\begin{definition}
\upshape{
For a finite set of formulae $\Phi \subseteq \cL$, the \textbf{CNF compiled set} is:
$\uc(\Phi) = \bigcup_{\phi \in \Phi} \CNF(\phi)$, 
with variable renaming to avoid clashes.
}
\end{definition}

We extend the classical FOL argument definition to a CNF-based form, where all components are in CNF and the claim is a set of clauses. While semantically equivalent, this structure enables finer analysis, such as minimality.

\begin{definition}
\upshape{
A \textbf{CNF argument} is a pair $A = \langle \Phi, \Psi \rangle$ such that $\Phi \subseteq \cF$, $\Psi \subseteq \cF$ and:  
(1) $\Phi \not\vdash \bot$, (2) $\CN(\Psi) \subseteq \CN(\Phi)$, and (3) no $\Phi' \subset \Phi$ satisfies $\Phi' \vdash \Psi$. 
We denote the universe of all CNF arguments by $\ArgF$ (included in $\Arg$).
}
\end{definition}

We now define the compiled version of an argument.

\begin{definition}
\upshape{
A \textbf{compiled argument} $A^c = \langle \uc(\Phi), \uc(\alpha) \rangle$ is obtained by compiling both the support and the claim of $A = \langle \Phi, \alpha \rangle \in \Arg$.
}
\end{definition}

Some arguments in $\Arg$ may compile into forms that do not satisfy the constraint (3) of a CNF argument. For example, $A = \langle \{P(a) \wedge P(b)\}, P(a) \rangle \in \Arg$, but $A^{\mathtt{c}} = \langle \{P(a), P(b)\}, \{P(a)\} \rangle \notin \ArgF$. As shown in~\cite{amgoud2021similarity}, clausal argument ensures minimality.

\section{Multi-Level Arguments Similarity}\label{sec:sim-model}

A similarity measure indicates to what extent two objects (e.g., predicates, clauses, or arguments) share common features, such as semantic meaning, or syntactic structure. 

\begin{definition}\label{sim-measure}
\upshape{
Let $\setObj$ be a set of objects. A \textbf{similarity measure} on $\setObj$ is a function $\mathtt{simX} : \setObj \times \setObj \rightarrow [0,1]$, with $1$ for maximum similarity and $0$ for none.

}
\end{definition}

We propose a multi-level similarity model for CNF formulae in FOL, structured around four levels of abstraction:
Predicate and term similarity ($\simP$), forming the foundation; 
Literal similarity ($\simL$), which depends on $\simP$;
Clause similarity ($\simC$), which depends on $\simL$;
Set-of-clauses similarity ($\simS$), which depends on $\simC$.
All these levels are instantiated in the following sections. 
Each level builds on the one below, enabling structured and explainable computations.
We call any tuple $\bM = \tuple{\simP, \simL, \simC, \simS}$ a \textbf{similarity model}.

Similarity between two CNF arguments is computed separately on premises and claims using our multi-level method, then aggregated with weights reflecting their importance, as in~\cite{AmgoudD18}.

\begin{definition}
\label{def:smArguments}
\upshape{
Let a factor $\eta \in (0,1)$, and a similarity model $\bM = \tuple{\simP, \simL, \simC, \simS}$. 
Let $A,B \in \Arg$, 
we define the \textbf{similarity measure between FOL arguments} from $\Arg \times \Arg$ to $[0,1]$ by $\similarity\mathtt{Arg}^{\bM}_{\eta}(A, B) =$
\begin{center}
$ \eta \times \simS(\supp(A^{\mathtt{c}}), \supp(B^{\mathtt{c}}))~ + $
$(1-\eta) \times \simS(\conc(A^{\mathtt{c}}), \conc(B^{\mathtt{c}})).$ 
\end{center}
}
\end{definition}

\section{Axiomatic Foundations}\label{subsection:background-similarity}

Principles that argument similarity measures should satisfy have been discussed in \cite{AmgoudD18,AmgoudDD19}. 
Some of the principles 
can be stated exactly as in \cite{amgoud2021similarity}, since they do not concern the internal structure of the arguments. 

\begin{postulate} \upshape{
A similarity measure $\simi$ satisfies \textbf{Maximality} iff 
$\forall A \in \Arg$, $\simi(A,A) = 1$.
}
\end{postulate}

\begin{postulate} \upshape{
A similarity measure $\simi$ satisfies \textbf{Substitution} iff 
$\forall A,B,C \in \Arg$, if $\simi(A,B) = 1$ then $\simi(A,C) = \simi(B,C)$.
}
\end{postulate}

\begin{postulate} \upshape{
A similarity measure $\simi$ satisfies \textbf{Symmetry} iff 
$\forall A,B \in \Arg$, $\simi(A,B) = \simi(B,A)$.
}
\end{postulate}

This list includes desirable, but not always required, principles. For instance, some argue that symmetry is not essential for similarity \cite{Tversky77,Jantke94}, and \cite{david2023similarity} explores non-symmetric measures for OS-FOL arguments.

For content-related principles, we propose the following adaptations in the FOL setting using a similarity model.

The principle Syntax Independence states that similarity should ignore syntactic variation and rely only on structure.

\begin{definition}
\label{def:isomorphic_arguments}
\upshape{
Let \( A = \langle \Phi, \alpha \rangle, B = \langle \Phi', \alpha' \rangle \in \Arg \). 
Let \( \pi \) be a bijective renaming function over predicate names and term symbols (constants and variables), extended homomorphically to formulae and sets of formulae. 
We say that \( A \) and \( B \) are \textbf{isomorphic}, if:
$\pi(\Phi) = \Phi'  \text{ and }  \pi(\alpha) = \alpha'$.
}
\end{definition}

\begin{postulate} \upshape{
A similarity measure $\simi$ satisfies \textbf{Syntax Independence} iff for any bijective renaming function $\pi$, 
$\forall A,B,A',B' \in \Arg$, s.t. $A$ (resp. $B$) and $A'$ (resp. $B'$) are isomorphic wrt $\pi$, it holds that $\simi(A,B) = \simi(A',B')$.
}
\end{postulate}

Unlike prior work in propositional logic \cite{AmgoudD18}, where principles were defined over syntactic structures, we reconsider the need for Syntax Independence. In our framework, predicate and constant names carry importance, making similarity intentionally syntax-dependent. 
This choice is motivated by practical needs: in real applications, semantically related symbols (e.g., \texttt{house} and \texttt{apartment}) should be closer than unrelated ones (e.g., \texttt{house} and \texttt{banana}), despite differing syntax.

In the following content-sensitive principles, we focus on compiled arguments which have no irrelevant information, ensuring reliable similarity evaluation. We now reformulate the Minimality and Non-Zero principles, which ensure that overall argument similarity arises directly from predicate and term-level similarity. 
Minimality states that no argument similarity should arise without such base-level similarity.

The first condition excludes the case where both arguments have an empty support and so no intersection to compare, while the second and third conditions ensure that no predicate and term similarity appears in the supports or claims of the arguments, respectively.

\begin{postulate} \upshape{
A similarity measure $\simi$ satisfies \textbf{Minimality} iff 
$\forall A, B \in \Arg$ s.t. $A^\mathtt{c}, B^\mathtt{c} \in \ArgF$, if: 
{\small
\begin{enumerate}
    \item $A^\mathtt{c}$ and $B^\mathtt{c}$ are not trivial,
    \item $\forall C_1 \in \supp(A^\mathtt{c}), \forall (P_1, \vec{a_1}) \in C_1, \forall a_1^i \in \vec{a_1}, 
    \forall C_2 \in \supp(B^\mathtt{c}), \forall (P_2, \vec{a_2}) \in C_2, \forall a_2^j \in \vec{a_2}$, $\simP(P_1,P_2) = 0$ and $\simP(a^i_1,a^j_2) = 0$,
    \item $\forall C_1 \in \conc(A^\mathtt{c}), \forall (P_1, \vec{a_1}) \in C_1, \forall a_1^i \in \vec{a_1}, 
    \forall C_2 \in \conc(B^\mathtt{c}), \forall (P_2, \vec{a_2}) \in C_2, \forall a_2^j \in \vec{a_2}$, $\simP(P_1,P_2) = 0$ and $\simP(a^i_1,a^j_2) = 0$,
\end{enumerate}
}
then $\simi(A,B) = 0$.
}
\end{postulate}

Conversely, Non-Zero \cite{David21} ensures that if similarity exists in the supports (here at the predicate and term level), it must propagate and result in some degree of similarity between arguments. 
\begin{postulate} \upshape{
A similarity measure $\simi$ satisfies \textbf{Non-Zero} iff 
$\forall A, B \in \Arg$ s.t. $A^\mathtt{c}, B^\mathtt{c} \in \ArgF$, if: 
{\small
\begin{enumerate}
    \item $A^\mathtt{c}$ and $B^\mathtt{c}$ are not trivial,
    \item $\exists C_1 \in \supp(A^\mathtt{c})$ and $\exists C_2 \in \supp(B^\mathtt{c})$ s.t. $\exists (P_1, \vec{a_1}) \in C_1$ and $\exists (P_2, \vec{a_2}) \in C_2$ s.t. either $\simP(P_1,P_2) > 0$ or there exists a position $i$, where $a^i_1 \in \vec{a_1}$ and $a^i_2 \in \vec{a_2}$ s.t. $\simP(a^i_1,a^i_2) > 0$, 
\end{enumerate}
}
then $\simi(A,B) > 0$.
}
\end{postulate}

In the following principles, we aim to study the impact of similarity between arguments at the intermediate level between clauses. More specifically, we focus on how the aggregation of clause-level similarity behaves.

We define next Monotony principles to capture how adding new information to one argument affects its similarity to a third. S-Monotony (S for support) generalizes classic Monotony, while C-Monotony (C for claim) extends the Dominance principle \cite{AmgoudD18}.

The first condition, defines that argument $B$ has the same support (resp. claim) as argument $A$, with one additional clause. Condition 2 ensures that the similarity over the remaining component (claim or support) is equal, allowing us to isolate the effect of similarity on a single component of the argument.
We then examine extreme cases: adding a clause with no similarity (score 0) must not increase the similarity between arguments, while adding a clause with full similarity (score 1) must not decrease it.

\begin{postulate} \label{prin-mono} 
\upshape{ 
A similarity measure $\simi$ satisfies \textbf{Monotony} iff $\forall A, B, C \in \Arg$ such that $A^\mathtt{c}, B^\mathtt{c}, C^\mathtt{c} \in \ArgF$, if: 
{\small
\begin{enumerate}
    \item $\supp(B^\mathtt{c}) = \supp(A^\mathtt{c}) \cup \{\beta\}$ (where $\beta$ is a new clause),
    \item $\simS(\conc(A^\mathtt{c}),\conc(C^\mathtt{c})) = \simS(\conc(B^\mathtt{c}),\conc(C^\mathtt{c}))$; and either 
\end{enumerate}
}
$\bullet$~(\textbf{S-Monotony$^0$}) if $\forall \alpha \in \supp(C^\mathtt{c})$, $\simC(\alpha,\beta) = 0$, then $\simi(A,C) \geq \simi(B,C)$; or 

$\bullet$~(\textbf{S-Monotony$^1$}) if $\exists \alpha \in \supp(C^\mathtt{c})$ s.t.  $\simC(\alpha,\beta) = 1$, then $\simi(A,C) \leq \simi(B,C)$.

{\small
\begin{enumerate}
    \item $\conc(B^\mathtt{c}) = \conc(A^\mathtt{c}) \cup \{\beta\}$ (where $\beta$ is a new clause),
    \item $\simS(\supp(A^\mathtt{c}),\supp(C^\mathtt{c})) = \simS(\supp(B^\mathtt{c}),\supp(C^\mathtt{c}))$; and either 
\end{enumerate}
}
$\bullet$~(\textbf{C-Monotony$^0$}) if $\forall \alpha \in \conc(C^\mathtt{c})$, $\simC(\alpha,\beta) = 0$, then $\simi(A,C) \geq \simi(B,C)$; or 

$\bullet$~(\textbf{C-Monotony$^1$}) if $\exists \alpha \in \conc(C^\mathtt{c})$ s.t. $\simC(\alpha,\beta) = 1$, then $\simi(A,C) \leq \simi(B,C)$.
}
 \end{postulate}

We propose also a new family of principles, Reinforcement, that formalize and constrain the impact of distinct pieces of information between two arguments that are otherwise similar. 
Conditions 1 and 2 ensure that $A$ and $B$ differ by exactly one clause, and Condition 3 focuses on a single component (support or claim). We define two versions of the Reinforcement principle. In the first (Reinforcement$^\geq$), if, for every clause in a component of $C$, the differing clause of $A$ is at least as similar to it as that of $B$, then $A$ must be at least as similar to $C$ as $B$ is. In the second (Reinforcement$^>$), if the differing clause of $A$ is strictly more similar to every clause in $C$ than that of $B$, then $A$ must be strictly more similar to $C$ than $B$ is.

\begin{postulate} \label{prin-reinf} 
\upshape{ 
A similarity measure $\simi$ satisfies \textbf{Reinforcement} iff $\forall A, B, C \in \Arg$ such that $A^\mathtt{c}, B^\mathtt{c}, C^\mathtt{c} \in \ArgF$, if: 
{\small
\begin{enumerate}
    \item $\supp(A^\mathtt{c}) \setminus \supp(B^\mathtt{c}) = \{\alpha\}$,
    \item $\supp(B^\mathtt{c}) \setminus \supp(A^\mathtt{c}) = \{\beta\}$,
    \item $\simS(\conc(A^\mathtt{c}),\conc(C^\mathtt{c})) = \simS(\conc(B^\mathtt{c}),\conc(C^\mathtt{c}))$; and either 
\end{enumerate}
}
$\bullet$~(\textbf{S-Reinforcement$^\geq$}) if $\forall \phi \in \supp(C^\mathtt{c})$, $\simC(\alpha,\phi) \geq \simC(\beta,\phi)$, then $\simi(A,C) \geq \simi(B,C)$; or 

$\bullet$~(\textbf{S-Reinforcement$^>$}) if $\forall \alpha \in \supp(C^\mathtt{c})$, $\simC(\alpha,\phi) > \simC(\beta,\phi)$, then $\simi(A,C) > \simi(B,C)$.

{\small
\begin{enumerate}
    \item $\conc(A^\mathtt{c}) \setminus \conc(B^\mathtt{c}) = \{\alpha\}$,
    \item $\conc(B^\mathtt{c}) \setminus \conc(A^\mathtt{c}) = \{\beta\}$,
    \item $\simS(\supp(A^\mathtt{c}),\supp(C^\mathtt{c})) = \simS(\supp(B^\mathtt{c}),\supp(C^\mathtt{c}))$; and either 
\end{enumerate}
}
$\bullet$~(\textbf{C-Reinforcement$^\geq$}) if $\forall \phi \in \conc(C^\mathtt{c})$, $\simC(\alpha,\phi) \geq \simC(\beta,\phi)$, then $\simi(A,C) \geq \simi(B,C)$; or 

$\bullet$~(\textbf{C-Reinforcement$^>$}) if $\forall \alpha \in \conc(C^\mathtt{c})$, $\simC(\alpha,\phi) > \simC(\beta,\phi)$, then $\simi(A,C) > \simi(B,C)$.
}
 \end{postulate}

\section{Context-Sensitive Multi-Level Similarity}\label{section:context-sensitive}

Motivated by practical applications involving natural language argumentation, we propose a three-stage pipeline that bridges symbolic representation, contextual weighting, and structured similarity computation: 

    \textbf{(I) Logical Transformation.} This step maps input texts to CNF-FOL representations, making argument structure explicit and enabling comparison over predicates, terms, and clauses.  
    We use ChatGPT-4o for CNF generation, but automating and evaluating this step is left for future work.

    \textbf{(II) Weight Assignment.} 
    This stage assigns contextual weights to logical components, allowing the model to prioritize informative elements for similarity.  
    For instance, action predicates (e.g., \texttt{Tease}) may matter more than descriptive ones (e.g., \texttt{HasColor}), and agent constants (e.g., \texttt{dog}) more than peripheral details (e.g., \texttt{brown}). 
    Weights are manually set in our examples; automating their learning is left for future work.

    \textbf{(III) Similarity Computation.} The similarity between two CNF arguments is computed in four steps:\\
        \textbf{1. Unweighted Clause Alignment:} 
        Each clause from a premise or claim is matched to its best semantic match in the other argument, using a flat literal similarity measure. \\
        \textbf{2. Weighted Clause Similarity:} Clause-pair scores are refined using the importance weights assigned to predicates, terms, and clauses, producing a context-aware similarity. \\
        \textbf{3. Formulae Similarity:} Clause similarities are aggregated into formulae-level scores using an aggregation.\\
        \textbf{4. Argument Similarity:} Computed from the aggregated formulae scores (see Definition~\ref{def:smArguments}).

This pipeline combines scalable learning-based models with symbolic representations to compute explainable, context- and syntax-aware similarity scores, offering theoretical guarantees and adaptability to natural language.

Like argument equivalence (Definition~\ref{def:bijeq}), based on local matchings between formulae, our similarity model compares argument components by identifying best-matching clause pairs. To apply this idea, we first use flat clause similarity to select matches based on intrinsic similarity. Then, importance weights refine these matchings at two levels: predicate and term weights adjust literal-level similarity, while clause weights influence formulae-level similarity, integrating contextual relevance into the overall similarity.

\begin{example}
    In the next sub-sections, we illustrate our definitions by computing the similarity between two texts from the STSb dataset \cite{cer2017semeval}:\\
T1: \textit{``A dog is teasing a monkey at the zoo"};\\ 
T2: \textit{``A monkey is teasing a dog at the zoo"}. \\
As syntax-sensitive functions benefit from full predicate names (e.g., \texttt{AtLocation}), we use them in practice, but shorten them (e.g., \texttt{AtLoc}) for readability.

\noindent CNF\_T1 = {\small $\{\texttt{AtLoc(monkey, zoo)}, \texttt{AtLoc(dog, zoo)},\\ \texttt{Tease(dog, monkey)}\}$};\\
CNF\_T2 = {\small $\{\texttt{AtLoc(monkey, zoo)}, \texttt{AtLoc(dog, zoo)}, \\ \texttt{Tease(monkey, dog)}\}$}.
\end{example}

\subsection{Step 1: Unweighted Best Clause Matching}
To better capture the semantic contribution of polarity, we incorporate it directly into the predicate symbol. For instance, the literal \(\neg \texttt{P}(a, b)\) is rewritten as \(\texttt{notP}(a, b)\). This transformation allows the similarity function to distinguish between positive and negative predicate forms, rather than treating polarity as a separate feature. It also enables the identification of logical correspondences between concepts with opposite polarity, such as \(\texttt{notEven}(x)\) and \(\texttt{Odd}(x)\), which may convey equivalent meanings.

In the examples, we will use SBERT \cite{reimers2019sentence} (\texttt{all-MiniLM-L6-v2}) to compute semantic similarity between predicates (with polarity) and parameters, denoted by $\simsbert$. While newer models (e.g., \cite{huang2024cosent}) offer slight improvements, SBERT provides strong performance with simpler experimentation.
If syntax independence is desired, we could use $\mathtt{simP}^{eq}(t_1, t_2)$, which returns $1$ if $t_1 = t_2$, and $0$ otherwise.

\begin{definition} \label{def:flatlit}
\upshape{
Let $\lambda \in (0,1)$, the \textbf{flat literal similarity measure} between two literals $(P, \vec{a})$, $(Q, \vec{b})$ is defined as: \\
$\simL^{\text{flat},\lambda}((P, \vec{a}), (Q, \vec{b})) = \frac{1}{2}(\simP(P, Q) + \similarity_{\text{para}}^{\text{}\lambda}(\vec{a}, \vec{b}))$
The parameters similarity \(\similarity_{\text{para}}^{\text{}\lambda}(\vec{a}, \vec{b})\) combines two complementary strategies:\\
$\similarity_{\text{para}}^{\text{}\lambda}(\vec{a}, \vec{b}) =  \lambda \times \simFlatPos(\vec{a}, \vec{b}) + (1 - \lambda) \times \simFlatUnord(\vec{a}, \vec{b})$
where $\lambda$ is a weighting parameter that controls the trade-off between order-sensitive and order-invariant elements.

\begin{itemize}
  \item \textit{Ordered similarity}: 
  Compares terms positionally, assuming meaningful arity and order: \\
      \phantom{---------} $\simFlatPos(\vec{a},\vec{b}) = \dfrac{\sum_{i=1}^{\min(|\vec{a}|, |\vec{b}|)}  \simP(a_i, b_i)}{\min(|\vec{a}|, |\vec{b}|)}$
  
  \item \textit{Best-match (unordered) similarity}: Matches each term to its most similar counterpart in the other vector: \\
  {
      \phantom{---} $\simFlatUnord(\vec{a},\vec{b}) = \frac{\sum\limits_{a_i \in \vec{a}} \simP(a_i, \best{a_i}{\vec{b}}) + \sum\limits_{b_j \in \vec{b}} \simP(b_j, \best{b_j}{\vec{a}})}{|\vec{a}| + |\vec{b}|}$
  }

where \(\best{x}{\vec{y}}\) is the best-matching element of the vector \(\vec{y}\) for a given term \(x\), based on a similarity measure $\simP$: \\
    \phantom{---------------} $\best{x}{\vec{y}} = \mathtt{arg max}_{y \in \vec{y}} \simP(x, y)$
\end{itemize}

\begin{example}[cont.]
    Let us compute the flat literal similarity between  $\texttt{Tease(dog,monkey)}$ and $\texttt{Tease(monkey,dog)}$ which can be represented \linebreak as $L_{1} = (Tease,\langle dog, monkey \rangle)$ and $L_{2} = (Tease,\langle monkey, dog \rangle)$ respectively.
    
    All scores are based on $\simP = \simsbert$, which returns $1$ for identical elements:   $\simsbert(\texttt{Tease}, \texttt{Tease}) = \simsbert(\texttt{monkey}, \texttt{monkey}) = \simsbert(\texttt{dog}, \texttt{dog}) = 1$, while 
    for $\simsbert(\texttt{monkey}, \texttt{dog})$ it returns $0.466$. Then, 
    $\simFlatPos(\tuple{\texttt{dog,monkey}}, \tuple{\texttt{monkey,dog}}) \simeq 0.466$; \\
    $\simFlatUnord(\tuple{\texttt{dog,monkey}}, \tuple{\texttt{monkey,dog}}) = 1$; \\
    $\simFlatPara{0.8}(\langle \texttt{dog,monkey} \rangle, \langle \texttt{monkey,dog} \rangle) \simeq \\ \phantom{----} 0.8 \times 0.466 + 0.2 \times 1 \simeq 0.573$. \\
    Hence, $\simL^{\text{flat},0.8}(L_1, L_2) \simeq \frac{1 + 0.573}{2} \simeq 0.786$.
\end{example}

}
\end{definition}

\textbf{Flat Clause-Level Similarity}.~~
We adapt the fuzzy Tversky measure proposed by~\cite{coletti2019fuzzy}, replacing fixed membership  with a similarity membership function.  
In particular, we use literal-level similarity to define membership when comparing clauses, enabling structured similarity computation.

\begin{definition}\label{def:membershipfunction}
\upshape{
Let $\setObj$ be a set of objects, $x \in \setObj$, $Y \subseteq \setObj$, $\agg$ an aggregation function 
and $\similarity\setObj$ a similarity measure on $\setObj$, the \textbf{membership function} of $x$ in $Y$, $\oplus^{\agg}_{\similarity\setObj} : \setObj \times 2^\setObj \rightarrow [0,1]$ is defined by: 
$\oplus^{\agg}_{\similarity \setObj}(x,Y) = \agg_{y \in Y} (\similarity \setObj(x,y))$.
}
\end{definition}

\begin{definition} \label{def:ftve}
\upshape{
Let two clauses $C_1$ and $C_2$, $\alpha,\beta \in [0,+\infty)$ and $\oplus^{\agg}_{\simL}$ a membership function on literals. The \textbf{fuzzy Tversky} similarity measure is defined as follows:\\
    \phantom{-----} $\text{Tve}^{\alpha,\beta,\oplus^{\agg}_{\simL}}(C_1,C_2) = \dfrac{A}{A + \alpha B + \beta C}, ~\text{where:}$\\
{\small
\phantom{-----} $A = \frac{1}{2} \left( \sum_{x \in C_1} \oplus^{\agg}_{\simL}(x,C_2) + \sum_{y \in C_2} \oplus^{\agg}_{\simL}(y,C_1) \right)$ \\
\phantom{-----} $B = \sum_{x \in C_1} \left( 1 - \oplus^{\agg}_{\simL}(x,C_2) \right)$ \\
\phantom{-----} $C = \sum_{y \in C_2} \left( 1 - \oplus^{\agg}_{\simL}(y,C_1) \right)$ 
}
}
\end{definition}

Classical similarity measures can be seen as instances of the Tversky measure \cite{Tversky77}, as \cite{Jaccard}, i.e. $\jacc$, obtained with $\alpha = \beta = 1$, \cite{Dice}, i.e. $\dice$, with $\alpha = \beta = 0.5$, \cite{Sorensen}, i.e. $\soren$ with $\alpha = \beta = 0.25$,  \cite{Anderberg}, i.e. $\ander$, with $\alpha = \beta = 0.125$, and \cite{Sneath}, i.e. $\sok$, with $\alpha = \beta = 2$. These classical measures can thus be extended to the fuzzy case.

In the following, we denote by $\simC^{\text{flat}}$ the family of flat clause similarity measures using the fuzzy Tversky formulation with $\simL = \simL^{\text{flat},\lambda}$.

\begin{definition} \label{def:lit-flat-similarity}
\upshape{
Let two clauses $C_1$ and $C_2$, $\alpha,\beta \in [0,+\infty)$ and $\oplus^{\agg}_{\simL}$ a membership function on literals.
Let $\simL = \simL^{\text{flat},\lambda}$ be a flat literal similarity measure. The \textbf{flat clause similarity measure} between $C_1$ and $C_2$ is defined as:\\
\phantom{---------} $\simC^{\text{flat}}(C_1,C_2) = \text{Tve}^{\alpha,\beta,\oplus^{\agg}_{\simL}}(C_1,C_2)$
}
\end{definition}

\noindent An instance of this family, using \cite{Dice}, is:\\
   \phantom{---------} $\simC^{\text{flat}}_{\dice}(C_1,C_2) = \text{Tve}^{\frac{1}{2},\frac{1}{2},\oplus^{\max}_{\simL^{\text{flat},0.8}}}(C_1,C_2)$

\begin{example}[cont.]
In this case, where each clause contains only one literal, the flat clause similarity $\simC^{\text{flat}}_{\dice}(C_1, C_2)$ corresponds to the flat literal similarity $\simL^{\text{flat}}_{\dice}(L_{1}, L_{2})$.
\end{example}

\subsection{Step 2: Weighted Clause-Level Similarity}
We now introduce weights to reflect the contextual importance of each predicate and term. 
We assume they are normalized to sum to 1, enabling similarity to be computed as a relative, interpretable aggregation, analogous to an attention distribution. 
In the following, we use $w_p$ to denote both predicate weights and term (parameter) weights.

\begin{definition}
\upshape{
Let $\lambda \in (0,1)$, the \textbf{weighted literal similarity measure} between two literals $(P, \vec{a})$, $(Q, \vec{b})$ is defined as: 
$\simL^{\text{weight},\lambda}((P, \vec{a}), (Q, \vec{b})) =$
\begin{center}
{\large
$\frac{w_p(P) w_p(Q) \times \simP(P, Q) ~ + ~ \similarity_{\text{para}}^{\text{}\lambda}(\vec{a}, \vec{b}) \times (\lambda w_\text{ord} + (1-\lambda)  w_\text{unord})}
{w_p(P) w_p(Q) ~+~ (\lambda w_\text{ord} + (1-\lambda)  w_\text{unord})}$
}
\end{center}

\noindent where 
$w_\text{ord} = \sum_{i=1}^{\min(|\vec{a}|, |\vec{b}|)} w_{p}(a_i) w_{p}(b_i)$ and \linebreak
$w_\text{unord} = \sum_{a_i \in \vec{a}} w_{p}(a_i)\, w_{p}(\best{a_i}{\vec{b}})
     +\!
     \sum_{b_j \in \vec{b}} w_{p}(b_j)\, w_{p}(\best{b_j}{\vec{a}})$.
}
\end{definition}

This formulation emphasizes salient elements by using the contextual importance weights of the predicates, i.e., \( w_p(P) w_p(Q) \), and those of the parameters, \( w_\text{ord} \) and \( w_\text{unord} \), which reflect the computation of \( \simFlatPos \) and \( \simFlatUnord \).

\begin{example}[cont.]
We have $w_p(\texttt{Tease}) = 0.1$, $w_p(\texttt{AtLoc}) = 0.1$, $w_p(\texttt{dog}) = 0.35$, $w_p(\texttt{monkey}) = 0.35$, and $w_p(\texttt{zoo}) = 0.1$, reflecting greater importance assigned to the information \texttt{dog} and \texttt{monkey}.\\
$\similarity_{\text{para}}^{\text{}0.8}(\tuple{\texttt{dog,monkey}}, \tuple{\texttt{monkey,dog}}) \simeq \\ 
\phantom{--.-.}0.8 \times 0.466 + 0.2 \times 1 \simeq 0.573$;\\
$w_\text{ord} \phantom{-} = (0.35 \times 0.35) + (0.35 \times 0.35) = 0.245$;  \\
$w_\text{unord} = 4 \times (0.35 \times 0.35) = 0.49$;\\
$\simL^{\text{weight},0.8}(L_{1}, L_{2}) \simeq \\ \phantom{--.-.} ${\large $\frac{0.01 \times 1 +  0.573 \times (0.8 \times 0.245+ 0.2 \times 0.49)}{0.01 + (0.8 \times 0.245+ 0.2 \times 0.49)}$} $\simeq 0.587$.
\end{example}

\textbf{Weighted Clause-Level Similarity.} ~
We now extend clause similarity by integrating contextual weights over literals.  
Using the same structure as Definition~\ref{def:lit-flat-similarity}, but replacing $\simL$ with $\simL^{\text{weight},\lambda}$, we obtain:\\
\phantom{----------} $\simC^{\text{weight}}(C_1,C_2) = \text{Tve}^{\alpha,\beta,\oplus^{\agg}_{\simL}}(C_1,C_2)$

A concrete instance with Dice weighting ($\alpha = \beta = \frac{1}{2}$) is:
    \phantom{-----} $\simC^{\text{weight}}_{\dice}(C_1,C_2) =\text{Tve}^{\frac{1}{2},\frac{1}{2},\oplus^{\max}_{\simL^{\text{weight},0.8}}}(C_1,C_2)$

\subsection{Step 3: Formulae-Level Similarity}
Let $w_c(C)$ be the contextual weight of clause $C$, with weights normalized over a set of formulae $\Phi$ (e.g., premises or claim), such that $\sum_{C \in \Phi} w_c(C) = 1$.

\begin{definition} \label{def:simSbm}
\upshape{
The \textbf{weighted average best match similarity measure} between two sets of clauses \( \Phi \), \( \Psi \) is defined as: \\
   \phantom{---} $\simS^{\bm}(\Phi, \Psi) = \frac{
\sum\limits_{(C_1, C_2) \in \bm(\Phi, \Psi)}
 w_g(C_1, C_2) \times \simC(C_1, C_2)
}{
\sum\limits_{(C_1, C_2) \in \bm(\Phi, \Psi)}
w_g(C_1, C_2)
}$

where $\bm$ relies on the flat clause similarity $\simC^{\text{flat}}$ to define the best matching clause pairs, i.e., $\bm(\Phi, \Psi) =$\\
{\small
\[
\left\{
\begin{aligned}
(C_1, C_2) \mid\, & \big(C_1 \in \Phi,\, C_2 = \mathtt{argmax}_{C' \in \Psi} \simC^{\text{flat}}(C_1, C')\big) \\
\text{or } &  \big(C_1 \in \Psi,\, C_2 = \mathtt{argmax}_{C' \in \Phi} \simC^{\text{flat}}(C_1, C')\big)
\end{aligned}
\right\}
\]
}

\noindent and where $w_g$ denotes the \emph{contribution weight} of a clause pair, computed using an aggregation function $\agg$:\\
    \phantom{--------------} $w_g(C_1, C_2) = \agg( w_{c}(C_1),w_{c}(C_2)).$
}
\end{definition}

In practice, $\simC$ is instantiated as the weighted clause similarity defined by a model, using a specific $\simP$ and literal similarity $\simL^{\text{weight}, \lambda}$.  
The flat version $\simC^{\text{flat}}$ uses the same $\simP$ and $\lambda$ with $\simL^{\text{flat}, \lambda}$ to ensure consistency.

\begin{example}[cont.] 
We have ${w_c(\texttt{AtLoc(dog, zoo)})}$ \linebreak ${ = w_c(\texttt{AtLoc(monkey, zoo)})} = 0.05$ and  \linebreak {\small ${ w_c(\texttt{Tease(dog, monkey)}) = w_c(\texttt{Tease(monkey, dog)})}$} \linebreak $= 0.9$.  
To compute $\simS^{\bm}_{\dice}(\text{CNF\_T1}, \text{CNF\_T2})$, we match each clause to its best counterpart (via $\simC^{\text{flat}}_{\dice}$), weight the similarity (via $\simC^{\text{weight}}_{\dice}$) by the average clause weight $w_\avg$, and sum.  
The \texttt{Tease} clauses contribute $2 \times 0.587 \times 0.9 \simeq 1.056$;  
the \texttt{AtLoc} clauses contribute $4 \times 1 \times 0.05 = 0.2$.  
Hence:
\[
\simS^{\bm}_{\dice}(\text{CNF\_T1}, \text{CNF\_T2}) = \frac{1.056 + 0.2}{2 \times 0.9 + 4 \times 0.05} \simeq 0.628.
\]
\end{example}

The contribution weight of a clause pair reflects its importance to the overall formulae similarity, independently of its similarity score. In Figure~\ref{fig:exTease}, these contributions are normalized as proportions (denoted $\|w_g\|$), summing to 1 across all best-matching clause pairs. Even perfect matches contribute little if their weight is low ($2 \times \|w_\avg\| = 0.05$), while moderately similar clauses like \texttt{Tease}, with high weight ($2 \times \|w_\avg\| = 0.9$), dominate the final score. The factor 2 reflects that each clause is matched in both directions. 

\begin{figure}[t]
\centering
\includegraphics[width=1\linewidth, height=0.5\linewidth]{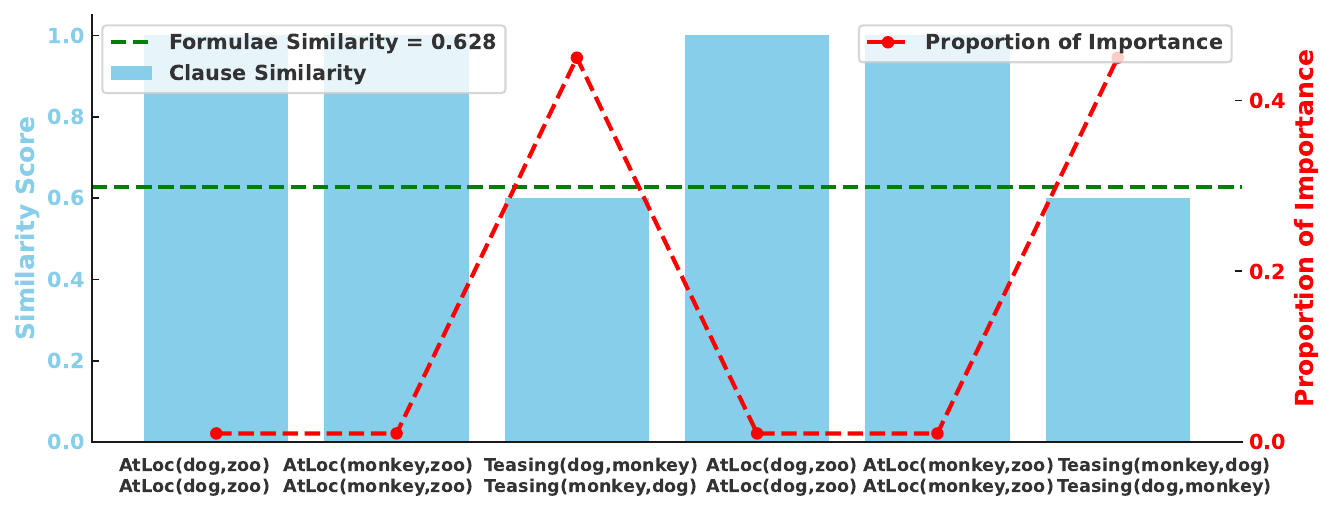}
\caption{
Histogram explaining the formulae similarity for [\textit{CNF\_T1}, \textit{CNF\_T2}] (green line), based on weighted clause similarities (blue) and their proportion of importance (red).}
\label{fig:exTease}
\end{figure}

We compare our similarity model to state-of-the-art baselines on the previously introduced pair T1 vs. T2, which involves a subtle agent/patient role reversal. 
SBERT produces a high similarity score of 0.985 and  provides no explanation. 
S3BERT~\cite{opitz2022sbert} returns 0.990 and provides interpretability via Abstract Meaning Representation~\cite{banarescu2013abstract} similarity, e.g., Semantic Role Labeling (SRL) similarity = 0.96. However, such explanations often lack clarity for non-experts. 
ChatGPT-4o, prompted in zero-shot, returns a lower score of 0.400 and the following textual justification:
\emph{``The two sentences share the same entities and location, but the reversed agent and patient roles change the core meaning, reducing their similarity."}
While insightful, this explanation lacks structure and does not quantify the relative impact of each semantic component. 

Our approach yields a similarity score of 0.628 (computed in approximately 0.3 seconds on an M2 Pro chip with 32 GB of RAM), closely aligned with the average human score of 0.630 \cite{cer2017semeval}, thanks to appropriate contextual weighting. Unlike prior methods, it provides a symbolic decomposition with contextual weights, enabling transparent and modular explanations. For instance, it detects perfect matches on location clauses such as \texttt{AtLoc(dog,zoo)} and \texttt{AtLoc(monkey,zoo)} (each weighted at 5\%), while assigning lower similarity (0.587) to \texttt{Tease(dog,monkey)} vs. \texttt{Tease(monkey,dog)}, which is given a dominant weight (90\%) due to its contextual importance. The final score reflects this imbalance.

We stress that these results are based on a limited number of examples and do not constitute a formal evaluation of predictive performance. However, they illustrate the promise of our approach as a novel, interpretable framework for similarity assessment, grounded in logical structure and enriched with contextual weighting, offering a compelling alternative to opaque embedding-based or purely textual methods.

\subsection{Step 4: Arguments Similarity}
Let us consider three natural language arguments:\\
    $\mathbf{(A_1)}$ $\langle\{$A dog is teasing a monkey at the zoo, and teasing usually indicates dominance or playful behavior$\}$, so the dog is likely exhibiting dominant or playful behavior.$\rangle$\\
    $\mathbf{(A_2)}$ $\langle\{$A dog and a monkey are teasing each other at the zoo. Mutual teasing suggests a playful relationship.$\}$ So the dog and monkey have a playful interaction.$\rangle$\\
    $\mathbf{(A_3)}$ $\langle\{$A dog and a monkey are teasing each other at the zoo. Mutual teasing reflects a social conflict.$\}$ So the dog and the monkey may be engaged in a social conflict.$\rangle$

Using $\simS^{\bm}_{\dice}$, we compute the similarity between premises and claims, and aggregate them into overall argument similarity as in Definition~\ref{def:smArguments}.  
For $A_1$ and $A_2$, premise similarity is $0.795$ and claim similarity is $0.757$, yielding:
$\mathtt{simArg}(A_1, A_2) = 0.795\,\eta + 0.757\,(1-\eta)$. 
For $A_2$ and $A_3$, we have $0.913$ and $0.653$, giving:
$\mathtt{simArg}(A_2, A_3) = 0.913\,\eta + 0.653\,(1-\eta)$.
The crossover occurs at $\eta = 0.468$: below this threshold, $A_2$ is closer to $A_3$; above it, $A_2$ is closer to $A_1$.
This example illustrates the challenge of comparing arguments and how our method adapts flexibly to different contexts.  
For instance, in debate scenarios with multiple attacks on the same argument, greater importance may be given to the similarity between the attackers' claims, as they often highlight the main disagreement.
While in enthymeme decoding, where an incomplete argument is compared to its reconstruction, premises and claim may carry equal weight.

\section{Axiomatic Evaluation}\label{section:axiomatic-evaluation}
In this section, we formally analyze how our similarity models align with the proposed principles.  
We introduce two families of argument similarity measures and evaluate their satisfaction.  
We also study a key case: when the similarity score equals 1, indicating perfect similarity.  

The next theorems specify conditions on each component of a similarity model $\bM = \tuple{\simP, \simL, \simC, \simS}$ 
to \linebreak ensure that their combination satisfies the principles.

\begin{theorem}\label{theo:wbMax}
\upshape{
$\similarity\mathtt{Arg}^{\bM}_{\eta}$  satisfies \textbf{Maximality} if:\\
    1.\phantom{-}$\simP(t,t) = 1$.\\
    2.\phantom{-}$\simL((P, \vec{a}), (Q, \vec{b})) = 1$ if $\simP(P,Q) = 1$, $\card{\vec{a}} = \card{\vec{b}}$, \linebreak
    \phantom{2.-}and $\forall a_i \in \vec{a}, \forall b_i \in \vec{b}$, $\simP(a_i,b_i) = 1$.\\
    3.\phantom{-}$\simC(C_1,C_2) = 1$ if $\forall l_1 \in C_1,\exists l_2 \in C_2$ s.t. $\simL(l_1,l_2)$ \linebreak
    \phantom{3.-}$ = 1$ and $\forall l_2 \in C_2,\exists l_1 \in C_1$ s.t. $\simL(l_2,l_1) = 1$.\\
    4.\phantom{-}$\simS(\Phi_1,\Phi_2) = 1$ if $\forall C_1 \in \Phi_1,\exists C_2 \in \Phi_2$ s.t. \linebreak
    \phantom{4.-}$\simC(C_1, C_2) = 1$ and $\forall C_2 \in \Phi_2,\exists C_1 \in \Phi_1$ s.t. \linebreak
    \phantom{4.-}$\simC(C_2, C_1) = 1$.
}
\end{theorem}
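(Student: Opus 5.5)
The plan is a bottom-up propagation of perfect self-similarity through the four levels, starting from the definition of $\similarity\mathtt{Arg}^{\bM}_{\eta}$ (Definition~\ref{def:smArguments}). Fix $A \in \Arg$ and let $A^{\mathtt{c}}$ be its compiled argument. Since $\CNF$ is a fixed deterministic procedure, $A^{\mathtt{c}}$ is uniquely determined. Hence both arguments of every $\simS$ call in $\similarity\mathtt{Arg}^{\bM}_{\eta}(A,A)$ are literally the same sets: $\supp(A^{\mathtt{c}})$ against itself and $\conc(A^{\mathtt{c}})$ against itself. It therefore suffices to show $\simS(\Phi,\Phi)=1$ for every finite set of clauses $\Phi$. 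We would then get $\similarity\mathtt{Arg}^{\bM}_{\eta}(A,A) = \eta\cdot 1 + (1-\eta)\cdot 1 = 1$.

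\textbf{Literals.} Let $l=(P,\vec a)$ be any literal; polarity is encoded in the predicate symbol, so this covers negative literals too. Condition~1 gives $\simP(P,P)=1$ and $\simP(a_i,a_i)=1$ for every term $a_i$ of $\vec a$. Since $\card{\vec a}=\card{\vec a}$, condition~2 applied to the pair $(l,l)$ yields $\simL(l,l)=1$. One point needs care here: condition~2 is stated as ``$\forall a_i \in \vec a,\forall b_i\in\vec b$''. We read it positionally, as intended, so that it only requires $\simP(a_i,a_i)=1$ at matching positions. Under that reading, condition~1 is exactly what is needed.

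\textbf{Clauses and sets.} For any clause $C$, each $l_1\in C$ has the witness $l_2=l_1\in C$ with $\simL(l_1,l_1)=1$, and symmetrically for the other direction. So condition~3 gives $\simC(C,C)=1$. Likewise, for any set $\Phi$, each $C_1\in\Phi$ has the witness $C_2=C_1$ with $\simC(C_1,C_1)=1$, in both directions, so condition~4 gives $\simS(\Phi,\Phi)=1$. This holds in particular for $\Phi=\supp(A^{\mathtt{c}})$ and $\Phi=\conc(A^{\mathtt{c}})$.

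\textbf{Expected obstacles.} The main thing to watch is the degenerate case of an empty support, as with a trivial argument where $\supp(A^{\mathtt{c}})=\emptyset$. There condition~4 holds vacuously and still gives $\simS(\emptyset,\emptyset)=1$, so the argument goes through as long as we rely only on the stated conditions and not on a concrete formula such as $\simS^{\bm}$, whose normalizing denominator would be $0$. The other subtlety is that variable renaming in $\uc$ must be deterministic, so that the two copies of $A^{\mathtt{c}}$ coincide syntactically. We will assume this, in line with the convention that a single method $\CNF$ is fixed.
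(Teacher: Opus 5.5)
Your proof is correct and takes the same route as the paper: conditions 1--2 give $\simL(l,l)=1$, condition 3 with witness $l_2=l_1$ gives $\simC(C,C)=1$, condition 4 with witness $C_2=C_1$ gives $\simS(\Phi,\Phi)=1$, and Definition~\ref{def:smArguments} then yields $\eta + (1-\eta) = 1$. Your remarks on reading condition 2 positionally, on the empty-support case being covered vacuously by condition 4, and on deterministic compilation are sound refinements that the paper leaves implicit.
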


\begin{theorem}\label{theo:wbSym}
\upshape{
$\similarity\mathtt{Arg}^{\bM}_{\eta}$  satisfies \textbf{Symmetry} if $\simP, \simL, \simC,$ and $\simS$ are symmetric.
}
\end{theorem}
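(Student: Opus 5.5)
The proof works directly from Definition~\ref{def:smArguments}. Fix $A,B \in \Arg$. By definition,
\[
\similarity\mathtt{Arg}^{\bM}_{\eta}(A,B) = \eta \times \simS(\supp(A^{\mathtt{c}}),\supp(B^{\mathtt{c}})) + (1-\eta)\times \simS(\conc(A^{\mathtt{c}}),\conc(B^{\mathtt{c}})),
\]
and the same expression holds for $(B,A)$ with the arguments of each $\simS$ swapped. Since $\simS$ is assumed symmetric, each summand is unchanged by the swap. The two weighted sums therefore coincide, which gives $\similarity\mathtt{Arg}^{\bM}_{\eta}(A,B) = \similarity\mathtt{Arg}^{\bM}_{\eta}(B,A)$. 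Only the symmetry of $\simS$ is actually used here; the hypotheses on $\simP$, $\simL$ and $\simC$ are needed only when $\simS$ is itself assembled from those lower levels.

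Because of that, I will also check that symmetry propagates through the concrete models of Section~\ref{section:context-sensitive}, working bottom-up.

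\textbf{Literal level.} If $\simP$ is symmetric, then $\simFlatPos$ is symmetric, since it is a positionwise sum over the common prefix. $\simFlatUnord$ is symmetric because its numerator is the sum of two best-match sums that simply exchange roles. So $\simL^{\text{flat},\lambda}$ is symmetric. The same holds for $\simL^{\text{weight},\lambda}$: the factor $w_p(P)w_p(Q)$, $w_\text{ord}$ and $w_\text{unord}$ are all invariant under swapping the two literals.

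\textbf{Clause level.} For $\text{Tve}^{\alpha,\beta,\oplus}$, swapping $C_1$ and $C_2$ leaves $A$ unchanged and exchanges $B$ with $C$. Hence symmetry requires $\alpha=\beta$, as in the Jaccard, Dice, Sørensen, Anderberg and Sneath instances. This is the point I expect to be the main subtlety. The statement's hypothesis ``$\simC$ symmetric'' is what licenses this step, and I would remark that it restricts the fuzzy Tversky family to $\alpha=\beta$.

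\textbf{Set level.} For $\simS^{\bm}$, the set $\bm(\Phi,\Psi)$ is built from both directions, so it is the same collection of pairs as $\bm(\Psi,\Phi)$ up to the orientation of each pair. The summands $w_g(C_1,C_2)\times\simC(C_1,C_2)$ are orientation-invariant provided the aggregation $\agg$ is symmetric (e.g., average) and $\simC$ is symmetric. One care point is tie-breaking in $\mathtt{argmax}$: it must be fixed deterministically and independently of argument order, which I would state explicitly. With these observations, symmetry of $\simS$ and hence of the argument measure follows.
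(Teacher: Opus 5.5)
Your proposal is correct and follows the paper's proof. Both read Symmetry directly off Definition~\ref{def:smArguments} using only the symmetry of $\simS$, and both note that the hypotheses on $\simP$, $\simL$ and $\simC$ matter only because $\simS$ is built from them. Your bottom-up check of the concrete models mirrors the Symmetry item in the paper's proof of Theorem~\ref{theo:satisfaction-principles-measures}. One small simplification there: by definition $\bm(\Phi,\Psi)$ and $\bm(\Psi,\Phi)$ are literally the same collection of pairs, so no orientation argument is needed at that step.
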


\begin{theorem}\label{theo:wbSub}
\upshape{
$\similarity\mathtt{Arg}^{\bM}_{\eta}$  satisfies \textbf{Substitution} if $\simS(\Phi_1, \Phi_2) = 1$ then $\simS(\Phi_1,\Phi_3) = \simS(\Phi_2,\Phi_3)$.
}
\end{theorem}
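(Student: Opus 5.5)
The plan is to unfold Definition~\ref{def:smArguments} and show that each of its two summands is preserved under substitution. Fix $A,B,C\in\Arg$ with $\similarity\mathtt{Arg}^{\bM}_{\eta}(A,B)=1$. Write $s_S=\simS(\supp(A^{\mathtt{c}}),\supp(B^{\mathtt{c}}))$ and $s_C=\simS(\conc(A^{\mathtt{c}}),\conc(B^{\mathtt{c}}))$. Then
\[
\eta\, s_S+(1-\eta)\, s_C = 1 .
\]

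The first step extracts $s_S=s_C=1$ from this equation. Since $\simS$ is a similarity measure (Definition~\ref{sim-measure}), both values lie in $[0,1]$. As $\eta\in(0,1)$, the left-hand side is a strict convex combination of $s_S$ and $s_C$. If either value were $<1$, the combination would be $<1$. Hence $s_S=s_C=1$. This is the only point where we use that $\eta$ lies in the open interval rather than in $[0,1]$: with $\eta=0$ or $\eta=1$, one component would be unconstrained and the argument would fail.

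The second step applies the hypothesis of the theorem twice. First take $\Phi_1=\supp(A^{\mathtt{c}})$, $\Phi_2=\supp(B^{\mathtt{c}})$, $\Phi_3=\supp(C^{\mathtt{c}})$; this gives $\simS(\supp(A^{\mathtt{c}}),\supp(C^{\mathtt{c}}))=\simS(\supp(B^{\mathtt{c}}),\supp(C^{\mathtt{c}}))$. Then take the corresponding claims; this gives $\simS(\conc(A^{\mathtt{c}}),\conc(C^{\mathtt{c}}))=\simS(\conc(B^{\mathtt{c}}),\conc(C^{\mathtt{c}}))$. Substituting both equalities into Definition~\ref{def:smArguments} yields $\similarity\mathtt{Arg}^{\bM}_{\eta}(A,C)=\similarity\mathtt{Arg}^{\bM}_{\eta}(B,C)$, which is exactly Substitution.

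The only delicate point, beyond the use of $\eta\in(0,1)$ noted above, is domain hygiene. Compiled supports and claims are sets of CNF clauses in $\cF$, so the hypothesis on $\simS$ must be understood as quantifying over all finite sets of clauses, possibly empty (e.g.\ for trivial arguments). We therefore read the condition in exactly that generality, so that it applies to every triple of compiled components and no case split is needed.
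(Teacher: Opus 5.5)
Your proof is correct and follows the paper's argument. You deduce from $\similarity\mathtt{Arg}^{\bM}_{\eta}(A,B)=1$ that both the support-level and claim-level $\simS$ values equal $1$, then apply the hypothesis on $\simS$ separately to supports and to claims and recombine via Definition~\ref{def:smArguments}; the only difference is that you justify the first step explicitly (a strict convex combination with $\eta\in(0,1)$ of values in $[0,1]$), where the paper leaves it implicit.
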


\begin{theorem}\label{theo:wbSI}
\upshape{
$\similarity\mathtt{Arg}^{\bM}_{\eta}$  satisfies \textbf{Syntax Independence} if $\simP(t_1,t_1) = 1$ and $\simP(t_1,t_2) = 0$.
}
\end{theorem}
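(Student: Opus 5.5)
The hypothesis has to be read as saying that $\simP$ is the equality measure $\mathtt{simP}^{eq}$: $\simP(t_1,t_1)=1$ for every symbol, and $\simP(t_1,t_2)=0$ whenever $t_1\neq t_2$. The key observation is that such a $\simP$ is invariant under any bijective renaming $\pi$. Since $\pi$ is injective, $t_1=t_2$ iff $\pi(t_1)=\pi(t_2)$, so $\simP(\pi(t_1),\pi(t_2))=\simP(t_1,t_2)$ for all predicate and term symbols.

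I then propagate this invariance upward through the levels of $\bM$. I assume throughout that weights, when used, are attached to symbols and clauses in a way that is transported by $\pi$, i.e.\ $w_p(\pi(t))=w_p(t)$ and $w_c(\pi(C))=w_c(C)$. Without this, syntax independence can fail trivially, so I state it explicitly.

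\begin{itemize}
\item \emph{Compilation.} First I argue that compilation commutes with renaming: $\uc(\pi(\Phi))=\pi(\uc(\Phi))$, up to the naming of fresh variables and Skolem symbols. The CNF steps (implication elimination, negation pushing, Skolemization, distribution) are purely structural. The only subtlety is the deterministic ordering of literals and clauses, which may depend on symbol names. However, all higher-level measures treat clauses as sets and supports as sets, so ordering is irrelevant. Fresh Skolem and variable names can be absorbed into an extension of $\pi$, which is still bijective and still preserves $\mathtt{simP}^{eq}$. Hence $\supp(A'^{\mathtt{c}})=\pi(\supp(A^{\mathtt{c}}))$, and likewise for claims.
\item \emph{Literals.} The polarity-encoding rewrite $\neg P\mapsto \texttt{notP}$ is also extended to $\pi$ by setting $\pi(\texttt{notP})=\texttt{not}\pi(P)$. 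This keeps it injective, because $P$ and $\texttt{notP}$ are distinct and remain so after renaming.
\item \emph{Clauses and sets.} By induction over the levels: $\simL^{\text{flat},\lambda}$ and $\simL^{\text{weight},\lambda}$ are built only from values $\simP(\cdot,\cdot)$, arities, positions, argmax best matches, and weights. All of these are preserved by $\pi$; best matches map to best matches, possibly with ties, but the attained maximal values coincide. Therefore $\simL(\pi(l_1),\pi(l_2))=\simL(l_1,l_2)$. The fuzzy Tversky measure depends only on membership values $\oplus^{\agg}_{\simL}$, so $\simC$ is invariant. The $\bm$ construction and $\simS^{\bm}$ depend only on $\simC^{\text{flat}}$, $\simC$ and $w_c$, so $\simS(\pi\Phi,\pi\Psi)=\simS(\Phi,\Psi)$.
\end{itemize}

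Finally, Definition~\ref{def:smArguments} is an $\eta$-combination of two $\simS$ values, so $\similarity\mathtt{Arg}^{\bM}_{\eta}(A,B)=\similarity\mathtt{Arg}^{\bM}_{\eta}(A',B')$.

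The main obstacle is the compilation step, not the similarity algebra. Canonical ordering and fresh Skolem naming are not literally equivariant under $\pi$. I expect to handle this by working modulo set-equality and an extension of $\pi$ to the fresh symbols, and by noting that $\mathtt{simP}^{eq}$ cannot distinguish a consistent renaming. A secondary subtlety is argmax tie-breaking in $\bm$. If ties are broken by a name-dependent order, the chosen pairs may differ, but their $\simC^{\text{flat}}$ values do not. I would need either to assume tie-breaking is renaming-invariant, or to note that under $\mathtt{simP}^{eq}$ tied flat matches give equal weighted scores. The latter is not automatic, so the cleanest route is to state the invariance of tie-breaking as part of the assumption on the chosen $\CNF$/$\bm$ procedure.
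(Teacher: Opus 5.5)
Your proof has the same skeleton as the paper's. $\simP$ is invariant under any bijective renaming $\pi$ because it is an equality test. This invariance is pushed up through $\simL$, $\simC$ and $\simS$, and the $\eta$-combination of Definition~\ref{def:smArguments} finishes the argument.

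The difference is how the middle layers are handled. The paper stays abstract: it posits that $\simL$, $\simC$, $\simS$ are some functions $f$, $g$, $h$ of the similarity values one level down. That is what lets it claim the result for an arbitrary model $\bM$. Some such layered-dependence assumption is unavoidable, since an $\simL$ that reads symbol names directly would violate the conclusion.

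You instead check the invariance for the concrete measures $\simL^{\text{flat},\lambda}$, $\simL^{\text{weight},\lambda}$, fuzzy Tversky and $\simS^{\bm}$. This covers less than the stated generality. In exchange, it surfaces what the paper's $f,g,h$ silently absorb: weights transported by $\pi$, compilation commuting with $\pi$ (the paper simply writes $\supp(A^{\mathtt{c}'})$), and renaming-invariant tie-breaking.

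Two refinements to your version:
\begin{itemize}
\item Tie-breaking already matters at the literal level, not only in $\bm$. The term $w_\text{unord}$ uses $w_p(\best{a_i}{\vec{b}})$, i.e.\ the weight of the selected best match rather than its $\simP$ value. Under $\simP^{eq}$, every $a_i \notin \vec{b}$ ties at $0$ with all of $\vec{b}$, so ``attained maximal values coincide'' is not enough for $\simL^{\text{weight},\lambda}$. Positional tie-breaking, which $\pi$ preserves, fixes this.
\item The extension of $\pi$ to fresh Skolem and variable names must be a single bijection serving $A$ and $B$ simultaneously, because $\simP^{eq}$ compares fresh symbols across the two arguments. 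If fresh names are numbered along a name-dependent ordering, no such joint extension need exist. So this is an equivariance assumption on the chosen $\CNF$ procedure, not a consequence of its steps being structural.
\end{itemize}
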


\begin{theorem}\label{theo:wbMin}
\upshape{
$\similarity\mathtt{Arg}^{\bM}_{\eta}$  satisfies \textbf{Minimality} if:\\
    1.\phantom{-}$\simL((P, \vec{a}), (Q, \vec{b})) = 0$ if $\simP(P,Q) = 0$ and \linebreak
    \phantom{1.-}$\forall a_i \in \vec{a},\forall b_j \in \vec{b}$, $\simP(a_i,b_j) = 0$.\\
    2.\phantom{-}{\midsize $\simC(C_1,C_2) = 0$ if $\forall l_1 \in C_1,\forall l_2 \in C_2$, $\simL(l_1,l_2) = 0$.}\\
    3.\phantom{-}$\simS(\Phi_1,\Phi_2) = 0$ if $\forall C_1 \in \Phi_1,\forall C_2 \in \Phi_2$, \linebreak
    \phantom{3.-}$\simC(C_1, C_2) = 0$.
}
\end{theorem}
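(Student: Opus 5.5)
We prove Theorem~\ref{theo:wbMin} bottom-up, propagating the zero similarity from terms and predicates through literals and clauses to sets of clauses. Then we combine the support and claim components using Definition~\ref{def:smArguments}.

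Fix $A,B \in \Arg$ with $A^\mathtt{c},B^\mathtt{c} \in \ArgF$ satisfying the three hypotheses of the Minimality principle. We first treat the supports. Take any clauses $C_1 \in \supp(A^\mathtt{c})$ and $C_2 \in \supp(B^\mathtt{c})$, and any literals $l_1=(P_1,\vec{a_1}) \in C_1$ and $l_2=(P_2,\vec{a_2}) \in C_2$. Hypothesis~2 of the principle quantifies universally over all such literals and all their terms. It therefore gives $\simP(P_1,P_2)=0$ and $\simP(a_1^i,a_2^j)=0$ for every pair of positions $i,j$, not only matching positions. This is exactly the premise of condition~1 of the theorem, so $\simL(l_1,l_2)=0$. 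Since $l_1,l_2$ were arbitrary, condition~2 yields $\simC(C_1,C_2)=0$. Since $C_1,C_2$ were arbitrary, condition~3 yields $\simS(\supp(A^\mathtt{c}),\supp(B^\mathtt{c}))=0$. The same chain, applied with hypothesis~3 of the principle, gives $\simS(\conc(A^\mathtt{c}),\conc(B^\mathtt{c}))=0$.

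We conclude with $\similarity\mathtt{Arg}^{\bM}_{\eta}(A,B)=\eta\cdot 0+(1-\eta)\cdot 0=0$.

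The main obstacle is degenerate cases. Condition~3 is stated for arbitrary $\Phi_1,\Phi_2$, so it applies vacuously when a support is empty. Condition~1 likewise applies vacuously to literals with no arguments. Hypothesis~1 of the principle (non-triviality) excludes the case where both compiled arguments are trivial. Even if one support is empty, the universal premise of condition~3 holds vacuously, and the theorem's hypothesis then forces $\simS=0$; so the argument goes through unchanged. We would therefore only remark that the conditions are read literally, including vacuous instances, and that no property of the specific measures of Section~\ref{section:context-sensitive} is needed.
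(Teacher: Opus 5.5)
Your proof is correct and follows the same bottom-up propagation as the paper. The principle's universally quantified hypotheses give the premise of condition~1, hence $\simL = 0$ on all literal pairs; conditions~2 and~3 then give $\simC = 0$ and $\simS = 0$ on both supports and claims, and Definition~\ref{def:smArguments} yields $0$. The one minor difference is in the degenerate case: the paper uses non-triviality to assert that the supports are non-empty, while you note that condition~3 already covers empty sets vacuously, so that step is not actually needed.
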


\begin{theorem}\label{theo:wbNZ}
\upshape{
$\similarity\mathtt{Arg}^{\bM}_{\eta}$  satisfies \textbf{Non-Zero} if:\\
    1.\phantom{-}{\midsize$\simL((P, \vec{a}), (Q, \vec{b})) > 0$, if $\simP(P,Q) > 0$ or, there exists \linebreak
    \phantom{1.-}a position $i$ s.t. $a_i \in \vec{a}$, $b_i \in \vec{b}$,  $\simP(a_i,b_i) > 0$.}\\
    2.\phantom{-}$\simC(C_1, C_2) > 0$ if $\exists l_1 \in C_1$, $l_2 \in C_2$ s.t. \linebreak
    \phantom{2.-}$\simL(l_1, l_2) > 0$. \\
    3.\phantom{-}$\simS(\Phi_1,\Phi_2) > 0$ if $\exists C_1 \in \Phi_1$ and $\exists C_2 \in \Phi_2$ s.t. \linebreak
    \phantom{3.-}$\simC(C_1, C_2) > 0$. 
}
\end{theorem}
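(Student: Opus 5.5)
The argument propagates positivity bottom-up through the four levels, mirroring the chain $\simP \to \simL \to \simC \to \simS$. Fix $A, B \in \Arg$ with $A^\mathtt{c}, B^\mathtt{c} \in \ArgF$ satisfying the two hypotheses of Non-Zero. By condition 2 of the principle, there are clauses $C_1 \in \supp(A^\mathtt{c})$ and $C_2 \in \supp(B^\mathtt{c})$, and literals $l_1 = (P_1,\vec{a_1}) \in C_1$ and $l_2 = (P_2,\vec{a_2}) \in C_2$, such that either $\simP(P_1,P_2) > 0$ or $\simP(a^i_1,a^i_2) > 0$ for some common position $i$. This disjunction is exactly the premise of condition 1 of the theorem, so $\simL(l_1,l_2) > 0$.

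Next, since $l_1 \in C_1$, $l_2 \in C_2$ and $\simL(l_1,l_2) > 0$, condition 2 of the theorem yields $\simC(C_1,C_2) > 0$. Because $C_1 \in \supp(A^\mathtt{c})$ and $C_2 \in \supp(B^\mathtt{c})$, condition 3 applied with $\Phi_1 = \supp(A^\mathtt{c})$ and $\Phi_2 = \supp(B^\mathtt{c})$ gives $\simS(\supp(A^\mathtt{c}),\supp(B^\mathtt{c})) > 0$.

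Finally, I unfold Definition~\ref{def:smArguments}:
\[
\similarity\mathtt{Arg}^{\bM}_{\eta}(A,B) = \eta \times \simS(\supp(A^{\mathtt{c}}), \supp(B^{\mathtt{c}})) + (1-\eta) \times \simS(\conc(A^{\mathtt{c}}), \conc(B^{\mathtt{c}})).
\]
The first term is strictly positive because $\eta \in (0,1)$ is strictly positive and the support similarity is strictly positive. The second term is nonnegative, since $\simS$ takes values in $[0,1]$ by Definition~\ref{sim-measure} and $1-\eta > 0$. Hence the sum is strictly positive, which is the conclusion of Non-Zero.

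There is no real obstacle here; the only care needed is bookkeeping. The existential witnesses must be carried unchanged from level to level: the same literals feed condition 1, then the same clauses feed condition 2, then the same pair of support sets feeds condition 3. One must also check that the ``same position $i$'' clause in the principle matches the positional form of condition 1 of the theorem, which it does verbatim. Non-triviality (condition 1 of the principle) is not actually needed beyond guaranteeing that the supports are nonempty, and that already follows from the existence of $C_1$ and $C_2$.
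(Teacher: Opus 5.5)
Your proof is correct and follows the same bottom-up chain as the paper. The Non-Zero witnesses feed condition 1 to give $\simL > 0$, conditions 2 and 3 then give $\simC(C_1,C_2) > 0$ and $\simS(\supp(A^\mathtt{c}),\supp(B^\mathtt{c})) > 0$, and Definition~\ref{def:smArguments} concludes; you are slightly more careful than the paper, which leaves the nonnegativity of the $(1-\eta)$ claim term implicit and invokes an assumption $w_c(C_1), w_c(C_2) > 0$ at the $\simS$ step that the abstract condition 3 already makes unnecessary.
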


\begin{theorem}\label{theo:wbMon}
\upshape{
$\similarity\mathtt{Arg}^{\bM}_{\eta}$  satisfies the principles of \linebreak \textbf{S-Monotony$^0$}, \textbf{S-Monotony$^1$}, \textbf{C-Monotony$^0$}, and \linebreak \textbf{C-Monotony$^1$} if 
     $\simS(\Phi \cup \{C_0\},\Psi) \leq \simS(\Phi,\Psi) \leq \simS(\Phi \cup \{C_1\},\Psi)$ when $\max_{C \in \Psi} \simC(C,C_0) = 0$ and $\max_{C \in \Psi}  \simC(C,C_1) = 1$.
}
\end{theorem}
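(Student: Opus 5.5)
The argument works directly from Definition~\ref{def:smArguments}. For any $A,C$ we have
$\similarity\mathtt{Arg}^{\bM}_{\eta}(A,C) = \eta\, \simS(\supp(A^{\mathtt{c}}),\supp(C^{\mathtt{c}})) + (1-\eta)\, \simS(\conc(A^{\mathtt{c}}),\conc(C^{\mathtt{c}}))$.
The same expression holds for $B$. Condition~2 of Monotony says that one of the two components (claims for S-Monotony, supports for C-Monotony) is equal for the pairs $(A,C)$ and $(B,C)$. So that term cancels in the difference $\similarity\mathtt{Arg}^{\bM}_{\eta}(A,C) - \similarity\mathtt{Arg}^{\bM}_{\eta}(B,C)$, which reduces to $\eta$ (respectively $1-\eta$) times the difference of the remaining $\simS$ values. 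Because $\eta \in (0,1)$, both factors are strictly positive. The sign of the argument-level difference is therefore exactly the sign of the $\simS$-level difference on the varying component.

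For S-Monotony$^0$, I set $\Phi = \supp(A^{\mathtt{c}})$, $\Psi = \supp(C^{\mathtt{c}})$ and $C_0 = \beta$, so that by Condition~1 $\supp(B^{\mathtt{c}}) = \Phi \cup \{C_0\}$. The hypothesis $\forall \alpha \in \supp(C^{\mathtt{c}}),\ \simC(\alpha,\beta) = 0$ gives $\max_{C \in \Psi}\simC(C,C_0) = 0$, since $\simC$ takes values in $[0,1]$. The left inequality of the assumed condition then yields $\simS(\supp(B^{\mathtt{c}}),\Psi) \le \simS(\supp(A^{\mathtt{c}}),\Psi)$, and hence $\similarity\mathtt{Arg}^{\bM}_{\eta}(A,C) \ge \similarity\mathtt{Arg}^{\bM}_{\eta}(B,C)$.

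For S-Monotony$^1$, I take $C_1 = \beta$. The hypothesis that some $\alpha \in \supp(C^{\mathtt{c}})$ satisfies $\simC(\alpha,\beta)=1$ means the maximum equals $1$, since values are bounded by $1$. The right inequality then gives $\simS(\Phi,\Psi) \le \simS(\Phi\cup\{\beta\},\Psi)$, and so $\similarity\mathtt{Arg}^{\bM}_{\eta}(A,C) \le \similarity\mathtt{Arg}^{\bM}_{\eta}(B,C)$. The C-Monotony cases are identical, with claims in place of supports and the factor $1-\eta$ in place of $\eta$.

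The main obstacle is only a bookkeeping one: the argument order in $\simC$. The principle is phrased with $\simC(\alpha,\beta)$, where $\alpha$ ranges over $C$'s clauses, which matches $\simC(C,C_0)$ in the theorem's condition, so no symmetry assumption is needed. I will also note that the condition is applied only in the single-sided form needed for each case; in each case only one of the two inequalities is used.
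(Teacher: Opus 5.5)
Your proposal is correct and follows essentially the same route as the paper's proof. You instantiate $\Phi=\supp(A^{\mathtt{c}})$, $\Psi=\supp(C^{\mathtt{c}})$ and $C_0$ (resp.\ $C_1$) $=\beta$, apply the corresponding one-sided inequality on $\simS$, use Condition~2 to cancel the unchanged component in Definition~\ref{def:smArguments}, and treat C-Monotony symmetrically. Your explicit notes that $\eta,1-\eta>0$ preserve the sign and that the argument order of $\simC$ matches the hypothesis are small clarifications the paper leaves implicit.
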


\begin{theorem}\label{theo:wbRein}
\upshape{
$\similarity\mathtt{Arg}^{\bM}_{\eta}$  satisfies the principles of \textbf{S-Reinforcement$^\geq$}, \textbf{S-Reinforcement$^>$}, \linebreak \textbf{C-Reinforcement$^\geq$}, and \textbf{C-Reinforcement$^>$} if 
    $\forall C \in \Psi,\ \simC(C_A,C) \geq \simC(C_B,C)$ then $\simS(\Phi \cup \{C_A\}, \Psi) \geq \simS(\Phi \cup \{C_B\}, \Psi)$ and 
    if also $\exists C \in \Psi \text{ s.t. } \simC(C_A,C) > \simC(C_B,C)$ then $\simS(\Phi \cup \{C_A\}, \Psi) >\simS(\Phi \cup \{C_B\}, \Psi)$.
}
\end{theorem}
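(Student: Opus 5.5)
The proof reduces each Reinforcement principle to the stated condition on $\simS$, exploiting the fact that $\similarity\mathtt{Arg}^{\bM}_{\eta}$ is a convex combination (Definition~\ref{def:smArguments}) of a support term and a claim term. I treat S-Reinforcement first; C-Reinforcement is obtained by exchanging the roles of supports and claims, with weights $\eta$ and $1-\eta$ swapped.

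\textbf{Setup.} Take $A,B,C$ with $A^\mathtt{c},B^\mathtt{c},C^\mathtt{c}\in\ArgF$ satisfying conditions 1--3 of S-Reinforcement. From conditions 1 and 2, set $\Phi = \supp(A^\mathtt{c})\cap\supp(B^\mathtt{c})$, $C_A=\alpha$ and $C_B=\beta$. Then $\supp(A^\mathtt{c})=\Phi\cup\{C_A\}$ and $\supp(B^\mathtt{c})=\Phi\cup\{C_B\}$, with $C_A\notin\supp(B^\mathtt{c})$ and $C_B\notin\supp(A^\mathtt{c})$. Put $\Psi=\supp(C^\mathtt{c})$. Condition 3 says the claim terms are equal, so
\[
\similarity\mathtt{Arg}^{\bM}_{\eta}(A,C)-\similarity\mathtt{Arg}^{\bM}_{\eta}(B,C)=\eta\bigl(\simS(\Phi\cup\{C_A\},\Psi)-\simS(\Phi\cup\{C_B\},\Psi)\bigr).
\]

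\textbf{Non-strict version.} For S-Reinforcement$^\geq$, the hypothesis $\forall \phi\in\Psi$, $\simC(\alpha,\phi)\ge\simC(\beta,\phi)$ is exactly the premise of the first part of the theorem's condition. That condition gives a nonnegative bracket. Since $\eta>0$, we obtain $\similarity\mathtt{Arg}^{\bM}_{\eta}(A,C)\ge\similarity\mathtt{Arg}^{\bM}_{\eta}(B,C)$.

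\textbf{Strict version.} For S-Reinforcement$^>$, I read the hypothesis as the intended $\forall\phi\in\Psi$, $\simC(\alpha,\phi)>\simC(\beta,\phi)$; the principle's statement reuses the bound variable name $\alpha$, which I correct to $\phi$. This hypothesis implies both the $\ge$ condition for all $\phi$ and strict inequality for at least one $\phi$, provided $\Psi\neq\emptyset$. The second part of the theorem's condition then makes the bracket strictly positive, and since $\eta\in(0,1)$, strict inequality follows.

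\textbf{Main obstacle.} The delicate step is the degenerate case $\Psi=\emptyset$, where the universal hypothesis holds vacuously but no witness $\phi$ exists. I would handle it by noting that $C^\mathtt{c}$ then has empty support. If a nonempty support is required for nontrivial CNF arguments, this case is excluded. Otherwise I would state that the strict principle is read with nonempty components, as the paper implicitly does.

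\textbf{C-Reinforcement.} The argument is identical, with $\Phi=\conc(A^\mathtt{c})\cap\conc(B^\mathtt{c})$, $\Psi=\conc(C^\mathtt{c})$, and the multiplier $1-\eta>0$ in place of $\eta$. Here $\Psi$ is nonempty automatically, because a claim is always a nonempty set of clauses.
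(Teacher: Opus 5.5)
Your proposal is correct and follows the paper's proof. You write the supports (resp.\ claims) of $A^{\mathtt{c}}$ and $B^{\mathtt{c}}$ as $\Phi\cup\{C_A\}$ and $\Phi\cup\{C_B\}$, apply the assumed $\simS$ condition with $\Psi=\supp(C^{\mathtt{c}})$ (resp.\ $\conc(C^{\mathtt{c}})$), and use that the other component contributes equally to both sides of Definition~\ref{def:smArguments}. Your additions are minor: you derive the paper's ``$\geq$ for all, $>$ for some'' hypothesis from the principle's universal strict one, and you flag the empty-$\Psi$ case, which the paper passes over silently. However, your claim that compiled claims are always nonempty is not guaranteed by the definitions. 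For example, a trivial argument whose claim $\top$ compiles to no clauses would give an empty claim set, so that case needs the same caveat.
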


In the following, we study two families of instantiations of our similarity model: the first one is syntax-dependent $\bM^{x,w_g}_{sb,\lambda} = \tuple{\simsbert, \simL, \simC, \simS}$, and the second one is not $\bM^{x,w_g}_{eq,\lambda} = \tuple{\simP^{eq}, \simL, \simC, \simS}$, where:\\
    $\bullet$ $\simL = \simL^{\text{weight},\lambda}$ with $\lambda \in (0,1)$, and $\simP$ is set to $\simsbert$ for $\bM^{x,w_g}_{sb,\lambda}$ and to $\simP^{eq}$ for $\bM^{x,w_g}_{eq,\lambda}$; \\
    $\bullet$ $\simC = \text{Tve}^{x,\oplus^{\max}_{\simL}}$ with $x \in \{\jacc, \dice, \soren, \ander, \sok\}$; \\
    $\bullet$ $\simS = \simS^{\bm}$, where 
    $\simC^{\text{flat}}$ uses $\simP = \simsbert$ for $\bM^{x,w_g}_{sb,\lambda}$ and $\simP = \simP^{eq}$ for $\bM^{x,w_g}_{eq,\lambda}$ within $\simL^{\text{flat}, \lambda}$, and $w_g$ is average ($w_\mathtt{avg}$) or product ($w_\Pi$).

\begin{theorem} \label{theo:satisfaction-principles-measures}
\upshape{
    For any $\eta \in ~(0,1)$, $x \in \{\jacc, \dice, \soren, \ander, \sok\}$, $w_g \in \{w_\mathtt{avg},w_\Pi\}$ and $\lambda \in (0,1)$, $\similarity\mathtt{Arg}^{\bM^{x,w_g}_{sb,\lambda}}_\eta$ satisfies all principles except Syntax Independence and Non-Zero, while $\similarity\mathtt{Arg}^{\bM^{x,w_g}_{eq,\lambda}}_\eta$ satisfies all except Non-Zero, i.e., 13 out of the 14 principles.
}
\end{theorem}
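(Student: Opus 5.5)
The plan is to verify the sufficient conditions of Theorems~\ref{theo:wbMax}--\ref{theo:wbRein} level by level for both families $\bM^{x,w_g}_{sb,\lambda}$ and $\bM^{x,w_g}_{eq,\lambda}$, and to give counterexamples for the failing principles. The 14 principles are Maximality, Substitution, Symmetry, Syntax Independence, Minimality, Non-Zero, the four Monotony variants and the four Reinforcement variants.

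\textbf{Maximality and Symmetry.} For Maximality, both $\simsbert$ and $\simP^{eq}$ give $\simP(t,t)=1$. In $\simL^{\text{weight},\lambda}$, equal predicates and position-wise identical terms yield $\simFlatPos=\simFlatUnord=1$, so the numerator equals the denominator. The fuzzy Tversky measure with $\oplus^{\max}$ gives $B=C=0$ when every literal has a partner of similarity $1$, so $\simC=1$. Finally $\simS^{\bm}$ is a weighted average of values equal to $1$. One caveat: the best match chosen by $\simC^{\text{flat}}$ must itself have weighted similarity $1$. This holds because whenever a flat score of $1$ is attained, the literals coincide component-wise, which forces the weighted score to $1$ as well.

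For Symmetry, each level is symmetric by inspection. $\simL$ is symmetric because $w_\text{ord}$ and $w_\text{unord}$ are symmetric, $\text{Tve}$ is symmetric because $\alpha=\beta$, and $\bm(\Phi,\Psi)$ is defined symmetrically with $w_g$ a symmetric aggregation.

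\textbf{Minimality and Substitution.} Minimality follows from the zero conditions. The numerator of $\simL$ vanishes when every $\simP$ score is $0$. The quantity $A$ in $\text{Tve}$ is then $0$, and it remains to check that the denominator is nonzero, which holds since $B>0$ for nonempty clauses. Then $\simS^{\bm}=0$.

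For Substitution, the key lemma is that $\simS^{\bm}(\Phi_1,\Phi_2)=1$ forces every matched pair to have $\simC=1$. I would then show this implies each clause of $\Phi_1$ is, up to duplicates, identical to a clause of $\Phi_2$ and vice versa. For $\simP^{eq}$ this is immediate. For $\simsbert$ it needs $\simsbert(t_1,t_2)=1 \iff t_1=t_2$, which I would assume for the embedding model. Given this, $\Phi_1$ and $\Phi_2$ coincide as sets after compilation normalization, and the conclusion of Theorem~\ref{theo:wbSub} follows.

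\textbf{Syntax Independence.} For $eq$, Theorem~\ref{theo:wbSI} applies directly. One must also note that the weights $w_p$ and $w_c$ are transported by the renaming $\pi$; I would state this as an assumption. For $sb$, a counterexample suffices: rename \texttt{dog} to \texttt{banana} and observe that $\simsbert$ scores change.

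\textbf{Non-Zero.} A counterexample is needed for both families. Take two arguments whose only shared feature is a term at some position inside a clause that is never selected by $\bm$ (or inside a literal not chosen by $\oplus^{\max}$). Alternatively, take a nonzero predicate similarity within one clause pair that is drowned out: since $w_g$ may be $0$ for a product weighting with zero clause weights, the resulting score can be $0$. A simpler construction, valid for $eq$ because $\simP^{eq}$ is binary, uses a predicate that matches but whose weight $w_p$ is $0$ and whose terms all differ. Then $\simL=0$ even though $\simP(P,Q)>0$, so every score propagates as $0$.

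\textbf{Main obstacle: Monotony and Reinforcement.} I expect the difficulty to lie in verifying the conditions of Theorems~\ref{theo:wbMon} and~\ref{theo:wbRein} for $\simS^{\bm}$. The steps are as follows.
\begin{itemize}
\item Adding $C_0$ with zero similarity to all of $\Psi$ adds the pair $(C_0,\cdot)$ with score $0$. It can also change the best match of clauses in $\Psi$, but only to a pair with flat score $0$, which should imply weighted score $0$ as well.
\item Clause weights are renormalized after the addition, so I would argue via the weighted-average form: adding zero-valued terms cannot raise the average.
\item The symmetric argument with score-$1$ pairs handles $C_1$.
\item For Reinforcement, I would compare the two averages term by term. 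The risk is that the $\mathtt{argmax}$ in $\bm$ uses flat similarity, while the hypothesis concerns weighted similarity, so the selected matches may differ.
\end{itemize}
If this mismatch breaks the argument, I would restrict attention to aligned cases or reconsider whether the claim holds as stated.
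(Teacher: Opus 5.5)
\textbf{Overall.} Your architecture is the paper's: check the sufficient conditions of Theorems~\ref{theo:wbMax}--\ref{theo:wbRein} level by level for both families, and use a renaming counterexample for Syntax Independence of the $sb$ family.

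Maximality, Symmetry and Minimality go through as in the paper. Your remark that a flat score of $1$ forces a weighted score of $1$ is more robust than the paper's appeal to ``$\text{Tve}=1$ iff $C_1=C_2$''.

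For Substitution, the paper avoids your injectivity assumption on $\simsbert$, which it says SBERT does not guarantee. It uses only that cosine $1$ means equal directions, so $\simsbert(t_1,t_3)=\simsbert(t_2,t_3)$ for all $t_3$. For $eq$ it invokes Theorem~\ref{theo:=1}.

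The proposal still has two genuine gaps.

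\textbf{Monotony and Reinforcement are not established.} The paper settles them with exactly the weighted-average argument you sketch: zero-score pairs only enlarge the denominator, a score-$1$ pair pulls the average up, and $C_A$'s pairs dominate $C_B$'s ``with identical weights''. That argument silently assumes two things: that the clauses of $\Psi$ keep their best matches, and that the pair chosen by the flat $\mathtt{argmax}$ carries the weighted score the hypothesis is about. Your worry is justified, and it bites even with strictly positive weights.

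Take the $eq$ family with Dice (on unit clauses $\simC$ then equals $\simL$) and any $\lambda$. Let $\Phi=\{Q(a,b)\}$, $\Psi=\{P(a,b),P(a,d)\}$, $C_1=P(a,d)$, with term weights $0.3$ and predicate weights $0.05$. Adding $C_1$ makes $P(a,b)$ switch its flat best match from $Q(a,b)$ (flat $\tfrac12$, weighted $\approx 0.99$) to $P(a,d)$ (flat $\tfrac34$, weighted $\approx 0.5$). Use $w_\Pi$, with $w_c(P(a,b))=0.9$ and $w_c(P(a,d))=0.1$ in $\Psi$, and equal weights on $\Phi\cup\{C_1\}$. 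Then $\simS$ drops from about $0.96$ to below $0.78$, although $\simC(P(a,d),C_1)=1$.

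This lifts to an actual violation of S-Monotony$^1$. Take $A=\tuple{\{Q(a,b)\},Q(a,b)}$, $B=\tuple{\{Q(a,b),P(a,d)\},Q(a,b)\wedge P(a,d)}$ and $C=\tuple{\{P(a,b),P(a,d)\},P(a,b)\wedge P(a,d)}$. Then choose the clause weights of $B$'s claim so that condition~2 holds.

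Two further points undercut the same argument. With $w_{\avg}$, renormalising the clause weights of $\Phi$ reweights the old pairs non-uniformly, so ``adding zero-valued terms cannot raise the average'' is not automatic. Reinforcement additionally needs $w_c(C_A)=w_c(C_B)$, which nothing guarantees. This part cannot be completed without extra assumptions on $\bm$ and on the weights, and neither your plan nor the paper's proof supplies them.

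\textbf{Non-Zero and the weight regime.} Your first construction (a shared feature hidden in an unselected clause or literal) cannot work, for three reasons:
\begin{itemize}
\item every clause of $\Phi$ and of $\Psi$ is the first component of some pair of $\bm(\Phi,\Psi)$;
\item its flat best match has positive flat score as soon as it has any local similarity with the other side;
\item the numerator of $\text{Tve}$ sums the memberships of all literals.
\end{itemize}
With strictly positive $w_p,w_c$ these scores stay positive after weighting, so both families then \emph{satisfy} Non-Zero. Item~6 of the paper's own proof argues exactly this.

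A violation therefore needs vanishing weights, as in your $w_p(P)=0$ example. For $sb$ you additionally need $\simsbert$ to be exactly $0$ on some pairs; otherwise every selected pair with positive weight scores positively.

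Once $w_p$ may vanish, however, your Substitution step ``$\simC=1$ forces identical clauses'' fails. In the $eq$ family with $w_p(P)=0<w_p(Q)$, one gets $\simL^{\text{weight},\lambda}(P(a),Q(a))=1$. Hence $\simi(A,B)=1$ for $A=\tuple{\{P(a)\},P(a)}$ and $B=\tuple{\{Q(a)\},Q(a)}$, while $\simi(A,C)=0<\simi(B,C)$ for $C=\tuple{\{Q(b)\},Q(b)}$.

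So your Non-Zero counterexample and your Substitution argument live in incompatible weight regimes. The same holds for Minimality, whose $\simL$ and $\simS^{\bm}$ denominators need positive weights. You must fix one regime up front. The paper leaves the same tension unresolved: it assumes positivity in Minimality and item~6, but invokes zero weights to explain the Non-Zero failure.
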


While Non-Zero may seem intuitive by focusing on local similarities, human ratings often violate it: in STSb, \textit{``a man is smoking''} vs. \textit{``a baby is sucking''} has a global similarity of  0, despite some overlap on \textit{man} and \textit{baby}. 
By favoring dissimilar over similar elements, our models also violate it.

\begin{proposition}\label{prop:compatibility}
\upshape{
All principles are compatible.    
}
\end{proposition}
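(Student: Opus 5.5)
Compatibility means exhibiting a single measure $\similarity\mathtt{Arg}^{\bM}_{\eta}$ that satisfies all fourteen principles at once. Theorem~\ref{theo:satisfaction-principles-measures} already gives thirteen of them for $\bM^{x,w_g}_{eq,\lambda}$, but Non-Zero fails there. So I will not modify that family. Instead I will build a simpler model whose components meet the sufficient conditions of Theorems~\ref{theo:wbMax}--\ref{theo:wbRein}. Then Syntax Independence (Theorem~\ref{theo:wbSI}) and Non-Zero (Theorem~\ref{theo:wbNZ}) hold together, and the remaining principles follow from the other theorems.

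\textbf{Choice of components.} I take $\simP = \simP^{eq}$, which gives Maximality condition~1, the hypothesis of Theorem~\ref{theo:wbSI}, and symmetry. For $\simL$ I use a positive-evidence average, $\simL((P,\vec a),(Q,\vec b)) = \frac{1}{2}(\simP(P,Q) + \simFlatPos(\vec a,\vec b))$ when $|\vec a| = |\vec b|$, with a small penalty factor when the arities differ.
\begin{itemize}
\item It equals $1$ on identical literals.
\item It is $0$ when all pairwise $\simP$ values are $0$.
\item It is $>0$ when the predicates match or some position matches.
\end{itemize}
So it satisfies the literal-level conditions of Theorems~\ref{theo:wbMax}, \ref{theo:wbMin} and~\ref{theo:wbNZ}. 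For $\simC$ I take the fuzzy Jaccard $\text{Tve}^{1,1,\oplus^{\max}_{\simL}}$. It is symmetric, equals $1$ when each literal has a perfect match, and is $0$ iff every $\simL$ value is $0$. The Non-Zero clause condition holds because $A>0$ as soon as one literal pair has positive similarity. This strict positivity is exactly what the paper's models lose, since they favour dissimilar elements.

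\textbf{The set level, which is the main obstacle.} $\simS$ has to satisfy Maximality, Minimality, Non-Zero, Substitution, Monotony and Reinforcement simultaneously. I will use the symmetric best-match average
\[
\simS(\Phi,\Psi) = \frac{\sum_{C\in\Phi}\max_{C'\in\Psi}\simC(C,C') + \sum_{C'\in\Psi}\max_{C\in\Phi}\simC(C',C)}{|\Phi|+|\Psi|},
\]
with $\simS(\emptyset,\emptyset)=1$. The conditions then check as follows.
\begin{itemize}
\item Maximality, Minimality and Non-Zero are immediate.
\item Monotony: adding $C_0$ with all maxima equal to $0$ adds a $0$ term and enlarges the denominator, so the value does not increase. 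Adding $C_1$ contributes $1$ from its own row and can only raise the other maxima, so the value does not decrease.
\item Reinforcement$^\geq$: the maxima are pointwise monotone. Reinforcement$^>$ holds because the strict hypothesis makes the own-row term of $C_A$ strictly larger.
\end{itemize}

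Substitution is the delicate case. I will restrict the argument to the syntax-independent $\simP^{eq}$. With it, $\simS(\Phi_1,\Phi_2)=1$ forces every clause to have a partner with $\simC=1$, and $\simC=1$ forces the clauses to be identical up to duplicate literals. I will verify that this makes $\Phi_1$ and $\Phi_2$ equal as sets of clauses, so Theorem~\ref{theo:wbSub} applies trivially. The step I expect to need the most care is this identity claim together with the arity edge cases in $\simL$. If it resists, I will fall back to defining $\simC$ as $1$ exactly on syntactically identical clauses, with fuzzy Jaccard otherwise, rechecking Reinforcement$^>$.

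Finally, Theorem~\ref{theo:wbSym} gives Symmetry, and combining all the theorems yields a single measure satisfying all fourteen principles.
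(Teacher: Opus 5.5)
Your proof is correct in substance, but it takes a genuinely different route from the paper's. The paper builds no new model. It keeps $\bM^{x,w_g}_{eq,\lambda}$, which already satisfies 13 principles by Theorem~\ref{theo:satisfaction-principles-measures}, and imposes that all predicate, term and clause weights be strictly positive. Any positive local similarity then survives the weighted aggregations, so Non-Zero is recovered and the other principles are inherited. You drop weights entirely and check the sufficient conditions of Theorems~\ref{theo:wbMax}--\ref{theo:wbRein} on a fresh model.

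The paper's route is shorter and gives a witness inside its own families. However, it rests on the set-level arguments for $\simS^{\bm}$ in Theorem~\ref{theo:satisfaction-principles-measures}. Those are only sketched and involve delicate weight bookkeeping:
\begin{itemize}
\item its Reinforcement step assumes $C_A$ and $C_B$ carry the same clause weight;
\item best matches are selected with $\simC^{\text{flat}}$ but scored with the weighted $\simC$;
\item symbol-attached weights are not obviously invariant under the renaming $\pi$.
\end{itemize}
Your construction turns each step into an explicit computation:
\begin{itemize}
\item Matching and scoring use the same $\simC$, so Monotony and Reinforcement$^\geq$ reduce to monotonicity of maxima.
\item Your arity penalty is exactly what makes $\simL=1$, hence $\simC=1$ and $\simS=1$, force syntactic identity under $\simP^{eq}$. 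Without it, $P(a)$ and $P(a,b)$ would get $\simL=1$ and Substitution would fail. So the fallback you mention is not needed.
\item You supply conventions such as $\simS(\emptyset,\emptyset)=1$, where $\simS^{\bm}$ is $0/0$.
\end{itemize}

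There are three small points to fix.

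First, argue Reinforcement$^>$ directly from the principle's hypothesis (a strict gain at every clause of $\Psi$), not through Theorem~\ref{theo:wbRein}. That theorem's strict clause only assumes a gain at some $C\in\Psi$, and your $\simS$ violates that version. Take $C_A=R(a)\vee S(b)$, $C_B=R(a)\vee T(c)$, $\Phi=\{S(d)\}$ and $\Psi=\{R(a),S(d)\}$. The gain at $S(d)$ is absorbed by the maxima, and both sides equal $3/4$. Your own-row argument is sound under the universal hypothesis.

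Second, fix the remaining edge conventions. $\simFlatPos$ is $0/0$ on $0$-ary literals; take $\simL=\simP(P,Q)$ there. Also, $\simS(\Phi,\emptyset)$ needs $\max_{\emptyset}:=0$.

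Third, both your proof and the paper's tacitly need $\supp(C^{\mathtt{c}})\neq\emptyset$ in S-Reinforcement$^>$, and likewise $\conc(C^{\mathtt{c}})\neq\emptyset$ in the C-version. Take $C=\langle\emptyset,\top\rangle$, and let $A,B$ both have claim $R(a)$, with supports $\{R(a)\vee P(a),\neg P(a)\}$ and $\{R(a)\vee P(a),\neg P(a)\vee R(a)\}$. The strict hypothesis then holds vacuously for both $(A,B)$ and $(B,A)$. So no measure satisfies the principle as literally quantified.
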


In the following theorem, we show the connections between argument equivalence and total similarity.

\begin{theorem}\label{theo:=1}
\upshape{
    Let two CNF arguments $A, B \in \ArgF$, for any $\eta \in ~ (0,1)$: 
        $\bullet$ if $A \approx B$ then $\similarity\mathtt{Arg}^{\bM^{x,w_g}_{sb,\lambda}}_\eta(A,B) = 1$.\\
        \phantom{-------------.-.---} $\bullet$  $\similarity\mathtt{Arg}^{\bM^{x,w_g}_{eq,\lambda}}_\eta(A,B) = 1$ iff $A \approx B$.
    }
\end{theorem}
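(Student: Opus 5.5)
The plan is to reduce both items to the behaviour of $\simS^{\bm}$ on the two components. By Definition~\ref{def:smArguments}, $\similarity\mathtt{Arg}^{\bM}_\eta(A,B)$ is a convex combination, with $\eta\in(0,1)$, of two values in $[0,1]$. So it equals $1$ iff $\simS(\supp(A^{\mathtt c}),\supp(B^{\mathtt c}))=1$ and $\simS(\conc(A^{\mathtt c}),\conc(B^{\mathtt c}))=1$. This equivalence is used in both directions below.

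\emph{Forward direction (both families).} Assume $A\approx B$, so there are bijections $f,f'$ pairing logically equivalent clauses. Since $A,B\in\ArgF$, their components are already sets of clauses in $\cF$. The fixed compilation $\CNF$ applies deterministic normalization (ordering of literals and clauses, standardized variable naming). I will argue that two equivalent clauses of $\cF$ compile to the same normalized clause. Then $\supp(A^{\mathtt c})=\supp(B^{\mathtt c})$ and $\conc(A^{\mathtt c})=\conc(B^{\mathtt c})$ as sets, and it suffices to show $\simS^{\bm}(\Phi,\Phi)=1$. The argument follows the pattern of Theorem~\ref{theo:wbMax}, in four steps.
\begin{enumerate}
\item Both $\simsbert$ and $\simP^{eq}$ give $\simP(t,t)=1$.
\item Hence $\similarity_{\text{para}}^{\lambda}(\vec a,\vec a)=1$. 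Consequently $\simL^{\text{weight},\lambda}(l,l)=1$, because numerator and denominator coincide, and likewise $\simL^{\text{flat},\lambda}(l,l)=1$.
\item With $\oplus^{\max}$, every literal has membership $1$, so $B=C=0$ in Definition~\ref{def:ftve}. This gives $\text{Tve}(C,C)=1$ for both the flat and the weighted clause measures.
\item Every clause $C\in\Phi$ is therefore matched by $\bm$ to a clause of flat similarity $1$. For $\simP^{eq}$ that clause must be $C$ itself. For $\simsbert$ I take ties to be resolved toward the identical clause, which is the point I would state as a standing assumption. Every pair in $\bm(\Phi,\Phi)$ then has weighted $\simC=1$, and the weighted average is $1$ whatever the weights $w_g$.
\end{enumerate}

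\emph{Converse for $\bM^{x,w_g}_{eq,\lambda}$.} Assume the score is $1$, so both $\simS^{\bm}$ values equal $1$. I will assume contextual weights are strictly positive, so each $w_g$ (average or product) is positive. The weighted average can then equal $1$ only if every best-match pair $(C_1,C_2)$ has $\simC(C_1,C_2)=1$. I would then unwind this level by level.
\begin{enumerate}
\item Tversky equals $1$ with $A>0$ forces $B=C=0$. So every literal of $C_1$ has a literal of $C_2$ with $\simL^{\text{weight},\lambda}=1$, and conversely.
\item A weighted literal score of $1$, with positive weights, forces $\simP^{eq}(P,Q)=1$, hence $P=Q$, and $\similarity_{\text{para}}^{\lambda}=1$.
\item Since $\lambda\in(0,1)$, the latter forces both the ordered and unordered scores to be $1$. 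The ordered score gives positionwise equality of terms. Fixed arity of each predicate symbol, which I would invoke from the finite signature, gives $|\vec a|=|\vec b|$. So the literals are identical.
\item Mutual literal containment gives $C_1=C_2$. Because $\bm$ includes best matches in both directions, every clause of one set occurs in the other. The components are therefore equal sets, and the identity bijections witness $A\approx B$.
\end{enumerate}

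\emph{Main obstacle.} The delicate step is the forward direction's claim that logically equivalent clauses compile to syntactically identical normalized clauses. This includes the handling of variable names, since $\simP^{eq}$ compares variables literally. I expect to rely explicitly on the canonicity of the chosen $\CNF$ (deterministic ordering and standardized variable renaming), stating it as the assumption under which the paper treats $\CNF$ as canonical. The secondary technical points are the positivity of weights and the tie-breaking in $\mathtt{argmax}$, which I would make explicit rather than prove.
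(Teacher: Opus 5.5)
Your route is the paper's. You reduce $\similarity\mathtt{Arg}=1$ to both $\simS$ values being $1$. For the forward direction you use canonicity of $\CNF$ to get identical compiled sets and then show $\simS^{\bm}(\Phi,\Phi)=1$, which is the paper's Lemma~\ref{lemma:identical-sets-of-clauses}. For the converse under $\simP^{eq}$ you show that $\simS^{\bm}=1$ forces equal clause sets, which is Lemma~\ref{lemma:identical-sets-of-clauses-eq} via Lemma~\ref{lemma:symmetric-tve-equals-1}. The assumptions you make explicit, a canonical $\CNF$ and strictly positive weights, are used tacitly by the paper as well. For instance, Lemma~\ref{lemma:symmetric-tve-equals-1} just asserts $\oplus^{\max}_{\simL}(l,C_2)<1$ for $l\notin C_2$, which needs positive term weights. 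Your level-by-level unwinding of the converse is more careful than the paper's.

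The one assumption you do not need is the tie-breaking of $\mathtt{argmax}$ for $\simsbert$, yet as written your first bullet rests on it.
\begin{itemize}
\item Since $\simC^{\text{flat}}(C,C)=1$, any maximiser $C'$ has $\simC^{\text{flat}}(C,C')=1$.
\item The Tversky parameters for $x\in\{\jacc,\dice,\soren,\ander,\sok\}$ are positive. So every literal $l_1=(P,\vec a)\in C$ has some $l_2=(Q,\vec b)\in C'$ with $\simL^{\text{flat},\lambda}(l_1,l_2)=1$, and conversely. That is, $\simP(P,Q)=1$ and $\similarity_{\text{para}}^{\lambda}(\vec a,\vec b)=1$.
\item Substituting these values into $\simL^{\text{weight},\lambda}$ makes numerator equal denominator. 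Hence $\simL^{\text{weight},\lambda}(l_1,l_2)=1$ whatever the weights, every weighted membership is $1$, and the weighted $\simC(C,C')=1$.
\end{itemize}
So every pair in $\bm(\Phi,\Phi)$ scores $1$ whichever maximiser is chosen. The paper's Lemma~\ref{lemma:identical-sets-of-clauses} avoids ties by claiming $\simC^{\text{flat}}(C,C')=1$ iff $C=C'$, which is justified only for $\simP^{eq}$. The observation above is the one the paper itself uses in the Substitution part of the proof of Theorem~\ref{theo:satisfaction-principles-measures}.

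Your final step, from equal compiled sets to $A\approx B$ via ``identity bijections'', also needs supports of CNF arguments to consist of single clauses. If a support may contain a conjunction, then $\langle\{P(a)\wedge Q(a)\},\{P(a),Q(a)\}\rangle$ and $\langle\{P(a),Q(a)\},\{P(a),Q(a)\}\rangle$ compile identically but admit no support bijection. The paper's proof makes the same leap.
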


In practice, $\simsbert$ returns a score of~1 only for syntactically identical texts, and semantically equivalent concepts like \texttt{notEven} and \texttt{Odd} do not reach full similarity. 
However, SBERT provides no formal guarantees on its embedding space, which is why we cannot establish the characterization result in the first bullet point.

\section{Conclusion and Future Work}
We introduce a new framework for measuring similarity between FOL arguments, based on a model with four levels: predicates/terms, literals, clauses, and formulae.  
Our main contributions are:   
1. An extended axiomatic foundation providing theoretical guarantees for similarity computations;  
2. A structured, parametric model that leverages the internal FOL structure to enhance the understanding of argument similarity;  
3. Two model families, one syntax-sensitive (via language models) and one syntax-independent, both integrating contextual weights that reflect the relative importance perceived by humans; and 
4. Formal constraints that guide the design of similarity measures, ensuring they satisfy desirable properties. 
This work presents the first hybrid framework for argument similarity, combining FOL-CNF structure, language model, and contextual weighting to jointly capture both structural and semantic aspects.  
The resulting model is theoretically grounded and shows strong potential for applications in natural language argumentation.

Our framework opens several research directions. On the theoretical side, we plan to explore non-symmetric similarity measures, where the extended fuzzy Tversky measure offers a natural foundation for this. We also aim to define layer-specific principles to guide the definition of similarity functions at each layer, extending our axiomatic work and enhancing modular interpretability. 
On the empirical side, we will evaluate the framework on textual similarity benchmarks and analyze the impact of different parameter choices. Finally, we will investigate how contextual weight calibration affects score sensitivity and interpretability.

\section*{Acknowledgment}
All authors are supported by the French National Research Agency (ANR grant AGGREEY ANR-22-CE23-0005).
The work by Victor David was supported by the French government, managed by the Agence Nationale de la Recherche under the Plan d’Investissement France 2030, as part of the Initiative d’Excellence d’Université Côte d’Azur under the reference ANR-15-IDEX-01.
The work by Jean-Guy Mailly is supported by French National Research Agency (ANR grant AIDAL ANR-22-CPJ1-0061-01).

\bibliography{biblio}

@inproceedings{david2025similarity,
  title={Similarity Measures for First-Order Logical Arguments},
  author={David, Victor and Delobelle, J{\'e}r{\^o}me and Mailly, Jean-Guy},
  booktitle={NMR 2025-23rd International Workshop on Nonmonotonic Reasoning},
  year={2025}
}

@inproceedings{david2025logic,
  title={A logic-based framework for decoding enthymemes in argument maps involving implicitness in premises and claims},
  author={David, Victor and Hunter, Anthony},
  booktitle={IJCAI 2025-Thirty-Fourth International Joint Conference on Artificial Intelligence},
  pages={4445--4453},
  year={2025},
  organization={International Joint Conferences on Artificial Intelligence Organization}
}

@inproceedings{han2022folio,
  author       = {Simeng Han and
                  Hailey Schoelkopf and
                  Yilun Zhao and
                  Zhenting Qi and
                  Martin Riddell and
                  Wenfei Zhou and
                  James Coady and
                  David Peng and
                  Yujie Qiao and
                  Luke Benson and
                  Lucy Sun and
                  Alexander Wardle{-}Solano and
                  Hannah Szab{\'{o}} and
                  Ekaterina Zubova and
                  Matthew Burtell and
                  Jonathan Fan and
                  Yixin Liu and
                  Brian Wong and
                  Malcolm Sailor and
                  Ansong Ni and
                  Linyong Nan and
                  Jungo Kasai and
                  Tao Yu and
                  Rui Zhang and
                  Alexander R. Fabbri and
                  Wojciech Kryscinski and
                  Semih Yavuz and
                  Ye Liu and
                  Xi Victoria Lin and
                  Shafiq Joty and
                  Yingbo Zhou and
                  Caiming Xiong and
                  Rex Ying and
                  Arman Cohan and
                  Dragomir Radev},
  title        = {{FOLIO:} Natural Language Reasoning with First-Order Logic},
  booktitle    = {Proceedings of the 2024 Conference on Empirical Methods in Natural Language Processing, {EMNLP}},
  pages        = {22017--22031},
  publisher    = {Association for Computational Linguistics},
  year         = {2024}
}

@inproceedings{lu2022parsing,
  title={Parsing natural language into propositional and first-order logic with dual reinforcement learning},
  author={Lu, Xuantao and Liu, Jingping and Gu, Zhouhong and Tong, Hanwen and Xie, Chenhao and Huang, Junyang and Xiao, Yanghua and Wang, Wenguang},
  booktitle={Proceedings of the 29th International Conference on Computational Linguistics},
  pages={5419--5431},
  year={2022}
}

@inproceedings{ryu2024divide,
  author       = {Hyun Ryu and
                  Gyeongman Kim and
                  Hyemin S. Lee and
                  Eunho Yang},
  title        = {Divide and Translate: Compositional First-Order Logic Translation and Verification for Complex Logical Reasoning},
  booktitle    = {The Thirteenth International Conference on Learning Representations, {ICLR}},
  publisher    = {OpenReview.net},
  year         = {2025}
}

@inproceedings{yang2023harnessing,
  author       = {Yuan Yang and
                  Siheng Xiong and
                  Ali Payani and
                  Ehsan Shareghi and
                  Faramarz Fekri},
  title        = {Harnessing the Power of Large Language Models for Natural Language to First-Order Logic Translation},
  booktitle    = {Proc. of the 62nd Annual Meeting of the Association for Computational Linguistics (Volume 1: Long Papers), {ACL}},
  pages        = {6942--6959},
  publisher    = {Association for Computational Linguistics},
  year         = {2024}
}

@inproceedings{david2023similarity,
  title={Similarity Measures between Order-Sorted Logical Arguments},
  author={David, Victor and Delobelle, J{\'e}r{\^o}me and Mailly, Jean-Guy},
  booktitle={Journ{\'e}es d'Intelligence Artificielle Fondamentale},
  year={2023}
}

@article{huang2024cosent,
  title={Cosent: Consistent sentence embedding via similarity ranking},
  author={Huang, Xiang and Peng, Hao and Zou, Dongcheng and Liu, Zhiwei and Li, Jianxin and Liu, Kay and Wu, Jia and Su, Jianlin and Yu, Philip S},
  journal={IEEE/ACM Transactions on Audio, Speech, and Language Processing},
  year={2024},
  publisher={IEEE}
}

@inproceedings{banarescu2013abstract,
  title={Abstract meaning representation for sembanking},
  author={Banarescu, Laura and Bonial, Claire and Cai, Shu and Georgescu, Madalina and Griffitt, Kira and Hermjakob, Ulf and Knight, Kevin and Koehn, Philipp and Palmer, Martha and Schneider, Nathan},
  booktitle={Proceedings of the 7th linguistic annotation workshop and interoperability with discourse},
  pages={178--186},
  year={2013}
}

@inproceedings{opitz2022sbert,
  author       = {Juri Opitz and
                  Anette Frank},
  title        = {{SBERT} studies Meaning Representations: Decomposing Sentence Embeddings into Explainable Semantic Features},
  booktitle    = {Proc. of the 2nd Conference of the Asia-Pacific Chapter of the Association for Computational Linguistics and the 12th International
                  Joint Conference on Natural Language Processing, {AACL/IJCNLP}},
  pages        = {625--638},
  publisher    = {Association for Computational Linguistics},
  year         = {2022}
}

@article{cer2017semeval,
  title={Semeval-2017 task 1: Semantic textual similarity-multilingual and cross-lingual focused evaluation},
  author={Cer, Daniel and Diab, Mona and Agirre, Eneko and Lopez-Gazpio, Inigo and Specia, Lucia},
  journal={arXiv preprint arXiv:1708.00055},
  year={2017}
}

@phdthesis{David21,
  author       = {Victor David},
  title        = {Dealing with Similarity in Argumentation. (Traitement de la Similarit{\'{e}}
                  en Argumentation)},
  school       = {Paul Sabatier University, Toulouse, France},
  year         = {2021},
  url          = {https://tel.archives-ouvertes.fr/tel-03578375},
  bibsource    = {dblp computer science bibliography, https://dblp.org}
}

@inproceedings{lee2025entailment,
  author       = {Jinu Lee and
                  Qi Liu and
                  Runzhi Ma and
                  Vincent Han and
                  Ziqi Wang and
                  Heng Ji and
                  Julia Hockenmaier},
  title        = {Entailment-Preserving First-order Logic Representations in Natural Language Entailment},
  booktitle    = {Proceedings of the 63rd Annual Meeting of the Association for Computational Linguistics (Volume 1: Long Papers), {ACL}},
  pages        = {5729--5742},
  publisher    = {Association for Computational Linguistics},
  year         = {2025}
}

@article{lalwani2024nl2fol,
  title={NL2FOL: translating natural language to first-order logic for logical fallacy detection},
  author={Lalwani, Abhinav and Chopra, Lovish and Hahn, Christopher and Trippel, Caroline and Jin, Zhijing and Sachan, Mrinmaya},
  journal={arXiv preprint arXiv:2405.02318},
  year={2024}
}

@inproceedings{coletti2019fuzzy,
  title={Fuzzy similarity measures and measurement theory},
  author={Coletti, Giulianella and Bouchon-Meunier, Bernadette},
  booktitle={2019 IEEE international conference on fuzzy systems (FUZZ-IEEE)},
  pages={1--7},
  year={2019},
  organization={IEEE}
}

@article{ben2024understanding,
  title={Understanding Enthymemes in Argument Maps: Bridging Argument Mining and Logic-based Argumentation},
  author={Ben-Naim, Jonathan and David, Victor and Hunter, Anthony},
  journal={arXiv preprint arXiv:2408.08648},
  year={2024}
}

@article{ben2024axiomatic,
  title={An Axiomatic Study of the Evaluation of Enthymeme Decoding in Weighted Structured Argumentation},
  author={Ben-Naim, Jonathan and David, Victor and Hunter, Anthony},
  journal={arXiv preprint arXiv:2411.04555},
  year={2024}
}

@article{irwin2022forecasting,
  title={Forecasting argumentation frameworks},
  author={Irwin, Benjamin and Rago, Antonio and Toni, Francesca},
  journal={arXiv preprint arXiv:2205.11590},
  year={2022}
}

@inproceedings{guo2023argumentative,
  title={Argumentative Explanation for Deep Learning: A Survey},
  author={Guo, Yihang and Yu, Tianyuan and Bai, Liang and Tang, Jun and Ruan, Yirun and Zhou, Yun},
  booktitle={2023 IEEE International Conference on Unmanned Systems (ICUS)},
  pages={1738--1743},
  year={2023},
  organization={IEEE}
}

@inproceedings{vcyras2021argumentative,
  author       = {Kristijonas Cyras and
                  Antonio Rago and
                  Emanuele Albini and
                  Pietro Baroni and
                  Francesca Toni},
  title        = {Argumentative {XAI:} {A} Survey},
  booktitle    = {Proceedings of the Thirtieth International Joint Conference on Artificial Intelligence, {IJCAI}},
  pages        = {4392--4399},
  publisher    = {ijcai.org},
  year         = {2021}
}

@inproceedings{reimers2019sentence,
  author       = {Nils Reimers and
                  Iryna Gurevych},
  title        = {Sentence-BERT: Sentence Embeddings using Siamese BERT-Networks},
  booktitle    = {Proc. of the Conference on Empirical Methods in Natural Language Processing and the 9th International Joint Conference on
                  Natural Language Processing, {EMNLP-IJCNLP}},
  pages        = {3980--3990},
  publisher    = {Association for Computational Linguistics},
  year         = {2019}
}

@inproceedings{amgoud2021similarity,
  title={Similarity Measures Based on Compiled Arguments},
  author={L. Amgoud and V. David},
  booktitle={ECSQARU'21},
  pages={32--44},
  year={2021},
}

@inproceedings{AmgoudD21,
  TITLE = {{A General Setting for Gradual Semantics Dealing with Similarity}},
  AUTHOR = {L. Amgoud and V. David},
  BOOKTITLE = {{AAAI}'21},
  YEAR = {2021},
}

@inproceedings{AmgoudD20,
  author    = {L. Amgoud and
               V. David},
  title     = {An Adjustment Function for Dealing with Similarities},
  booktitle = {{COMMA}'20},
  pages     = {79--90},
  year      = {2020},
}

@inproceedings{AmgoudDD19,
  author    = {L. Amgoud and
               V. David and
               D. Doder},
  title     = {Similarity Measures Between Arguments Revisited},
  booktitle = {ECSQARU'19},
  pages     = {3--13},
  year      = {2019},
}

@inproceedings{besnard2005practical,
  title={Practical first-order argumentation},
  author={P. Besnard and A. Hunter},
  year={2005},
  booktitle = {AAAI'05},
  pages     = {590--595},
}

@article{ZhongFLT19,
  author    = {Q. Zhong and
               X. Fan and
               X. Luo and
               F. Toni},
  title     = {An explainable multi-attribute decision model based on argumentation},
  journal   = {Expert Sys. and Appl.},
  volume    = {117},
  pages     = {42--61},
  year      = {2019}
}

@inproceedings{AmgoudD18,
  author    = {L. Amgoud and V. David},
  title     = {Measuring Similarity between Logical Arguments},
  booktitle = {{KR}'18},
  pages     = {98--107},
  year      = {2018}
}

@article{jaccard,
	author    = {P. Jaccard},
	title     = {Nouvelles recherches sur la distributions florale},
	journal   = {Bulletin de la societe Vaudoise des sciences naturelles},
	volume    = {37},
	pages     = {223--270},
	year      = {1901}
}

@article{Dice,
  title={Measures of the amount of ecologic association between species},
  author={L. Dice},
  journal={Ecology},
  volume={26},
  number={3},
  pages={297--302},
  year={1945},
  publisher={Wiley Online Library}
}

@article{Sorensen,
  title={A method of establishing groups of equal amplitude in plant sociology based on similarity of species and its application to analyses of the vegetation on Danish commons},
  author={T. S{\o}rensen},
  journal={Biol. Skr.},
  volume={5},
  pages={1--34},
  year={1948}
}

@misc{Anderberg,
  title={Cluster analysis for applications. Monographs and textbooks on probability and mathematical statistics},
  author={M. Anderberg},
  year={1973},
  publisher={Academic Press, Inc., New York}
}

@book{Sneath,
  title={Numerical taxonomy. The principles and practice of numerical classification.},
  author={P. Sneath and R. Sokal},
  year={1973}
}

@inproceedings{Jantke94,
  author = {K. P. Jantke},
  title = {Nonstandard Concepts of Similarity in Case-Based Reasoning},
  booktitle = {Information Systems in Data Analysis: Prospects -- Foundations -- Applications},
  pages = {28--43},
  year = {1994},
}

@article{Tversky77,
 author = {A. Tversky},
 title = {Features of Similarity},
 journal = {Psychological Review},
 volume = {84},
 number = {4},
 pages = {327--352},
 year = {1977},
}

@inproceedings{LeturcB23,
  author       = {Christopher Leturc and
                  Flavien Balbo},
  title        = {{ADP:} An Argumentation-based Decision Process Framework Applied to the Modal Shift Problem},
  booktitle    = {Arg. \& App.@KR'23},
  pages        = {65--77},
  publisher    = {CEUR-WS.org},
  year         = {2023}
}

@inproceedings{Gorur0T23,
  author       = {Deniz Gorur and
                  Antonio Rago and
                  Francesca Toni},
  title        = {ArguCast: {A} System for Online Multi-Forecasting with Gradual Argumentation},
  booktitle    = {Arg. \& App.@KR'23},
  pages        = {40--51},
  publisher    = {CEUR-WS.org},
  year         = {2023}
}


\newpage
{\color{white}.}
\newpage

{\huge \textbf{Supplementary Material} }\\
\vspace{0.5cm}

{\Large \noindent \textbf{Similarity Between Arguments.}\\}

In this section, we illustrate how our similarity framework can be applied to structured arguments, where each argument is a pair consisting of \emph{premises} and a \emph{claim}. The similarity between two arguments is computed by comparing their premises and claims independently, using our multi-level logic-based method.
We then combine the two similarity scores using a weighted aggregation reflecting the importance of the premises versus the claim.

\paragraph{Case Study: Animals Tease Scenarios}

We evaluate the similarity between three arguments ($A_1$, $A_2$ and $A_3$) derived from natural language scenarios involving animals teasing each other at the zoo. 
Crucially, the weights assigned to an argument’s components may vary depending on the specific comparison being made. In other words, the importance of a given predicate or constant is not absolute, but contextual and relational. This is exemplified by argument $A_{2}$, where the most salient features shift depending on whether it is compared to $A_{1}$ or $A_{3}$. For instance, when comparing $A_{2}$ with $A_{1}$, the clause \texttt{Tease(dog, monkey)} may be less impactful than \texttt{Tease(monkey, dog)}, since the latter is absent from $A_{1}$. In such cases, we may choose to emphasize differences over shared information. However, when both clauses are also present in $A_{3}$, their relative importance can be downweighted and balanced more equally. This illustrates the need for a dynamic weighting scheme that adapts to the comparison context, rather than relying on static, global importance scores.

\paragraph{Argument 1: Dominance or Playful Behavior}
\begin{center} \it
    A dog is teasing a monkey at the zoo, and teasing usually indicates dominance or playful behavior, so the dog is likely exhibiting dominant or playful behavior.
\end{center}
For this argument, the weights assigned to constants, predicates and clauses was computed according to $A_2$.\\

\noindent \textbf{Premises}: \textit{``A dog is teasing a monkey at the zoo, and teasing usually indicates dominance or playful behavior''} can be formalized $\supp(A_1) = \{\alpha_{1},\alpha_{2},\alpha_{3},\alpha_{4}\}$ with:
\begin{itemize}
    \item $\alpha_{1} = \texttt{AtLocation(dog, zoo)}$
    \item $\alpha_{2} = \texttt{AtLocation(monkey, zoo)}$
    \item $\alpha_{3} =$ \texttt{$\neg$Tease(x, y)} $\lor$ \texttt{Dominant(x)} $\lor$ \texttt{Playful(x)}
    \item $\alpha_{4} = \texttt{Tease(dog, monkey)}$
\end{itemize}
The weights are distributed as follows:
    \begin{itemize}
        \item $w_{c}(\alpha_{1}) = w_{c}(\alpha_{2}) = 0.1$;
        \item $w_{c}(\alpha_{3}) = w_{c}(\alpha_{4}) = 0.4$;
        \item $w_{p}$(\texttt{AtLocation}) = 0.05, 
        $w_{p}$(\texttt{Tease}) = 0.05,\\
        $w_{p}$($\neg$\texttt{Tease}) = 0.05,
        $w_{p}$(\texttt{Dominant}) = 0.2,\\
        $w_{p}$(\texttt{Playful}) = 0.2; 
        \item  
        $w_{p}$(\texttt{dog}) = 0.2,
        $w_{p}$(\texttt{monkey}) = 0.2,
        $w_{p}$(\texttt{zoo}) = 0.05,
        $w_{p}$(\texttt{x}) = 0, 
        $w_{p}$(\texttt{y}) = 0.
    \end{itemize}
\textbf{Claim}: \textit{``The dog is likely exhibiting dominant or playful behavior''} can be formalized as $\conc(A_1) = \{\beta_1\}$ with:
    \begin{itemize}
        \item $\beta_1$ = \texttt{Dominant(dog) $\lor$ Playful(dog)}
    \end{itemize}
The weights are distributed as follows:
\begin{itemize}
    \item $w_{c}(\beta_1) = 1$; 
    \item $w_{p}$(\texttt{Dominant}) = 0.35 and $w_{p}$(\texttt{Playful}) = 0.35;
    \item $w_{p}$(\texttt{dog}) = 0.3.
\end{itemize}

\paragraph{Argument 2: Mutual Playful Interaction}
\begin{center} \it
    A dog is teasing a monkey at the zoo, and the monkey is teasing the dog at the zoo. When teasing is mutual, it suggests a playful relationship. So the dog and monkey have a playful interaction.
\end{center}
\textbf{Premises}: \textit{``A dog is teasing a monkey at the zoo, and the monkey is teasing the dog at the zoo. When teasing is mutual, it suggests a playful relationship.''} can be formalized $\supp(A_2) = \{\alpha_{5},\alpha_{6},\alpha_{7},\alpha_{8},\alpha_{9}\}$ with:
\begin{itemize}
    \item $\alpha_{5} = $ \texttt{AtLocation(dog, zoo)}
    \item $\alpha_{6} = $ \texttt{AtLocation(monkey, zoo)}
    \item $\alpha_{7} = $ $\neg$ \texttt{Tease(x, y)}  $\lor$ $\neg$ \texttt{Tease(y, x)} $\lor$ \texttt{PlayfulInteraction(x, y)}
    \item $\alpha_{8} = $ \texttt{Tease(dog, monkey)}
    \item $\alpha_{9} = $ \texttt{Tease(monkey, dog)}
\end{itemize}

\noindent The weights according to $A_1$ are distributed as follows:
\begin{itemize}
    \item $w_{c}(\alpha_{5}) = w_{c}(\alpha_{6}) = 0.05$;
    \item $w_{c}(\alpha_{7}) = w_{c}(\alpha_{9}) = 0.4$;
    \item $w_{c}(\alpha_{8}) = 0.1$;
    \item $w_{p}$(\texttt{AtLocation}) = 0.05, $w_{p}$(\texttt{Tease}) = 0.05, $w_{p}$($\neg$\texttt{Tease}) = 0.05, $w_{p}$(\texttt{PlayfulInteraction}) = 0.4;
    \item $w_{p}$(\texttt{dog}) = 0.2,
        $w_{p}$(\texttt{monkey}) = 0.2,
        $w_{p}$(\texttt{zoo}) = 0.05,
        $w_{p}$(\texttt{x}) = 0, 
        $w_{p}$(\texttt{y}) = 0
\end{itemize}

\noindent The weights according to $A_3$ are distributed as follows:
\begin{itemize}
    \item $w_{c}(\alpha_{5}) = w_{c}(\alpha_{6})= 0.05$;
    \item $w_{c}(\alpha_{7}) = w_{c}(\alpha_{8}) = w_{c}(\alpha_{9}) = 0.3$;
    \item $w_{p}$(\texttt{AtLocation}) = 0.05, $w_{p}$(\texttt{Tease}) = 0.05, $w_{p}$($\neg$\texttt{Tease}) = 0.05, $w_{p}$(\texttt{PlayfulInteraction}) = 0.4;
    \item $w_{p}$(\texttt{dog}) = 0.2,
        $w_{p}$(\texttt{monkey}) = 0.2,
        $w_{p}$(\texttt{zoo}) = 0.05,
        $w_{p}$(\texttt{x}) = 0, 
        $w_{p}$(\texttt{y}) = 0.
\end{itemize}

\noindent \textbf{Claim}: \textit{``The dog and monkey have a playful interaction.''} can be formalized $\conc(A_2) = \{\beta_2\}$ with
\begin{itemize}
    \item $\beta_{2} = $ \texttt{PlayfulInteraction(dog, monkey)}
\end{itemize}
\noindent The weights according to $A_1$ and $A_3$ are distributed as follows:
\begin{itemize}
    \item $w_{c}(\beta_{2}) = 1$;
    \item $w_{p}$(\texttt{PlayfulInteraction}) = 0.7;
    \item $w_{p}$(\texttt{dog}) = 0.15, $w_{p}$(\texttt{monkey}) = 0.15.
\end{itemize}

\paragraph{Argument 3: Social Conflict Interpretation}
\begin{center} \it
    A dog is teasing a monkey at the zoo, and the monkey is teasing the dog at the zoo. Mutual teasing reflects a social conflict. So the dog and the monkey may be engaged in a social conflict.
\end{center}
\textbf{Premises:} \textit{``A dog is teasing a monkey at the zoo, and the monkey is teasing the dog at the zoo. Mutual teasing reflects a social conflict.''} can be formalized $\supp(A_3) = \{\alpha_{10},\alpha_{11},\alpha_{12},\alpha_{13},\alpha_{14}\}$ 
\begin{itemize}
    \item $\alpha_{10} = $ \texttt{AtLocation(dog, zoo)};
    \item $\alpha_{11} = $ \texttt{AtLocation(monkey, zoo)};
    \item $\alpha_{12} = $ $\neg$ \texttt{Tease(x, y)} $\lor$ $\neg$ \texttt{Tease(y, x)} $\lor$ \texttt{SocialConflict(x, y)};
    \item $\alpha_{13} = $ \texttt{Tease(dog, monkey)};
    \item $\alpha_{14} = $ \texttt{Tease(monkey, dog)}.
\end{itemize}

\noindent The weights according to $A_2$ are distributed as follows:
\begin{itemize}
    \item $w_{c}(\alpha_{10}) = w_{c}(\alpha_{11}) = 0.05$;
    \item $w_{c}(\alpha_{12}) = w_{c}(\alpha_{13}) = w_{c}(\alpha_{14}) = 0.3$;
    \item $w_{p}$(\texttt{AtLocation}) = 0.05, $w_{p}$(\texttt{Tease}) = 0.05, $w_{p}$($\neg$\texttt{Tease}) = 0.05, $w_{p}$(\texttt{SocialConflict}) = 0.4;
    \item $w_{p}$(\texttt{dog}) = 0.2,
        $w_{p}$(\texttt{monkey}) = 0.2,
        $w_{p}$(\texttt{zoo}) = 0.05,
        $w_{p}$(\texttt{x}) = 0, 
        $w_{p}$(\texttt{y}) = 0.
\end{itemize}

\noindent \textbf{Claim}: \textit{``So the dog and monkey may be engaged in a social conflict.''} can be formalized as $\conc(A_3) = \{\beta_3\}$ with
\begin{itemize}
    \item $\beta_3 = $ \texttt{SocialConflict(dog, monkey)}
\end{itemize}
\noindent The weights according to $A_2$ are distributed as follows:
\begin{itemize}
    \item $w_{c}(\beta_{3}) = 1$;
    \item $w_{p}$(\texttt{SocialConflict}) = 0.7;
    \item $w_{p}$(\texttt{dog}) = 0.15, $w_{p}$(\texttt{monkey}) = 0.15.
\end{itemize}

\paragraph{Similarity Between Argument Components}

We compute the similarity between arguments by independently comparing their premises and claims. The details of all calculations are provided in the code included in the supplementary material. \\

\noindent Let us first recall the exact similarity model used in this experiment: $\bM^{\dice,\avg}_{sb,0.8} = \tuple{\simP, \simL, \simC, \simS}$ where
\begin{itemize}
    \item $\simP = \simsbert$;
    \item $\simL = \simL^{\text{weight},0.8}$; 
    \item $\simC = \text{Tve}^{\dice,\oplus^{\max}_{\simL}}$;
    \item $\simS = \simS^{\bm}$, where 
    $\simC^{\text{flat}}$ uses $\simP = \simsbert$ within $\simL^{\text{flat}, 0.8}$.
\end{itemize}

\begin{center}
    \textbf{Argument 1 vs Argument 2}
\end{center} 
We first compute the similarity between the premises of $A_1$ and $A_2$.
$$\simS(\supp(A_1), \supp(A_2)) = 0.795$$
\begin{figure}[h!]
    \centering
    \includegraphics[width=0.95\linewidth]{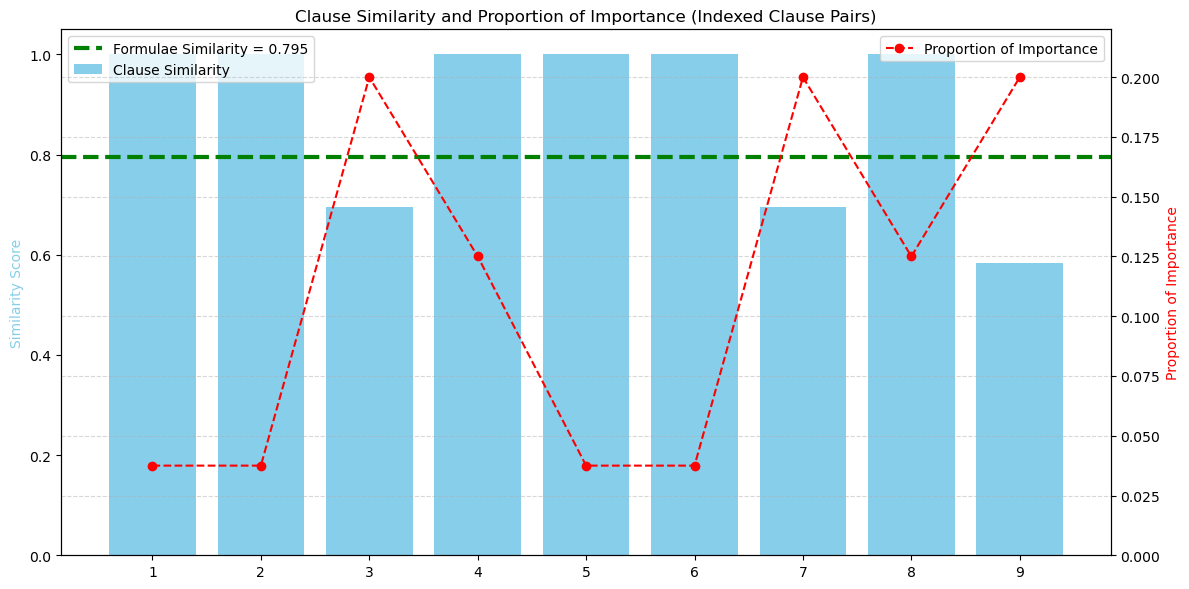}
    \caption{Similarity on the premises of $A_{1}$ vs $A_{2}$ where $1 = \alpha_1$, $2 = \alpha_2$, $3 = \alpha_3$, $4 = \alpha_4$, $5 = \alpha_5$, $6 = \alpha_6$, $7 = \alpha_7$, $8 = \alpha_8$ and $9 = \alpha_9$ on the x-axis.}
    \end{figure}

\noindent We compute the similarity between the claims of $A_1$ and $A_2$.
$$\simS(\conc(A_1), \conc(A_2)) = 0.757$$
\begin{figure}[h!]
    \centering
    \includegraphics[width=0.95\linewidth]{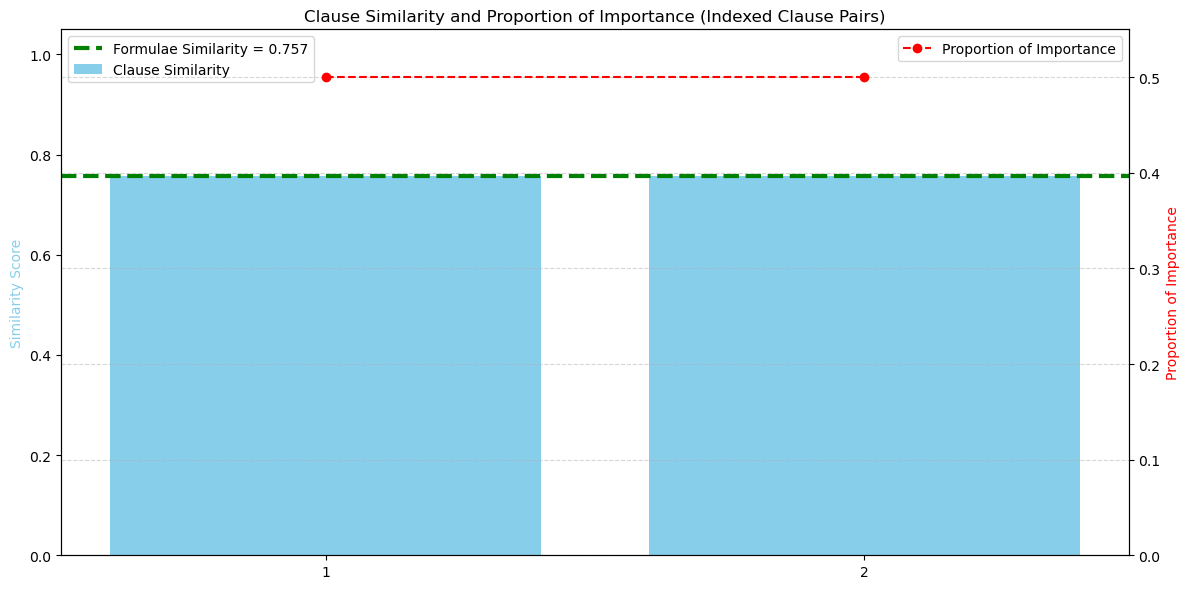}
    \caption{Similarity on the claims of $A_1$ vs $A_2$ where $1 = \beta_1$ and $2 = \beta_2$ on the x-axis.}
\end{figure}
    
\noindent Finally, we compute the similarity between $A_1$ and $A_2$ using two values of $\eta \in \{0.2,0.5\}$:
    $$\similarity\mathtt{Arg}^{\bM}_{0.5}(A_1, A_2) = 0.5 \times 0.795 + 0.5 \times 0.757 = \textbf{0.776}$$
    $$\similarity\mathtt{Arg}^{\bM}_{0.2}(A_1, A_2) = 0.2 \times 0.795 + 0.8 \times 0.757 = \textbf{0.7646}$$
    
Since the similarity scores between the premises and the claims of the two arguments are close, changing the value of $\eta$ will not have a significant impact on the similarity between $A_1$ and $A_2$ (see Figure \ref{fig:influence-eta-appendix}).

\begin{figure}[htb]
\centering
\begin{tikzpicture}[domain=0:1,scale=4]
  \draw[step=0.2,very thin,color=gray] (-0.05,-0.05) grid (1,1);

  \draw[->] (0,0) -- (1.1,0) node[right] {$\eta$};
  \draw[->] (0,0) -- (0,1.1) node[above] {$\mathtt{simArg}$};

\node at (0,-0.1) {$0$};
\node at (0.2,-0.1) {$0.2$};
\node at (0.4,-0.1) {$0.4$};
\node at (0.6,-0.1) {$0.6$};
\node at (0.8,-0.1) {$0.8$};
\node at (1,-0.1) {$1$};

\node at (-0.1,0) {$0$};
\node at (-0.1,0.2) {$0.2$};
\node at (-0.1,0.4) {$0.4$};
\node at (-0.1,0.6) {$0.6$};
\node at (-0.1,0.8) {$0.8$};
\node at (-0.1,1) {$1$};

  \draw[color=red]    plot (\x,{\x*0.795 + (1-\x)*0.757}) node[right] {$\mathtt{simArg}(A_1,A_2)$};
  \draw[color=blue]   plot (\x,{\x*0.913 + (1-\x)*0.653})    node[right] {$\mathtt{simArg}(A_2,A_3)$};
\end{tikzpicture}
\caption{Influence of $\eta$ on argument similarity: higher values prioritize the support, lower values the claim. \label{fig:influence-eta-appendix}}
\end{figure}
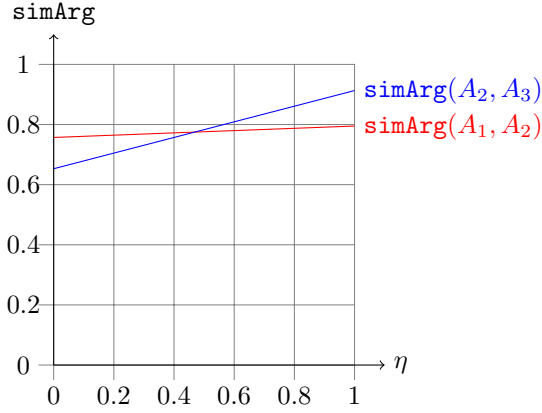

\begin{center}
    \textbf{Argument 2 vs Argument 3}
\end{center} 
We first compute the similarity between the premises of $A_2$ and $A_3$.
$$\simS(\supp(A_2), \supp(A_3)) = 0.913$$
\begin{figure}[h!]
    \centering
    \includegraphics[width=0.95\linewidth]{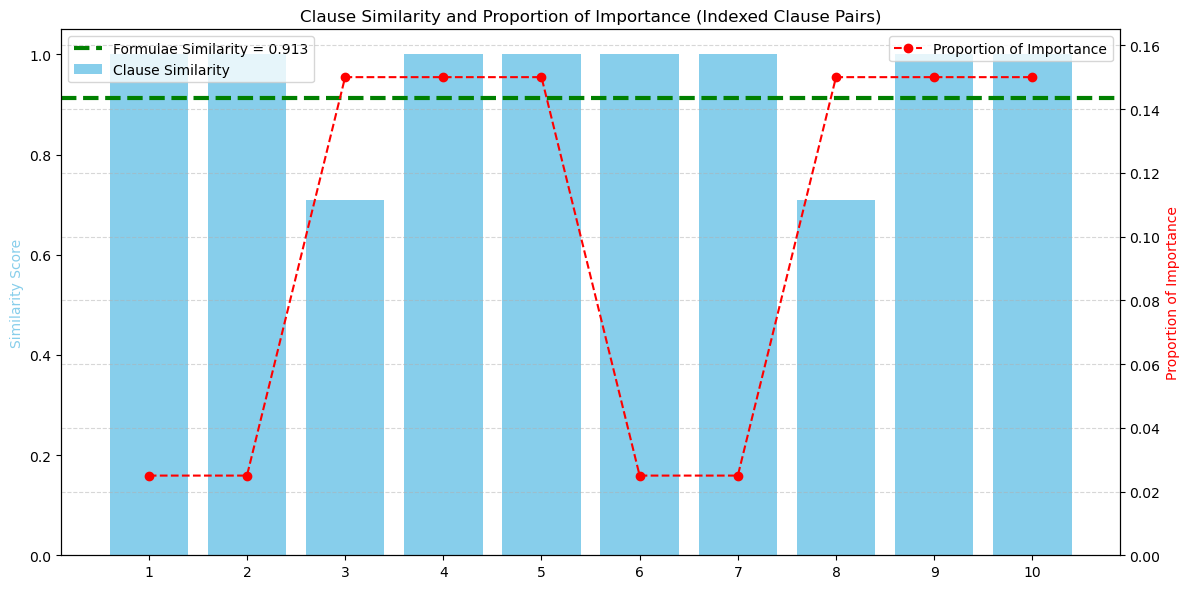}
    \caption{Similarity on the premises of $A_2$ vs $A_3$ where $1 = \alpha_5$, $2 = \alpha_6$, $3 = \alpha_7$, $4 = \alpha_8$, $5 = \alpha_9$, $6 = \alpha_{10}$, $7 = \alpha_{11}$, $8 = \alpha_{12}$, $9 = \alpha_{13}$ and $10 = \alpha_{14}$ on the x-axis.}
\end{figure}

\noindent We compute the similarity between the claims of $A_2$ and $A_3$.
$$\simS(\conc(A_2), \conc(A_3)) = 0.653$$
\begin{figure}[h!]
    \centering
    \includegraphics[width=0.95\linewidth]{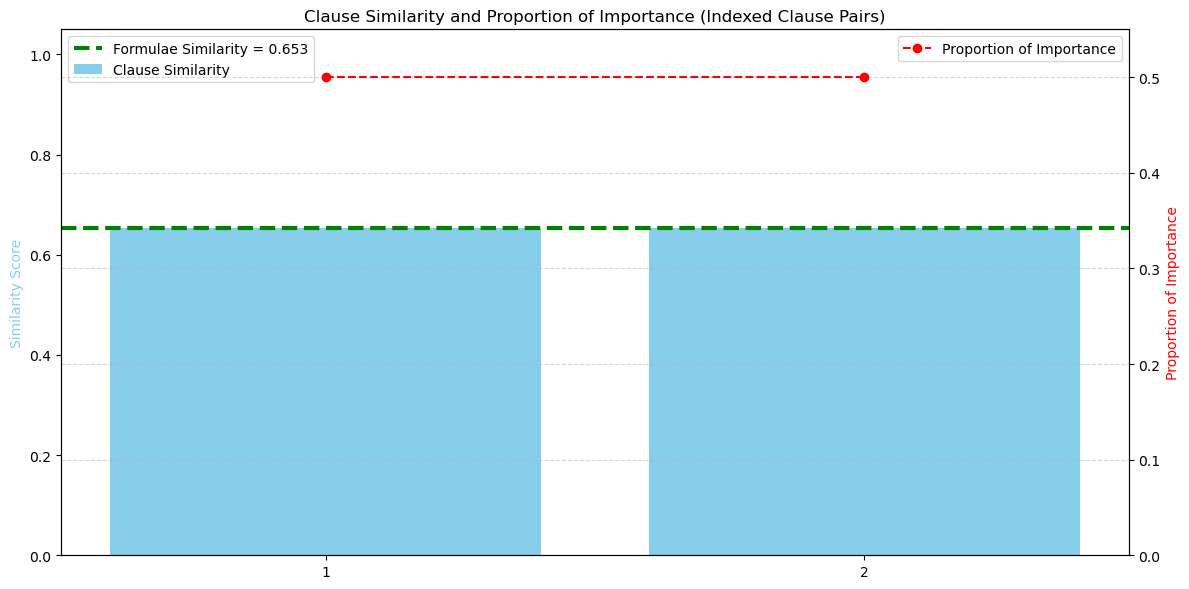}
    \caption{Similarity on the claims of $A_2$ vs $A_3$ where $1 = \beta_2$ and $2 = \beta_3$ on the x-axis.}
\end{figure}
    
\noindent Finally, we compute the similarity between $A_1$ and $A_2$ using two values of $\eta \in \{0.2,0.5\}$:
    $$\similarity\mathtt{Arg}^{\bM}_{0.5}(A_2, A_3) = 0.5 \times 0.913 + 0.5 \times 0.653 = \textbf{0.783}$$
    $$\similarity\mathtt{Arg}^{\bM}_{0.2}(A_2, A_3) = 0.2 \times 0.913 + 0.8 \times 0.653 = \textbf{0.705} $$

This example shows that our logic-based similarity framework can effectively compare structured arguments. It highlights how the aggregation of clause-level similarities, modulated by importance weights, can capture nuanced differences in argumentative structure and conclusions.

In future work, we plan to further refine the weighting schemes and investigate task-specific strategies for argument-level aggregation.\\

\vspace{0.5cm}

{\Large \noindent \textbf{Proofs.}}

\begin{proof}[\textbf{Proof (Theorem \ref{theo:wbMax})}]
\upshape{
Let a similarity model $\bM = \tuple{\simP, \simL, \simC, \simS}$ such that:
\begin{enumerate}
    \item $\simP(t,t) = 1$.
    \item $\simL((P, \vec{a}), (Q, \vec{b})) = 1$ if $\simP(P,Q) = 1$, $\card{\vec{a}} = \card{\vec{b}}$, and $\forall a_i \in \vec{a}, \forall b_i \in \vec{b}$, $\simP(a_i,b_i) = 1$.
    \item $\simC(C_1,C_2) = 1$ if $\forall l_1 \in C_1,\exists l_2 \in C_2$ s.t. $\simL(l_1,l_2) = 1$ and $\forall l_2 \in C_2,\exists l_1 \in C_1$ s.t. $\simL(l_2,l_1) = 1$.
    \item $\simS(\Phi_1,\Phi_2) = 1$ if $\forall C_1 \in \Phi_1,\exists C_2 \in \Phi_2$ s.t. $\simC(C_1, C_2) = 1$ and $\forall C_2 \in \Phi_2,\exists C_1 \in \Phi_1$ s.t. $\simC(C_2, C_1) = 1$.
\end{enumerate}

From conditions 1 and 2, we obtain: $\simL((P, \vec{a}), (P, \vec{a})) = 1$.
Together with condition 3, this implies: $\simC(C_1, C_1) = 1$.
Finally, applying condition 4, we deduce: $\simS(\Phi_1, \Phi_1) = 1$.

According to Definition~\ref{def:smArguments}, for any FOL arguments $A$ and $B$, the argument similarity measure is defined as: $\similarity\mathtt{Arg}^{\bM}_{\eta}(A, B) =$\\
$ \eta \times \simS(\supp(A^{\mathtt{c}}), \supp(B^{\mathtt{c}})) + (1 - \eta) \times \simS(\conc(A^{\mathtt{c}}), \conc(B^{\mathtt{c}})).$\\
Since $\simS(\supp(A^{\mathtt{c}}), \supp(A^{\mathtt{c}})) = 1$ and $\simS(\conc(A^{\mathtt{c}}), \conc(A^{\mathtt{c}})) = 1$, it follows that for any $\eta \in (0, 1)$,
\[
\similarity\mathtt{Arg}^{\bM}_{\eta}(A, A) = 1,
\]
i.e., $\similarity\mathtt{Arg}^{\bM}_{\eta}$ satisfies the principle \textbf{Maximality}.

}
\end{proof}


\begin{proof}[\textbf{Proof (Theorem \ref{theo:wbSym})}]
\upshape{

Let a similarity model $\bM = \tuple{\simP, \simL, \simC, \simS}$ such that $\simP, \simL, \simC,$ and $\simS$ are symmetric.

According to Definition~\ref{def:smArguments}, for any FOL arguments $A$ and $B$, the argument similarity measure is defined as: $\similarity\mathtt{Arg}^{\bM}_{\eta}(A, B) =$\\
$ \eta \times \simS(\supp(A^{\mathtt{c}}), \supp(B^{\mathtt{c}})) + (1 - \eta) \times \simS(\conc(A^{\mathtt{c}}), \conc(B^{\mathtt{c}})).$

Given that $\similarity\mathtt{Arg}^{\bM}_{\eta}(A, B)$ is defined in terms of $\simS$, the symmetry constraint on $\simS$ alone would be sufficient to ensure the Symmetry principle. However, as explained in Section~\ref{sec:sim-model} Multi-Level Arguments Similarity, $\simS$ is recursively defined: it relies on $\simC$, which depends on $\simL$, which in turn is built upon $\simP$. Therefore, to guarantee that $\simS$ is symmetric, each component function of the similarity model must also be symmetric.

Under these conditions, it follows that for any $\eta \in (0, 1)$, $\similarity\mathtt{Arg}^{\bM}_{\eta}$ satisfies \textbf{Symmetry}.

}
\end{proof}


\begin{proof}[\textbf{Proof (Theorem \ref{theo:wbSub})}]
\upshape{
Let a similarity model $\bM = \tuple{\simP, \simL, \simC, \simS}$.
Recall that from Definition~\ref{def:smArguments}, for any FOL arguments $A$ and $B$, the argument similarity measure is defined as: $\similarity\mathtt{Arg}^{\bM}_{\eta}(A, B) =$\\
$ \eta \times \simS(\supp(A^{\mathtt{c}}), \supp(B^{\mathtt{c}})) + (1 - \eta) \times \simS(\conc(A^{\mathtt{c}}), \conc(B^{\mathtt{c}})).$

Let $\eta \in (0, 1)$, and $A,B,C \in \Arg$ such that $\simi^{\bM}_{\eta}(A,B) = 1$ then from  Definition~\ref{def:smArguments}, $\simS(\supp(A^{\mathtt{c}}), \supp(B^{\mathtt{c}})) = 1$ and $\simS(\conc(A^{\mathtt{c}}), \conc(B^{\mathtt{c}})) = 1$. 

Let $\bM$ such that if $\simS(\Phi_1, \Phi_2) = 1$ then $\simS(\Phi_1,\Phi_3) = \simS(\Phi_2,\Phi_3)$.
Hence from this condition, 
$\simS(\supp(A^{\mathtt{c}}), \supp(C^{\mathtt{c}})) = \simS(\supp(B^{\mathtt{c}}), \supp(C^{\mathtt{c}}))$ and $\simS(\conc(A^{\mathtt{c}}), \conc(C^{\mathtt{c}})) = \simS(\conc(B^{\mathtt{c}}), \conc(C^{\mathtt{c}}))$.
It follows that 
$\simi(A,C) = \simi(B,C)$, i.e., 
$\similarity\mathtt{Arg}^{\bM}_{\eta}$  satisfies \textbf{Substitution}.
}
\end{proof}


\begin{proof}[\textbf{Proof (Theorem \ref{theo:wbSI})}]

Assume that $\simP(t_1,t_1) = 1$ and $\simP(t_1,t_2) = 0$ for any $t_1, t_2$. Consider four predicates or four terms $t_a, t_b, t_a', t_b'$ such that there is a bijective renaming function $\pi$ with $\pi(t_a) =  t_a'$ and $\pi(t_b) = t_b'$. It is straightforward that $\simP(t_a,t_b) = 1$ if $t_a = t_b$ and $0$ otherwise, and similarly $\simP(t_a',t_b') = 1$ if $t_a' = t_b'$ and $0$ otherwise, so syntax independence is satisfied at the level of predicates and terms.

We know that $\simL$ is based on $\simP$, formally for any literals $l_1 = P_1(\vec{x}), l_2 = P_2(\vec{y})$,  $\simL(l_1, l_2) = f(\simP(P_1,P_2), \{\simP(x_i, y_j) \mid x_i \in \vec{x}, y_j \in \vec{y}\},$ $\{\simP(y_j, x_i) \mid x_i \in \vec{x}, y_j \in \vec{y}\})$ where $P_1(\vec{x})$ and $P_2(\vec{y})$ are respectively the atoms underlying $l_1$ and $l_2$, and $f(\cdot)$ is some function computing the similarity between literals from the similarity between the predicates and terms (where $f(\cdot)$ may ignore some pairs $(x_i, y_j)$ or $(y_j,x_i)$ in the computation).

Now, let $l_a, l_b, l_a', l_b'$ be four literals with $\pi(l_a) = l_a'$ and $\pi(l_b) = l_b'$. Since we already know that syntax independence is satisfied at the level of predicates and terms, we have $\simL(l_a, l_b) = f(\simP(P_a,P_b), \{\simP(x_i, y_j) \mid x_i \in \vec{x}, y_j \in \vec{y}\}, \{\simP(y_j, x_i) \mid x_i \in \vec{x}, y_j \in \vec{y}\}) = f(\simP(P_a',P_b'), \{\simP(x_i', y_j') \mid x_i' \in \vec{x}', y_j' \in \vec{y}'\}, \{\simP(y_j', x_i') \mid x_i' \in \vec{x}', y_j' \in \vec{y}'\}) = \simL(l_a', l_b')$.

Continuing this reasoning, we know that $\simC$ is based on $\simL$, {\em i.e.} the similarity between clauses is based on the similarity between the literals that are in these clauses. Formally, for any clauses $C_1, C_2$, $\simC(C_1, C_2) = g(\{(l_1,l_2) \mid l_1 \in C_1, l_2 \in C_2\})$ where $g(\cdot)$ is some function computing the similarity between clauses from the similarity between all the pairs of literals in the clauses (notice that $g(\cdot)$ may ignore some pairs of literals $(l_1, l_2)$).

Now, given $C_a, C_b, C_a', C_b'$ four clauses such that $\pi(C_a) = C_a'$ and $\pi(C_b) = C_b'$, since syntax independence is satisfied at the level of literals, we know that $\simC(C_1, C_2) = g(\{(l_1,l_2) \mid l_1 \in C_1, l_2 \in C_2\}) = g(\{(l_1',l_2') \mid l_1' \in C_1', l_2' \in C_2'\}) = \simC(C_1', C_2')$.

Next, this reasoning can be adapted to the computation of the similarity between sets of clauses: the similarity $\simS$ between two sets of clauses is based on the similarity between the clauses in these sets. Formally, for any $\Phi_1, \Phi_2$, $\simS(\Phi_1, \Phi_2) = h(\{(C_1, C_2) \mid C_1 \in \Phi_1, C_2 \in \Phi_2\})$ where $h(\cdot)$ is a function computing the similarity between two sets of clauses from the similarity between all the pairs of clauses in these sets (again, some pairs $(C_1,C_2)$ may be ignored by the function $h(\cdot)$).

Now, considering four sets of clauses $\Phi_a, \Phi_b, \Phi_a', \Phi_b'$ such that $\pi(\Phi_a) = \Phi_a'$ and $\pi(\Phi_b) = \Phi_b'$, we have $\simS(\Phi_1, \Phi_2) = h(\{(C_1, C_2) \mid C_1 \in \Phi_1, C_2 \in \Phi_2\}) = h(\{(C_1', C_2') \mid C_1' \in \Phi_1', C_2' \in \Phi_2'\}) = \simS(\Phi_1', \Phi_2')$, so syntax independence is satisfied at the level of sets of clauses.

To prove that the principle is satisfied at the level of arguments, we simply need to apply Definition~\ref{def:smArguments}. For any arguments $A, B, A', B'$ such that $\pi(A) = A'$ and $\pi(B) = B'$,
\[
\begin{array}{rcl}
    \similarity\mathtt{Arg}^{\bM}_{\eta}(A, B) & = & \eta \times \simS(\supp(A^{\mathtt{c}}), \supp(B^{\mathtt{c}})) \\
    & & + (1-\eta) \times \simS(\conc(A^{\mathtt{c}}), \conc(B^{\mathtt{c}}))\\
    & = & \eta \times \simS(\supp(A^{\mathtt{c}'}), \supp(B^{\mathtt{c}'})) \\
    & & + (1-\eta) \times \simS(\conc(A^{\mathtt{c}'}), \conc(B^{\mathtt{c}'})) \\
    & = & \similarity\mathtt{Arg}^{\bM}_{\eta}(A', B') 
\end{array}
\]
This concludes the proof that \textbf{Syntax Independence} is satisfied by $\similarity\mathtt{Arg}^{\bM}_{\eta}$.

\end{proof}


\begin{proof}[\textbf{Proof (Theorem \ref{theo:wbMin})}]
\upshape{
Let a similarity model $\bM = \tuple{\simP, \simL, \simC, \simS}$. 
Let $A, B \in \Arg$ be two arguments such that  $A^\mathtt{c}, B^\mathtt{c} \in \ArgF$ and:
\begin{enumerate}
    \item[a.] $A^\mathtt{c}$ and $B^\mathtt{c}$ are not trivial,
    \item[b.] $\forall C_1 \in \supp(A^\mathtt{c}), \forall (P_1, \vec{a_1}) \in C_1, \forall a_1^i \in \vec{a_1}, 
    \forall C_2 \in \supp(B^\mathtt{c}), \forall (P_2, \vec{a_2}) \in C_2, \forall a_2^j \in \vec{a_2}$, $\simP(P_1,P_2) = 0$ and $\simP(a^i_1,a^j_2) = 0$,
    \item[c.] $\forall C_1 \in \conc(A^\mathtt{c}), \forall (P_1, \vec{a_1}) \in C_1, \forall a_1^i \in \vec{a_1}, 
    \forall C_2 \in \conc(B^\mathtt{c}), \forall (P_2, \vec{a_2}) \in C_2, \forall a_2^j \in \vec{a_2}$, $\simP(P_1,P_2) = 0$ and $\simP(a^i_1,a^j_2) = 0$.
\end{enumerate}
 
Assume that $\bM$ satisfies the three hypotheses from Theorem~\ref{theo:wbMin}, {\em i.e.}: 
\begin{enumerate}
    \item $\simL((P, \vec{a}), (Q, \vec{b})) = 0$ if $\simP(P,Q) = 0$ and $\forall a_i \in \vec{a},\forall b_j \in \vec{b}$, $\simP(a_i,b_j) = 0$.
    \item {\midsize $\simC(C_1,C_2) = 0$ if $\forall l_1 \in C_1,\forall l_2 \in C_2$, $\simL(l_1,l_2) = 0$.}
    \item $\simS(\Phi_1,\Phi_2) = 0$ if $\forall C_1 \in \Phi_1,\forall C_2 \in \Phi_2$, $\simC(C_1, C_2) = 0$.
\end{enumerate}

From condition a., we know that the supports of both $A^{\mathtt{c}}$ and $B^{\mathtt{c}}$ are non-empty, i.e., $\supp(A^{\mathtt{c}}) \neq \emptyset$ and $\supp(B^{\mathtt{c}}) \neq \emptyset$.

Then, by condition b. (respectively, condition c.), we know that every possible pair of predicates or terms appearing in the clauses of the supports (respectively, of the claims) of $A^{\mathtt{c}}$ and $B^{\mathtt{c}}$ have a similarity of $0$ under $\simP$.

It follows that all pairs of literals formed from the clauses in the supports (respectively, claims) of $A^{\mathtt{c}}$ and $B^{\mathtt{c}}$ also have similarity $0$ under $\simL$, since the literal-level similarity depends solely on the similarity between their predicates and terms (the formal definition of $\simL$ based on the function $f$ using $\simP$ is provided in the proof of Theorem~\ref{theo:wbSI}).

Consequently, every possible pair of clauses between the supports (respectively, claims) of $A^{\mathtt{c}}$ and $B^{\mathtt{c}}$ have similarity $0$ under $\simC$ (the formal definition of $\simC$ based on the function $g$ using $\simL$ is provided in the proof of Theorem~\ref{theo:wbSI}).

Hence, the similarity between the support $\supp(A^{\mathtt{c}})$ and $\supp(B^{\mathtt{c}})$, and between the claims $\conc(A^{\mathtt{c}})$ and $\conc(B^{\mathtt{c}})$, is also $0$ under $\simS$ (the formal definition of $\simS$ based on the function $h$ using $\simC$ is provided in the proof of Theorem~\ref{theo:wbSI}).

Finally, by Definition~\ref{def:smArguments}, since $\simS(\supp(A^{\mathtt{c}}), \supp(B^{\mathtt{c}})) = 0$ and $\simS(\conc(A^{\mathtt{c}}), \conc(B^{\mathtt{c}})) = 0$, it follows that for any $\eta \in (0,1)$,
\[
\similarity\mathtt{Arg}^{\bM}_{\eta}(A, B) = 0.
\]
Thus, $\similarity\mathtt{Arg}^{\bM}_{\eta}$ satisfies the principle \textbf{Minimality}.
}
\end{proof}

\begin{proof}[\textbf{Proof (Theorem \ref{theo:wbNZ})}]
Let $A, B \in \Arg$ be two arguments such that $A^\mathtt{c}, B^\mathtt{c} \in \ArgF$ and:
\begin{itemize}
    \item[a.] $A^\mathtt{c}$ and $B^\mathtt{c}$ are not trivial;
    \item[b.] There exist $C_1 \in \supp(A^\mathtt{c})$ and $C_2 \in \supp(B^\mathtt{c})$ such that $\exists (P_1, \vec{a_1}) \in C_1$ and $(P_2, \vec{a_2}) \in C_2$ satisfying:
    \begin{itemize}
        \item either $\simP(P_1, P_2) > 0$; 
        \item or there exists $i$ such that $\simP(a^i_1, a^i_2) > 0$. 
    \end{itemize}
\end{itemize}

Assuming that $\bM$ satisfies the three conditions of Theorem~\ref{theo:wbNZ}, we proceed as follows:

\begin{enumerate}
    \item By condition 1 and fact (b), we have $\simL((P_1, \vec{a_1}), (P_2, \vec{a_2})) > 0$.
    
    \item By condition 2, it follows that $\simC(C_1, C_2) > 0$.

    \item By condition 3, since we assume $w_c(C_1), w_c(C_2) > 0$, it follows that:
    \[
    \simS(\supp(A^\mathtt{c}), \supp(B^\mathtt{c})) > 0
    \]
\end{enumerate}

Finally, by Definition~\ref{def:smArguments}, this implies:
\[
\similarity\mathtt{Arg}^{\bM}_{\eta}(A,B) > 0
\]
which concludes the proof that $\similarity\mathtt{Arg}^{\bM}_{\eta}$ satisfies \textbf{Non-Zero} under the stated conditions.
\end{proof}

\begin{proof}[\textbf{Proof (Theorem \ref{theo:wbMon})}]
\upshape{
Let a similarity model $\bM = \tuple{\simP, \simL, \simC, \simS}$, two set of clauses $\Phi, \Psi$ and two clauses $C_0,C_1$, such that  
$\simS(\Phi \cup \{C_0\},\Psi) \leq \simS(\Phi,\Psi) \leq \simS(\Phi \cup \{C_1\},\Psi)$ when $\max_{C \in \Psi} \simC(C,C_0) = 0$ and $\max_{C \in \Psi}  \simC(C,C_1) = 1$.

Let $A, B, C \in \Arg$ such that $A^\mathtt{c}, B^\mathtt{c}, C^\mathtt{c} \in \ArgF$, and: 
\begin{enumerate}
    \item $\supp(B^\mathtt{c}) = \supp(A^\mathtt{c}) \cup \{\beta\}$ (where $\beta$ is a new clause),
    \item $\simS(\conc(A^\mathtt{c}),\conc(C^\mathtt{c})) = \simS(\conc(B^\mathtt{c}),\conc(C^\mathtt{c}))$; and  
\end{enumerate}
$\bullet$~(\textbf{S-Monotony$^0$}) if $\forall \alpha \in \supp(C^\mathtt{c})$, $\simC(\alpha,\beta) = 0$, then $\simi(A,C) \geq \simi(B,C)$; \\

Assume the condition $\forall \alpha \in \supp(C^{\mathtt{c}}),\ \simC(\alpha, \beta) = 0$ implies that $\max_{C \in \Psi} \simC(C, C_0) = 0$, where $\Psi = \supp(C^{\mathtt{c}})$ and $C_0 = \beta$.

As a result, we have:
\[
\simS(\Phi \cup \{C_0\}, \Psi) \leq \simS(\Phi, \Psi),
\]
where $\Phi = \supp(A^{\mathtt{c}})$ and $\supp(B^{\mathtt{c}}) = \Phi \cup \{C_0\}$.  
Hence,
\[
\simS(\supp(B^{\mathtt{c}}), \supp(C^{\mathtt{c}})) \leq \simS(\supp(A^{\mathtt{c}}), \supp(C^{\mathtt{c}})).
\]

From condition 2., since the claims of $A$ and $B$ have the same similarity with the claim of $C$, the overall similarity between arguments $A$ and $C$, and between $B$ and $C$, depends solely on the similarity between their supports.

Hence, by Definition~\ref{def:smArguments}, for any $\eta \in (0,1)$, we conclude:
\[
\similarity\mathtt{Arg}^{\bM}_{\eta}(A, C) \geq \similarity\mathtt{Arg}^{\bM}_{\eta}(B, C).
\]
which satisfies the principle S-Monotony$^0$.\\

\noindent $\bullet$~(\textbf{S-Monotony$^1$}) if $\exists \alpha \in \supp(C^\mathtt{c})$ s.t.  $\simC(\alpha,\beta) = 1$, then $\simi(A,C) \leq \simi(B,C)$.

Assume there exists a clause $\alpha \in \supp(C^{\mathtt{c}})$ such that $\simC(\alpha, \beta) = 1$, where $\beta = C_1$ is the clause added to the support of $A^{\mathtt{c}}$ to form $B^{\mathtt{c}}$.

Let $\Phi = \supp(A^{\mathtt{c}})$ and $\Psi = \supp(C^{\mathtt{c}})$, with $\supp(B^{\mathtt{c}}) = \Phi \cup \{C_1\}$.

Since there exists $\alpha \in \Psi$ such that $\simC(\alpha,C_1) = 1$, we have:
\[
\max_{C \in \Psi} \simC(C, C_1) = 1.
\]

As a result, the addition of $C_1$ to $\Phi$ can only increase or preserve the similarity score with respect to $\Psi$, hence:
\[
\simS(\Phi \cup \{C_1\}, \Psi) \geq \simS(\Phi, \Psi),
\]
which implies:
\[
\simS(\supp(B^{\mathtt{c}}), \supp(C^{\mathtt{c}})) \geq \simS(\supp(A^{\mathtt{c}}), \supp(C^{\mathtt{c}})).
\]

From condition 2., since the claims of $A$ and $B$ have the same similarity with the claim of $C$, the overall similarity between arguments $A$ and $C$, and between $B$ and $C$, depends solely on the similarity between their supports.

Therefore, by Definition~\ref{def:smArguments}, for any $\eta \in (0,1)$, we conclude:
\[
\similarity\mathtt{Arg}^{\bM}_{\eta}(A, C) \leq \similarity\mathtt{Arg}^{\bM}_{\eta}(B, C),
\]
which satisfies the principle S-Monotony$^1$.\\

Let us see the similar cases with the conditions of the C-Monotony.

\begin{enumerate}
    \item $\conc(B^\mathtt{c}) = \conc(A^\mathtt{c}) \cup \{\beta\}$ (where $\beta$ is a new clause),
    \item $\simS(\supp(A^\mathtt{c}),\supp(C^\mathtt{c})) = \simS(\supp(B^\mathtt{c}),\supp(C^\mathtt{c}))$; and either 
\end{enumerate}
$\bullet$~(\textbf{C-Monotony$^0$}) if $\forall \alpha \in \conc(C^\mathtt{c})$, $\simC(\beta,\alpha) = 0$, then $\simi(A,C) \geq \simi(B,C)$; or \\
\noindent $\bullet$~(\textbf{C-Monotony$^1$}) if $\exists \alpha \in \conc(C^\mathtt{c})$ s.t. $\simC(\beta,\alpha) = 1$, then $\simi(A,C) \leq \simi(B,C)$.

The satisfaction of the \textbf{C-Monotony}$^0$ and \textbf{C-Monotony}$^1$ principles follows from the same reasoning as for \textbf{S-Monotony}$^0$ and \textbf{S-Monotony}$^1$, by structural symmetry. 

Indeed, the similarity $\similarity\mathtt{Arg}^{\bM}_{\eta}$ is defined as a  combination of the similarity between supports and the similarity between claims. In the present case, conditions ensure that the supports of $A^{\mathtt{c}}$ and $B^{\mathtt{c}}$ are fixed and equal in their similarity to $C^{\mathtt{c}}$.

Hence, the variation in $\similarity\mathtt{Arg}^{\bM}_{\eta}(A,C)$ and $\similarity\mathtt{Arg}^{\bM}_{\eta}(B,C)$ depends only on the similarity between the claims. Adding a clause $\beta$ to the claim of $A^{\mathtt{c}}$ to form $B^{\mathtt{c}}$ either strictly decreases similarity when all pairwise clause similarities with $\conc(C^{\mathtt{c}})$ are zero (C-Monotony$^0$), or increases or preserves it when at least one such similarity is maximal (C-Monotony$^1$).

Therefore, the \textbf{C-Monotony} principles are satisfied by the similarity model.
}
\end{proof}

\begin{proof}[\textbf{Proof (Theorem~\ref{theo:wbRein})}]
\upshape{
Let $\bM = \tuple{\simP, \simL, \simC, \simS}$ be a similarity model satisfying the following property:

For any sets of clauses $\Phi$ and $\Psi$, and for any clauses $C_A$ and $C_B$, if:
\[
\forall C \in \Psi,\ \simC(C_A,C) \geq \simC(C_B,C),
\]
then:
\[
\simS(\Phi \cup \{C_A\}, \Psi) \geq \simS(\Phi \cup \{C_B\}, \Psi),
\]
and if additionally:
\[
\exists C \in \Psi \text{ such that } \simC(C_A,C) > \simC(C_B,C),
\]
then:
\[
\simS(\Phi \cup \{C_A\}, \Psi) > \simS(\Phi \cup \{C_B\}, \Psi).
\]

We now prove that this condition implies the satisfaction of the four reinforcement principles for $\similarity\mathtt{Arg}^{\bM}_{\eta}$.

\medskip

\textbf{S-Reinforcement$^\geq$}.

Let $A, B, C \in \Arg$ such that $A^{\mathtt{c}}, B^{\mathtt{c}}, C^{\mathtt{c}} \in \ArgF$, and:
\begin{enumerate}
    \item $\supp(A^{\mathtt{c}}) = \Phi \cup \{C_A\}$,
    \item $\supp(B^{\mathtt{c}}) = \Phi \cup \{C_B\}$,
    \item $\simS(\conc(A^{\mathtt{c}}), \conc(C^{\mathtt{c}})) = \simS(\conc(B^{\mathtt{c}}), \conc(C^{\mathtt{c}}))$,
    \item $\forall \psi \in \supp(C^{\mathtt{c}})$, $\simC(C_A, \psi) \geq \simC(C_B, \psi)$.
\end{enumerate}

Let $\Psi = \supp(C^{\mathtt{c}})$.  
By condition 4 which implies the hypothesis of the theorem, we have:
\[
\simS(\Phi \cup \{C_A\}, \Psi) \geq \simS(\Phi \cup \{C_B\}, \Psi).
\]

Since the claims of $A$ and $B$ have the same similarity to $C$, and $\similarity\mathtt{Arg}^{\bM}_{\eta}$ is computed as a combination of support and claim similarities (Definition~\ref{def:smArguments}), it follows that:
\[
\similarity\mathtt{Arg}^{\bM}_{\eta}(A, C) \geq \similarity\mathtt{Arg}^{\bM}_{\eta}(B, C),
\]
thus satisfying \textbf{S-Reinforcement$^\geq$}.

\medskip

\textbf{S-Reinforcement$^>$}.

If in addition we have $\exists \psi \in \Psi$ such that $\simC(C_A, \psi) > \simC(C_B, \psi)$, then again by hypothesis:
\[
\simS(\Phi \cup \{C_A\}, \Psi) > \simS(\Phi \cup \{C_B\}, \Psi),
\]
and hence:
\[
\similarity\mathtt{Arg}^{\bM}_{\eta}(A, C) > \similarity\mathtt{Arg}^{\bM}_{\eta}(B, C),
\]
which satisfies \textbf{S-Reinforcement$^>$}.

\medskip

\textbf{C-Reinforcement$^\geq$}.

Now consider the symmetric case where the difference lies in the claims.

Let:
\begin{itemize}
    \item $\conc(A^{\mathtt{c}}) = \Phi \cup \{C_A\}$,
    \item $\conc(B^{\mathtt{c}}) = \Phi \cup \{C_B\}$,
    \item $\simS(\supp(A^{\mathtt{c}}), \supp(C^{\mathtt{c}})) = \simS(\supp(B^{\mathtt{c}}), \supp(C^{\mathtt{c}}))$,
    \item $\forall \psi \in \conc(C^{\mathtt{c}})$, $\simC(C_A, \psi) \geq \simC(C_B, \psi)$.
\end{itemize}

Let $\Psi = \conc(C^{\mathtt{c}})$.  
By the same hypothesis:
\[
\simS(\Phi \cup \{C_A\}, \Psi) \geq \simS(\Phi \cup \{C_B\}, \Psi).
\]

Since the supports are equivalent in similarity to $C$, the result again follows:
\[
\similarity\mathtt{Arg}^{\bM}_{\eta}(A, C) \geq \similarity\mathtt{Arg}^{\bM}_{\eta}(B, C),
\]
thus satisfying \textbf{C-Reinforcement$^\geq$}.

\medskip

\textbf{C-Reinforcement$^>$}.

If in addition $\exists \psi \in \Psi$ such that $\simC(C_A, \psi) > \simC(C_B, \psi)$, then:
\[
\simS(\Phi \cup \{C_A\}, \Psi) > \simS(\Phi \cup \{C_B\}, \Psi),
\]
and consequently:
\[
\similarity\mathtt{Arg}^{\bM}_{\eta}(A, C) > \similarity\mathtt{Arg}^{\bM}_{\eta}(B, C),
\]
thus satisfying \textbf{C-Reinforcement$^>$}.
}
\end{proof}

\begin{proof}[\textbf{Proof (Theorem \ref{theo:satisfaction-principles-measures})}]
\upshape{
    Let us start by recalling the two families of instantiations of our similarity model: the first one is $\bM^{x,w_g}_{sb,\lambda} = \tuple{\simsbert, \simL, \simC, \simS}$, and the second one is $\bM^{x,w_g}_{eq,\lambda} = \tuple{\simP^{eq}, \simL, \simC, \simS}$, where:
\begin{itemize}
    \item $\simL = \simL^{\text{weight},\lambda}$ with $\lambda \in (0,1)$, and $\simP$ is set to $\simsbert$ for $\bM^{x,w_g}_{sb,\lambda}$ and to $\simP^{eq}$ for $\bM^{x,w_g}_{eq,\lambda}$; 
    \item $\simC = \text{Tve}^{x,\oplus^{\max}_{\simL}}$ with $x \in \{\jacc, \dice, \soren, \ander, \sok\}$;
    \item $\simS = \simS^{\bm}$, where 
    $\simC^{\text{flat}}$ uses $\simP = \simsbert$ for $\bM^{x,w_g}_{sb,\lambda}$ and $\simP = \simP^{eq}$ for $\bM^{x,w_g}_{eq,\lambda}$ within $\simL^{\text{flat}, \lambda}$, and $g$ is either average ($\mathtt{avg}$) or product ($\Pi$).
\end{itemize}

By proving that the first eight theorems are satisfied by our families of similarity models, we establish the satisfaction of the corresponding principles.

\begin{enumerate}
    \item $\similarity\mathtt{Arg}^{\bM^{x,w_g}_{eq,\lambda}}_{\eta}$ and $\similarity\mathtt{Arg}^{\bM^{x,w_g}_{sb,\lambda}}_{\eta}$ satisfy \textbf{Maximality}:
\begin{enumerate}
    \item $\simP(t,t) = 1$.\\
    By definition $\simP^{eq}(t,t) = 1$. \\

    By definition, $\simsbert(t_1, t_2)$ computes the cosine similarity between the SBERT embeddings of $t_1$ and $t_2$, denoted by $\vec{e}_1$ and $\vec{e}_2$. That is:
    \[
    \simsbert(t_1, t_2) = \cos(\vec{e}_1, \vec{e}_2) = \frac{\vec{e}_1 \cdot \vec{e}_2}{\|\vec{e}_1\| \cdot \|\vec{e}_2\|}
    \]
    If $t_1 = t_2 = t$, then their embeddings are identical: $\vec{e}_1 = \vec{e}_2 = \vec{e}$. Therefore:
    \[
    \simsbert(t, t) = \frac{\vec{e} \cdot \vec{e}}{\|\vec{e}\| \cdot \|\vec{e}\|} = \frac{\|\vec{e}\|^2}{\|\vec{e}\|^2} = 1
    \]
    Hence, the similarity score between two identical texts using SBERT is equal to 1.
    
    \item $\simL((P, \vec{a}), (Q, \vec{b})) = 1$ if $\simP(P,Q) = 1$, $\card{\vec{a}} = \card{\vec{b}}$, and $\forall a_i \in \vec{a}, \forall b_i \in \vec{b}$, $\simP(a_i,b_i) = 1$.

    See Lemma 2 for its proof.
    
    \item $\simC(C_1,C_2) = 1$ if $\forall l_1 \in C_1,\exists l_2 \in C_2$ s.t. $\simL(l_1,l_2) = 1$ and $\forall l_2 \in C_2,\exists l_1 \in C_1$ s.t. $\simL(l_2,l_1) = 1$.

    See Lemma 3 for its proof.
    
    \item $\simS(\Phi_1,\Phi_2) = 1$ if $\forall C_1 \in \Phi_1,\exists C_2 \in \Phi_2$ s.t. $\simC(C_1, C_2) = 1$ and $\forall C_2 \in \Phi_2,\exists C_1 \in \Phi_1$ s.t. $\simC(C_2, C_1) = 1$.

    See Lemma 4 for its proof.
    
 \end{enumerate}

    \item $\similarity\mathtt{Arg}^{\bM^{x,w_g}_{eq,\lambda}}_{\eta}$ and $\similarity\mathtt{Arg}^{\bM^{x,w_g}_{sb,\lambda}}_{\eta}$ satisfy \textbf{Substitution}:
    For $\similarity\mathtt{Arg}^{\bM^{x,w_g}_{eq,\lambda}}_{\eta}$, this directly follows from the characterization established in Theorem~\ref{theo:=1}, which states that a similarity score of~1 occurs only between equivalent arguments, and the fact that we use a normalized language $\cF$ for the compiled CNF arguments, leading to identical arguments when they are equivalent. 
    Specifically, if $\similarity\mathtt{Arg}^{\bM^{x,w_g}_{eq,\lambda}}_{\eta}(A, B) = 1$, then it must be that $A = B$. Consequently, for any argument $C$, we have:
    \[
    \similarity\mathtt{Arg}^{\bM^{x,w_g}_{eq,\lambda}}_{\eta}(A, C) = \similarity\mathtt{Arg}^{\bM^{x,w_g}_{eq,\lambda}}_{\eta}(B, C)
    \]
    as a direct result of substitutivity of identical arguments.\\
    
    For $\similarity\mathtt{Arg}^{\bM^{x,w_g}_{sb,\lambda}}_{\eta}$, we start from the theorem's condition and examine how each sub-level satisfies it:
    \begin{enumerate}
    \item {\midsize if $\simS(\Phi_1, \Phi_2) = 1$ then $\simS(\Phi_1,\Phi_3) = \simS(\Phi_2,\Phi_3)$.}

    From Definition \ref{def:simSbm}: $\simS^{\bm}(\Phi, \Psi) = $
    \begin{center}
    {\large
        $\frac{
    \sum\limits_{(C_1, C_2) \in \bm(\Phi, \Psi)}
     w_g(C_1, C_2) \times \simC(C_1, C_2)
    }{
    \sum\limits_{(C_1, C_2) \in \bm(\Phi, \Psi)}
    w_g(C_1, C_2)
    }$
    }
    \end{center}

    where $\bm$ uses a flat clause similarity measure $\simC^{\text{flat}}$ defining the best matching clause pairs as $\bm(\Phi, \Psi) = $

    {\small
    \[
    \left\{
    \begin{aligned}
    (C_1, C_2) \mid\, & \big(C_1 \in \Phi,\, C_2 = \mathtt{argmax}_{C' \in \Psi} \simC^{\text{flat}}(C_1, C')\big) \\
    \text{or } &  \big(C_1 \in \Psi,\, C_2 = \mathtt{argmax}_{C' \in \Phi} \simC^{\text{flat}}(C_1, C')\big)
    \end{aligned}
    \right\}
    \]
    }
    \noindent and where $w_g$ denotes the contribution  weight of a clause pair, computed using an aggregation function $\agg$:
    \begin{center}
        $w_g(C_1, C_2) = \agg( w_{c}(C_1),w_{c}(C_2)).$
    \end{center}

    Hence, $\simS^{\bm}(\Phi, \Psi) = 1$ iff $\forall (C_1, C_2) \in \bm(\Phi, \Psi)$, $\simC(C_1, C_2) = 1$.
    Thanks to Lemmas 1, 2, and 3, we know that when the flat clause similarity reaches a value of 1, the corresponding weighted clause similarity also equals 1, thereby ensuring that the final similarity score is 1 as well.

    \item {\midsize if $\simC(C_1, C_2) = 1$ then $\simC(C_1,C_3) = \simC(C_2,C_3)$.}\\
    $\simC^{\text{weight}}_{x}(C_1,C_2) =\text{Tve}^{x,\oplus^{\max}_{\simL^{\text{weight},\lambda}}}(C_1,C_2) = 1$ iff
    $C_1 = C_2$ as shown in Lemma 3.
    
    \item if $\simL((P_1, \vec{a_1}), (P_2, \vec{a_2})) = 1$ then $\simL((P_1, \vec{a_1}), (Q, \vec{b})) = \simL((P_2, \vec{a_2}), (Q, \vec{b}))$.\\
    $\simL^{\text{weight},\lambda}((P_1, \vec{a_1}), (P_2, \vec{a_2})) = 1$ iff $\simsbert(P, P) = 1$ and $\similarity_{\text{para}}^{\text{}\lambda}(\vec{a}, \vec{a}) = 1$ as shown in Lemma 2.

    \item if $\simP(t_1,t_2) = 1$ then $\simP(t_1,t_3) = \simP(t_2,t_3)$.\\
    $\simsbert(t_1,t_2) = 1$ iff the embeddings of $t_1$ and $t_2$ are identical, then in this case that $\simsbert(t_1,t_3) = \simsbert(t_2,t_3)$.
    
    \end{enumerate}

    \item $\similarity\mathtt{Arg}^{\bM^{x,w_g}_{eq,\lambda}}_{\eta}$ and $\similarity\mathtt{Arg}^{\bM^{x,w_g}_{sb,\lambda}}_{\eta}$ satisfy \textbf{Symmetry}, as all components of both models ($\simP$, $\simL$, $\simC$, and $\simS$) are symmetric:
    \begin{itemize}
        \item $\simP^{eq}$ and $\simsbert$ are symmetric by definition;
        \item $\simL^{\text{weight},\lambda}$ and $\simL^{\text{flat},\lambda}$ are symmetric for any $\lambda \in (0,1)$, due to the use of $\min$ in $\similarity_{\text{pos}}$ and the best-match function \(\best{x}{\vec{y}}\) in $\similarity_{\text{unord}}$;
        \item $\simC^{\text{weight},\lambda}$ and $\simC^{\text{flat},\lambda}$ are symmetric for any $\lambda \in (0,1)$, as they rely on fuzzy Tversky with symmetric Tversky parameters, i.e., $x \in \{\jacc, \dice, \soren, \ander, \sok\}$, and use $\max$ in the membership function;
        \item $\simS^\bm$ is symmetric due to the symmetric best-matching procedure $\bm$ and the use of symmetric aggregation functions (average or product) in contribution weights.
    \end{itemize}

    \item $\similarity\mathtt{Arg}^{\bM^{x,w_g}_{eq,\lambda}}_{\eta}$ satisfies \textbf{Syntax Independence}, since $\simP^{eq}(t_1, t_1) = 1$ and $\simP^{eq}(t_1, t_2) = 0$ for $t_1 \neq t_2$, making it invariant to lexical variation.

    By contrast, $\similarity\mathtt{Arg}^{\bM^{x,w_g}_{sb,\lambda}}_{\eta}$ does not satisfy \textbf{Syntax Independence}. This is illustrated in the running example with $\texttt{CNF\_T1}$ and $\texttt{CNF\_T2}$: replacing \texttt{monkey} with \texttt{puppy} changes the similarity. Specifically, $\simsbert(\texttt{dog}, \texttt{puppy}) = 0.804$ versus $\simsbert(\texttt{dog}, \texttt{monkey}) = 0.466$, resulting in a different final similarity score, $0.864$ instead of $0.628$. This highlights the model’s sensitivity to lexical choices.

    \item $\similarity\mathtt{Arg}^{\bM^{x,w_g}_{eq,\lambda}}_{\eta}$ and $\similarity\mathtt{Arg}^{\bM^{x,w_g}_{sb,\lambda}}_{\eta}$ satisfy  \textbf{Minimality}:
    \begin{enumerate}
        \item $\simL((P, \vec{a}), (Q, \vec{b})) = 0$ if $\simP(P,Q) = 0$ and $\forall a_i \in \vec{a},\forall b_j \in \vec{b}$, $\simP(a_i,b_j) = 0$.
        
        By definition, the literal-level similarity measure is given by: $\simL^{\text{weight},\lambda}((P, \vec{a}), (Q, \vec{b})) =$ 
    $
    \frac{w_p(P) w_p(Q) \times \simP(P, Q) + \similarity_{\text{para}}^{\lambda}(\vec{a}, \vec{b}) \times (\lambda w_\text{ord} + (1-\lambda) w_\text{unord})}
    {w_p(P) w_p(Q) + (\lambda w_\text{ord} + (1-\lambda) w_\text{unord})}
    $
    
    We now analyze each term under the hypothesis.
    
    \begin{itemize}
        \item Since $\simP(P, Q) = 0$, the first term in the numerator is zero:
        \[
        w_p(P) w_p(Q) \times \simP(P, Q) = 0
        \]
    
        \item For the second term, note that $\similarity_{\text{para}}^{\lambda}(\vec{a}, \vec{b})$ is an aggregation (ordered and unordered) over similarities between arguments using $\simP(a_i, b_j)$. Given that all such values are 0, it follows that:
        \[
        \similarity_{\text{para}}^{\lambda}(\vec{a}, \vec{b}) = 0
        \]
    
        \item Hence, both terms in the numerator are zero, and we obtain:
        \[
        \simL^{\text{weight},\lambda}((P, \vec{a}), (Q, \vec{b})) = \frac{0}{\text{positive denominator}} = 0
        \]
    
        \item The denominator remains strictly positive as it is a sum of positive weights (assuming $w_p(\cdot) > 0$), regardless of the similarity scores.
    \end{itemize}
    
    Therefore, the literal similarity is zero whenever both the predicate similarity and all argument similarities are zero.

        \item {\midsize $\simC(C_1,C_2) = 0$ if $\forall l_1 \in C_1,\forall l_2 \in C_2$, $\simL(l_1,l_2) = 0$.}
        
        From Definition~\ref{def:ftve}, the fuzzy Tversky similarity between two clauses is given by:
    \[
    \text{Tve}^{\alpha,\beta,\oplus^{\agg}_{\simL}}(C_1, C_2) = \frac{A}{A + \alpha B + \beta C}
    \]
    with:
    \begin{align*}
    A &= \frac{1}{2} \left( \sum_{x \in C_1} \oplus^{\agg}_{\simL}(x, C_2) + \sum_{y \in C_2} \oplus^{\agg}_{\simL}(y, C_1) \right) \\
    B &= \sum_{x \in C_1} \left(1 - \oplus^{\agg}_{\simL}(x, C_2)\right) \\
    C &= \sum_{y \in C_2} \left(1 - \oplus^{\agg}_{\simL}(y, C_1)\right)
    \end{align*}
    
    Under the assumption that $\simL(l_1, l_2) = 0$ for all $l_1 \in C_1$, $l_2 \in C_2$, it follows that:
    $
    \forall x \in C_1, \quad \oplus^{\agg}_{\simL}(x, C_2) = 0, 
    \forall y \in C_2, \quad \oplus^{\agg}_{\simL}(y, C_1) = 0
    $
    
    Thus:
    $
    A = \frac{1}{2} (0 + 0) = 0, \quad
    B = \sum_{x \in C_1} 1 = |C_1|, \quad
    C = \sum_{y \in C_2} 1 = |C_2|
    $
    
    Consequently:
    \[
    \text{Tve}^{\alpha,\beta,\oplus^{\agg}_{\simL}}(C_1, C_2) = \frac{0}{\alpha |C_1| + \beta |C_2|} = 0
    \]
        
        \item $\simS(\Phi_1,\Phi_2) = 0$ if $\forall C_1 \in \Phi_1,\forall C_2 \in \Phi_2$, $\simC(C_1, C_2) = 0$. 

        By Definition~\ref{def:simSbm}, the similarity between two sets of clauses is given by: $\simS^{\bm}(\Phi_1, \Phi_2) =$
    \[
     \frac{
    \sum\limits_{(C_1, C_2) \in \bm(\Phi_1, \Phi_2)}
    w_g(C_1, C_2) \times \simC(C_1, C_2)
    }{
    \sum\limits_{(C_1, C_2) \in \bm(\Phi_1, \Phi_2)}
    w_g(C_1, C_2)
    }
    \]
    
    By hypothesis, for all \( C_1 \in \Phi_1 \), \( C_2 \in \Phi_2 \), we have:
    \[
    \simC(C_1, C_2) = 0
    \]
    
    Therefore, regardless of how clause pairs \( (C_1, C_2) \in \bm(\Phi_1, \Phi_2) \) are selected, all similarity values in the numerator of the expression for \( \simS^{\bm} \) are zero:
    \[
    \sum\limits_{(C_1, C_2) \in \bm(\Phi_1, \Phi_2)} w_g(C_1, C_2) \times 0 = 0
    \]
    
    The denominator, being a sum of strictly positive weights (assuming the aggregation function \( \agg \) and weights \( w_c \) are non-zero for non-empty sets), remains non-zero:
    \[
    \sum\limits_{(C_1, C_2) \in \bm(\Phi_1, \Phi_2)} w_g(C_1, C_2) > 0
    \]
    
    Hence:
    \[
    \simS^{\bm}(\Phi_1, \Phi_2) = \frac{0}{\text{positive constant}} = 0
    \]

    \end{enumerate}

    \item 
    $\similarity\mathtt{Arg}^{\bM^{x,w_g}_{eq,\lambda}}_{\eta}$ and $\similarity\mathtt{Arg}^{\bM^{x,w_g}_{sb,\lambda}}_{\eta}$ satisfy \textbf{Non-Zero}, provided the sufficient conditions of Theorem~\ref{theo:wbNZ} hold:

\begin{enumerate}
    \item {\midsize $\simL((P, \vec{a}), (Q, \vec{b})) > 0$ if $w_p(P), w_p(Q) > 0$, and either $\simP(P, Q) > 0$, or there exists a position $i$ such that $a_i \in \vec{a}$, $b_i \in \vec{b}$, $\simP(a_i, b_i) > 0$, and $w_p(a_i), w_p(b_i) > 0$.}

    From the definition of the weighted literal similarity:
    $
    \simL^{\text{weight},\lambda}((P, \vec{a}), (Q, \vec{b})) =$
    $
    \frac{
    w_p(P) w_p(Q) \times \simP(P, Q) + \similarity_{\text{para}}^{\lambda}(\vec{a}, \vec{b}) \times (\lambda w_\text{ord} + (1-\lambda) w_\text{unord})
    }{
    w_p(P) w_p(Q) + (\lambda w_\text{ord} + (1-\lambda) w_\text{unord})
    }
    $

    We distinguish two cases:

    \begin{itemize}
        \item \textbf{Case 1:} $\simP(P, Q) > 0$ and $w_p(P), w_p(Q) > 0$.\\
        The first term in the numerator is strictly positive, and the denominator is also positive, so $\simL > 0$.

        \item \textbf{Case 2:} $\simP(a_i, b_i) > 0$ for some $i$, with $w_p(a_i), w_p(b_i) > 0$.\\
        Then $\similarity_{\text{para}}^\lambda(\vec{a}, \vec{b}) > 0$, and the associated weighted sum $w_\text{ord}$ or $w_\text{unord}$ is also positive. Hence, $\simL > 0$.
    \end{itemize}

    Therefore, under the stated conditions, $\simL^{\text{weight},\lambda}((P, \vec{a}), (Q, \vec{b})) > 0$.

    \item $\simC(C_1, C_2) > 0$ if $\exists l_1 \in C_1$, $l_2 \in C_2$ such that $\simL(l_1, l_2) > 0$.

    From Definition~\ref{def:ftve}:
    \[
    \simC(C_1, C_2) = \frac{A}{A + \alpha B + \beta C}
    \]
    where \( A \) includes membership values such as \(\oplus^{\agg}_{\simL}(l_1, C_2) \geq \simL(l_1, l_2) > 0 \), hence \( A > 0 \), and the denominator is also positive. Thus, $\simC(C_1, C_2) > 0$.

    \item $\simS(\Phi_1, \Phi_2) > 0$ if $\exists C_1 \in \Phi_1$, $C_2 \in \Phi_2$ such that $\simC(C_1, C_2) > 0$ and $w_c(C_1), w_c(C_2) > 0$.

    From Definition~\ref{def:simSbm}:
    $
    \simS^{\bm}(\Phi_1, \Phi_2) =$
    \[ 
    \frac{
    \sum_{(C_1, C_2) \in \bm(\Phi_1, \Phi_2)} w_g(C_1, C_2) \times \simC(C_1, C_2)
    }{
    \sum_{(C_1, C_2) \in \bm(\Phi_1, \Phi_2)} w_g(C_1, C_2)
    }
    \]
    where \( w_g(C_1, C_2) = \agg(w_c(C_1), w_c(C_2)) \).

    If one clause pair satisfies $\simC(C_1, C_2) > 0$ and $w_c(C_1), w_c(C_2) > 0$, then $w_g(C_1, C_2) > 0$ as well. Thus, both numerator and denominator are strictly positive, and we conclude:
    \[
    \simS(\Phi_1, \Phi_2) > 0
    \]
    \end{enumerate}

    \item $\similarity\mathtt{Arg}^{\bM^{x,w_g}_{eq,\lambda}}_{\eta}$ and $\similarity\mathtt{Arg}^{\bM^{x,w_g}_{sb,\lambda}}_{\eta}$ satisfy the principles of \textbf{S-Monotony$^0$}, \textbf{S-Monotony$^1$}, \textbf{C-Monotony$^0$}, and \textbf{C-Monotony$^1$}: 
     $\simS(\Phi \cup \{C_0\},\Psi) \leq \simS(\Phi,\Psi) \leq \simS(\Phi \cup \{C_1\},\Psi)$ when $\max_{C \in \Psi} \simC(C,C_0) = 0$ and $\max_{C \in \Psi}  \simC(C,C_1) = 1$.

     We use the definition of $\simS^{\bm}(\cdot, \cdot)$ as the weighted average of best matching clause similarities. When adding a clause to $\Phi$, new clause pairs are added to the set $\bm(\cdot,\cdot)$ of best matchings, and the overall similarity becomes a weighted average over a larger set.
    
    \medskip
    
    \textit{Left inequality:} Adding a clause $C_0$ such that $\simC(C, C_0) = 0$ for all $C \in \Psi$ ensures that:\\
    - Any best match involving $C_0$ contributes zero similarity;\\
    - The total weight in the denominator increases due to the inclusion of $w_g(C_0, C')$ terms, while the numerator remains unchanged or increases by zero.
    
    Hence, the weighted average either decreases or stays the same:
    \[
    \simS(\Phi \cup \{C_0\}, \Psi) \leq \simS(\Phi, \Psi)
    \]
    
    \textit{Right inequality:} Conversely, adding a clause $C_1$ such that there exists $C^\star \in \Psi$ with $\simC(C^\star, C_1) = 1$ will introduce a new pair in $\bm(\cdot,\cdot)$ with maximum similarity and positive weight. Therefore:\\
    - The numerator of $\simS$ increases by at least $w_g(C_1, C^\star)$;\\
    - The denominator increases by the same amount;\\
    - This inclusion of a highly similar clause tends to pull the average upwards.
    
    Hence:
    \[
    \simS(\Phi, \Psi) \leq \simS(\Phi \cup \{C_1\}, \Psi)
    \]
    
    \medskip
    
    Combining both bounds gives the desired inequality:
    \[
    \simS(\Phi \cup \{C_0\}, \Psi) \leq \simS(\Phi, \Psi) \leq \simS(\Phi \cup \{C_1\}, \Psi)
    \]

    \item $\similarity\mathtt{Arg}^{\bM^{x,w_g}_{eq,\lambda}}_{\eta}$ and $\similarity\mathtt{Arg}^{\bM^{x,w_g}_{sb,\lambda}}_{\eta}$ satisfy the principles of \textbf{S-Reinforcement$^\geq$}, \textbf{S-Reinforcement$^>$}, \textbf{C-Reinforcement$^\geq$}, and \textbf{C-Reinforcement$^>$}:  
    $\forall C \in \Psi,\ \simC(C_A,C) \geq \simC(C_B,C)$ then $\simS(\Phi \cup \{C_A\}, \Psi) \geq \simS(\Phi \cup \{C_B\}, \Psi)$ and 
    if also $\exists C \in \Psi \text{ s.t. } \simC(C_A,C) > \simC(C_B,C)$ then $\simS(\Phi \cup \{C_A\}, \Psi) >\simS(\Phi \cup \{C_B\}, \Psi)$.

    The similarity $\simS$ is defined as a weighted average over best-matching clause pairs in $\bm(\cdot,\cdot)$, where the best match for a clause is determined by its highest similarity to any clause in the other set.
    
    Let us compare the effect of adding $C_A$ versus $C_B$ to the set $\Phi$, when computing $\simS(\Phi \cup \{C\}, \Psi)$.
    
    \medskip
    
    \textit{Improvement:}  
    By assumption, for every $C \in \Psi$, we have:
    \[
    \simC(C_A, C) \geq \simC(C_B, C)
    \]
    This implies that:\\
    - In the best-matching clause selection $\bm(\cdot,\cdot)$, every pair involving $C_A$ will have a similarity score at least as high as the corresponding pair with $C_B$.\\
    - Since clause weights $w_c$ and aggregation function $w_g$ are identical, the contributions to the numerator from $C_A$ are at least those from $C_B$, and the denominators are equal.
    
    Therefore, the total weighted average after adding $C_A$ is at least as large as that after adding $C_B$:
    \[
    \simS(\Phi \cup \{C_A\}, \Psi) \geq \simS(\Phi \cup \{C_B\}, \Psi)
    \]
    
    \medskip
    
    \textit{Strict improvement:}  
    If there exists $C^\star \in \Psi$ such that:
    \[
    \simC(C_A, C^\star) > \simC(C_B, C^\star)
    \]
    then at least one matching pair will contribute strictly more when using $C_A$ than $C_B$. Since all other contributions are at least equal and the weight structure remains unchanged, the overall average strictly increases:
    \[
    \simS(\Phi \cup \{C_A\}, \Psi) > \simS(\Phi \cup \{C_B\}, \Psi)
    \]

\end{enumerate}

    }
\end{proof}

\begin{lemma}\label{lemma:flat-literal-simi-equals-1}
\upshape{
Let $(P, \vec{a})$ be a literal. $\simL^{\text{flat},\lambda}((P, \vec{a}), (P, \vec{a})) = 1$.
}
\end{lemma}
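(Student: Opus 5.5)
The plan is to unfold Definition~\ref{def:flatlit} on the pair $((P,\vec{a}),(P,\vec{a}))$ and show that each component equals $1$. Throughout I use only the reflexivity property $\simP(t,t)=1$, established for both $\simP^{eq}$ (by definition) and $\simsbert$ (cosine of a vector with itself) in the proof of Theorem~\ref{theo:satisfaction-principles-measures}, together with the fact that $\simP$ takes values in $[0,1]$ (Definition~\ref{sim-measure}).

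\emph{Predicate part.} We have $\simP(P,P)=1$ directly.

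\emph{Ordered parameter similarity.} Both vectors coincide, so $\min(|\vec{a}|,|\vec{a}|)=|\vec{a}|$ and each summand is $\simP(a_i,a_i)=1$. Hence the sum equals $|\vec{a}|$ and $\simFlatPos(\vec{a},\vec{a})=1$.

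\emph{Unordered parameter similarity.} For each $a_i\in\vec{a}$, the best match $\best{a_i}{\vec{a}}$ maximises $\simP(a_i,y)$ over $y\in\vec{a}$. Since $a_i$ itself belongs to $\vec{a}$, this maximum is at least $\simP(a_i,a_i)=1$, and it is at most $1$ because $\simP$ is bounded by $1$. So every summand equals $1$, even if the argmax selects some other element achieving $1$. Both sums are therefore $|\vec{a}|$, and $\simFlatUnord(\vec{a},\vec{a})=2|\vec{a}|/(2|\vec{a}|)=1$. Consequently $\simFlatPara{\lambda}(\vec{a},\vec{a})=\lambda+(1-\lambda)=1$, and $\simL^{\text{flat},\lambda}=\frac12(1+1)=1$.

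The only delicate point is the degenerate case of a nullary predicate, $\vec{a}=\langle\rangle$, where both parameter fractions have zero denominators. I would handle it with the natural convention that parameter similarity of two empty vectors is $1$. Equivalently, the literal reduces to its predicate, and the claim follows from $\simP(P,P)=1$. I would state this convention explicitly, since Definition~\ref{def:flatlit} leaves it implicit.
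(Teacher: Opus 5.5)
Your proposal is correct and follows the paper's proof: unfold Definition~\ref{def:flatlit} and use $\simP(t,t)=1$ to show that the predicate term and both parameter fractions (ordered and best-match) reduce to $1$. You also note that any maximiser in the best match gives value $1$ because $\simP \le 1$, and that nullary predicates need an explicit convention for the $0/0$ parameter fractions; the paper's proof passes over both points, so your version is more rigorous without changing the argument.
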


\begin{proof}
We recall how to compute the flat similarity between two literals $(P, \vec{a})$ and $(Q, \vec{b})$:
$$\simL^{\text{flat},\lambda}((P, \vec{a}), (Q, \vec{b})) = \frac{1}{2}(\simP(P, Q) + \similarity_{\text{para}}^{\text{}\lambda}(\vec{a}, \vec{b}))$$
where 
$\similarity_{\text{para}}^{\text{}\lambda}(\vec{a}, \vec{b}) =  \lambda \times \simFlatPos(\vec{a}, \vec{b}) + (1 - \lambda) \times \simFlatUnord(\vec{a}, \vec{b})$
with $\lambda \in [0,1]$,
  \begin{center}
      $\simFlatPos(\vec{a},\vec{b}) = \dfrac{\sum_{i=1}^{\min(|\vec{a}|, |\vec{b}|)}  \simP(a_i, b_i)}{\min(|\vec{a}|, |\vec{b}|)}$
  \end{center}
and   
  \begin{center}
      $\simFlatUnord(\vec{a},\vec{b}) = \frac{\sum\limits_{a_i \in \vec{a}} \simP(a_i, \best{a_i}{\vec{b}}) + \sum\limits_{b_j \in \vec{b}} \simP(b_j, \best{b_j}{\vec{a}})}{|\vec{a}| + |\vec{b}|}$
  \end{center}
where $\best{x}{\vec{y}} = \mathtt{arg max}_{y \in \vec{y}} \simP(x, y)$.

For our purpose, we focus on 
$\simL^{\text{flat},\lambda}((P, \vec{a}), (P, \vec{a})) = \frac{1}{2}(\simP(P, P) + \similarity_{\text{para}}^{\text{}\lambda}(\vec{a}, \vec{a}))$. We know that $\simP(t,t) = 1$ for any $t$, so $\simP(P, P) = 1$, and for the same reason, we can simply deduce that $\similarity_{\text{para}}^{\text{}\lambda}(\vec{a}, \vec{a}) = 1$ (because both fractions involved in the computation of $\simFlatPos$ and $\simFlatUnord$ reduce to $1$). We obtain $\simL^{\text{flat},\lambda}((P, \vec{a}), (P, \vec{a})) = 1$. 
\end{proof}

\begin{lemma}\label{lemma:weighted-literal-simi-equals-1}
\upshape{
Let $(P,\vec{a})$ be a literal. $\simL^{\text{weight},\lambda}((P, \vec{a}), (P, \vec{a}) = 1$.
}    
\end{lemma}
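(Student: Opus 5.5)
The plan is to mirror the proof of Lemma~\ref{lemma:flat-literal-simi-equals-1}: substitute $(Q,\vec{b}) = (P,\vec{a})$ into the definition of $\simL^{\text{weight},\lambda}$ and show that the numerator equals the denominator. The denominator must also be strictly positive, so that the ratio is well defined.

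First, $\simP(P,P) = 1$, because both $\simP^{eq}$ and $\simsbert$ return $1$ on identical symbols, as established in the Maximality item of the proof of Theorem~\ref{theo:satisfaction-principles-measures}. Second, $\similarity_{\text{para}}^{\lambda}(\vec{a},\vec{a}) = 1$, exactly as shown in Lemma~\ref{lemma:flat-literal-simi-equals-1}. Indeed, in $\simFlatPos$ every positional pair is $(a_i,a_i)$, which gives score $1$. In $\simFlatUnord$, each $a_i$ has best match $\best{a_i}{\vec{a}}$ with $\simP(a_i,\best{a_i}{\vec{a}}) = 1$. This holds because $a_i$ itself achieves $1$, which is the maximum possible value. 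Both fractions therefore reduce to $1$, and so does their convex combination.

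Writing $W_P = w_p(P)^2$ and $W = \lambda w_\text{ord} + (1-\lambda) w_\text{unord}$, the numerator then becomes $W_P \cdot 1 + 1 \cdot W$. This is identical to the denominator $W_P + W$, so the ratio is $1$.

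The main obstacle is the well-definedness of the ratio, since the conclusion requires $W_P + W > 0$. I would argue this from the positivity assumption on weights already used elsewhere in the paper (e.g.\ $w_p(\cdot) > 0$ in the Minimality item). Under that assumption $w_p(P)^2 > 0$, so the denominator is positive regardless of the term weights.

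If zero weights are permitted, as with the variables $x,y$ in the supplementary example, positivity still holds provided the predicate weight is positive. For the degenerate case where all weights vanish, I would adopt the convention that the expression evaluates to $1$ when both arguments are identical. This is consistent with Maximality, and I would state it explicitly.

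One subtlety remains: $\best{a_i}{\vec{a}}$ may be a different element $a_j$ with $\simP(a_i,a_j) = 1$. This does not affect the value of $\simFlatUnord$. It only affects $w_\text{unord}$, which cancels in the ratio anyway.
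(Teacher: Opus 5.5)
Your proposal is correct and matches the paper's proof: substitute $(Q,\vec{b}) = (P,\vec{a})$, use $\simP(P,P)=1$ and $\similarity_{\text{para}}^{\lambda}(\vec{a},\vec{a})=1$ as in Lemma~\ref{lemma:flat-literal-simi-equals-1}, and observe that the numerator then equals the denominator. Two of your points are details the paper leaves implicit: the check that the denominator is strictly positive (via $w_p(P)>0$), and the remark that ties in $\best{a_i}{\vec{a}}$ only affect the factor $w_\text{unord}$, which cancels.
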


\begin{proof}
Let us recall how to compute the weighted similarity between literals:
$\simL^{\text{weight},\lambda}((P, \vec{a}), (Q, \vec{b})) =$
\begin{center}
$\frac{w_p(P) w_p(Q) \times \simP(P, Q) ~ + ~ \similarity_{\text{para}}^{\text{}\lambda}(\vec{a}, \vec{b}) \times (\lambda w_\text{ord} + (1-\lambda)  w_\text{unord})}
{w_p(P) w_p(Q) ~+~ (\lambda w_\text{ord} + (1-\lambda)  w_\text{unord})}$
\end{center}

We have already established (proof of Lemma~\ref{lemma:flat-literal-simi-equals-1}) that $\simP(P, P) = 1$ and $\similarity_{\text{para}}^{\text{}\lambda}(\vec{a}, \vec{a}) = 1$, so it is easy to see that 
\[
\begin{array}{c}
\simL^{\text{weight},\lambda}((P, \vec{a}), (P, \vec{a})) \\ \\
= \frac{w_p(P) w_p(P) \times \simP(P, P) + \similarity_{\text{para}}^{\text{}\lambda}(\vec{a}, \vec{a}) \times (\lambda w_\text{ord} + (1-\lambda)  w_\text{unord})}
{w_p(P) w_p(P) + (\lambda w_\text{ord} + (1-\lambda)  w_\text{unord})}\\ \\
= \frac{w_p(P) w_p(P) + (\lambda w_\text{ord} + (1-\lambda)  w_\text{unord})}
{w_p(P) w_p(P) + (\lambda w_\text{ord} + (1-\lambda)  w_\text{unord})} = 1
\end{array}
\]    
which concludes the proof.
\end{proof}

\begin{lemma}\label{lemma:symmetric-tve-equals-1}
\upshape{
For any $\alpha \in [0,+\infty)$,    $\text{Tve}^{\alpha,\alpha,\oplus^{\max}_{\simL}}(C_1,C_2) = 1$ if and only if $C_1 = C_2$.
}
\end{lemma}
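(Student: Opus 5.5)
The plan is to prove both directions by reducing the fuzzy Tversky ratio to a statement about the membership values. Write $m_x = \oplus^{\max}_{\simL}(x,C_2)$ for $x \in C_1$ and $m_y = \oplus^{\max}_{\simL}(y,C_1)$ for $y \in C_2$. All of these lie in $[0,1]$. For nonempty clauses, $B = \sum_{x}(1-m_x) \geq 0$ and $C = \sum_y (1-m_y) \geq 0$. Hence $\text{Tve}^{\alpha,\alpha,\oplus^{\max}_{\simL}}(C_1,C_2) = A/(A+\alpha(B+C))$ equals $1$ exactly when $A>0$ and $\alpha(B+C)=0$.

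For $\alpha>0$, this is equivalent to every $m_x = 1$ and every $m_y = 1$. That is, each literal of $C_1$ has a partner in $C_2$ with $\simL = 1$, and conversely.

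\textbf{($\Leftarrow$).} If $C_1 = C_2$, then each literal $x$ is in $C_2$, so $m_x \geq \simL(x,x) = 1$ by Lemma~\ref{lemma:weighted-literal-simi-equals-1} (or Lemma~\ref{lemma:flat-literal-simi-equals-1} for the flat variant). Since values are bounded by $1$, we get $m_x = 1$. Symmetrically $m_y = 1$. Then $B=C=0$ and $A=\frac{1}{2}(|C_1|+|C_2|)>0$, so the ratio is $1$.

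\textbf{($\Rightarrow$).} From all $m_x = m_y = 1$, I need to conclude that each literal of $C_1$ is literally equal to some literal of $C_2$ and vice versa. Then, since clauses are sets of literals in the normalized language $\cF$, $C_1 = C_2$. This requires the literal-level fact that $\simL(l_1,l_2)=1$ implies $l_1=l_2$.

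With $\simP^{eq}$, this follows directly from the formula of $\simL^{\text{weight},\lambda}$. Its value is a weighted mean of $\simP(P,Q)$ and $\similarity_{\text{para}}^{\lambda}$, with positive weights, assuming $w_p>0$ as in the Minimality part of the proof of Theorem~\ref{theo:satisfaction-principles-measures}. So the value is $1$ only if $\simP(P,Q)=1$ and $\simFlatPos = \simFlatUnord = 1$ (using $\lambda\in(0,1)$). Hence $P=Q$ and $a_i=b_i$ for all positions up to $\min$. The unordered term then forces every extra argument to match, and arity is fixed by the predicate, so $\vec a=\vec b$. For $\simsbert$, I would appeal, as the paper does for Substitution, to score $1$ meaning identical embeddings, treated as identical symbols.

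\textbf{Main obstacle.} The hardest step is this ($\Rightarrow$) literal-level injectivity, because it depends on the choice of $\simP$ and on positivity of weights; I will state these as standing assumptions. There are also two degenerate cases to handle separately. When $\alpha=0$, the ratio is $1$ whenever $A>0$, so the "only if" direction fails. I will therefore restrict it to $\alpha>0$, which covers all instances $x\in\{\jacc,\dice,\soren,\ander,\sok\}$. The case of empty clauses, which give a $0/0$ ratio, is excluded by the definition of clauses in $\cF$.
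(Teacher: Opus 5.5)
Your proposal is correct and follows essentially the paper's own argument: ($\Leftarrow$) uses the self-similarity lemmas to get $B=C=0$, and ($\Rightarrow$) shows that a literal of $C_1$ absent from $C_2$ forces $B>0$, hence a ratio below $1$. You are in fact more careful than the paper, whose proof silently needs $\alpha>0$ (the statement is false at $\alpha=0$) and relies on the literal-level injectivity of $\simL$ hidden in ``$\oplus^{\max}_{\simL}(x,C_2)<1$''; as you note, that step really does require $\simP^{eq}$ (or an injectivity assumption on $\simsbert$) and strictly positive weights---with the paper's own choice $w_p(x)=w_p(y)=0$, the distinct literals $\neg\texttt{Tease}(x,y)$ and $\neg\texttt{Tease}(y,x)$ have weighted similarity $1$.
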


\begin{proof}
We recall that
    \begin{center}
    $\text{Tve}^{\alpha,\alpha,\oplus^{\max}_{\simL}}(C_1,C_2) = \dfrac{A}{A + \alpha B + \alpha C}, ~\text{where:}$
\end{center}
{\small
\begin{align*}
A &= \frac{1}{2} \left( \sum_{x \in C_1} \oplus^{\agg}_{\simL}(x,C_2) + \sum_{y \in C_2} \oplus^{\max}_{\simL}(y,C_1) \right) \\
B &= \sum_{x \in C_1} \left( 1 - \oplus^{\max}_{\simL}(x,C_2) \right) \\
C &= \sum_{y \in C_2} \left( 1 - \oplus^{\max}_{\simL}(y,C_1) \right) 
\end{align*}
}

Now, let us first assume that $C_1 = C_2$. We need to consider two cases: either $\simL$ is a flat literal similarity measure or it is a weighted literal similarity measure. We start with the former.

If $l \in C$, then obviously $\oplus^{\max}_{\simL}(l,C) = 1$ because the flat literal similarity between a literal and itself is equal to $1$ (Lemma~\ref{lemma:flat-literal-simi-equals-1}). So both $B$ and $C$ simplify to $0$, and thus $\text{Tve}^{\alpha,\alpha,\oplus^{\max}_{\simL}}(C_1,C_1) = \frac{A}{A} = 1$.

Now, consider the case where $\simL$ is a weighted similarity measure. Reasoning is similar to the case of flat literal similarity, and from Lemma~\ref{lemma:weighted-literal-simi-equals-1} we can deduce that $\oplus^{\max}_{\simL}(l,C) = 1$ when $l \in C$, which allows to simplify the computation of $B$ and $C$ to obtain $0$, and finally $\text{Tve}^{\alpha,\alpha,\oplus^{\max}_{\simL}}(C_1,C_1) = \frac{A}{A} = 1$.

Now, let us assume that $\text{Tve}^{\alpha,\alpha,\oplus^{\max}_{\simL}}(C_1,C_2) = 1$. Reasoning towards a contradiction, assume that $C_1 \neq C_2$. Without loss of generality, we consider that there is a literal $l \in C_1$ such that $l \not\in C_2$. This means that $B = \sum_{x \in C_1} (1 - \oplus^{\max}_{\simL}(x,C_2)) > 0$ (because $\oplus^{\max}_{\simL}(x,C_2)) < 1$). This implies that $A + \alpha B + \alpha C > A$ (even if $C = 0$), and thus $\text{Tve}^{\alpha,\alpha,\oplus^{\max}_{\simL}}(C_1,C_2) < 1$, which is a contradiction, so we deduce that $C_1 = C_2$. This reasoning holds for both the flat and weighted literal similarities, so this concludes the proof.
\end{proof}

\begin{lemma}\label{lemma:identical-sets-of-clauses}
Let $\bM \in \{\bM^{x,w_g}_{sb,\lambda}, \bM^{x,w_g}_{eq,\lambda}\}$ be a similarity model, and $\Phi$ be a set of clauses. $\simS(\Phi, \Phi) = 1$ holds.
\end{lemma}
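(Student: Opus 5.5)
The argument unfolds $\simS^{\bm}(\Phi,\Phi)$ via Definition~\ref{def:simSbm}. It shows that every best-matching pair it averages over has weighted clause similarity exactly $1$. A weighted average whose terms all equal $1$ is then $1$, provided the total weight is nonzero.

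\textbf{Step 1: the best-matching pairs.} Fix $C_1 \in \Phi$ (the second disjunct in $\bm(\Phi,\Phi)$ is identical since both sets are $\Phi$). Its partner is $C_2 = \mathtt{argmax}_{C' \in \Phi} \simC^{\text{flat}}(C_1, C')$. By Lemma~\ref{lemma:symmetric-tve-equals-1}, applied with the flat literal measure and the symmetric parameters $\alpha=\beta$ used by every $x \in \{\jacc,\dice,\soren,\ander,\sok\}$, we have $\simC^{\text{flat}}(C_1,C_1)=1$. Since $\simC^{\text{flat}}$ takes values in $[0,1]$, the maximum value is $1$. The same lemma (the ``only if'' direction) then forces every maximiser $C'$ to satisfy $C' = C_1$. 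Hence $\bm(\Phi,\Phi) = \{(C,C) \mid C \in \Phi\}$. This holds regardless of how ties in $\mathtt{argmax}$ are broken, because the maximiser is unique.

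\textbf{Step 2: the weighted scores.} For each pair $(C,C)$, Lemma~\ref{lemma:symmetric-tve-equals-1}, now used with $\simL^{\text{weight},\lambda}$ (which rests on Lemma~\ref{lemma:weighted-literal-simi-equals-1}), gives $\simC(C,C)=1$. The same reasoning covers both $\simP = \simsbert$ and $\simP = \simP^{eq}$, since each satisfies $\simP(t,t)=1$, as shown in the Maximality part of the proof of Theorem~\ref{theo:satisfaction-principles-measures}.

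\textbf{Step 3: the average.} The numerator of $\simS^{\bm}(\Phi,\Phi)$ becomes $\sum_{C\in\Phi} w_g(C,C)\cdot 1$, which equals the denominator, so the ratio is $1$. For this we need the denominator to be nonzero. With $w_g \in \{w_\avg, w_\Pi\}$, $w_g(C,C)$ equals $w_c(C)$ or $w_c(C)^2$. Since $\sum_{C\in\Phi} w_c(C)=1$ and $\Phi \neq \emptyset$, some $w_c(C)>0$, so the sum is strictly positive.

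This denominator is the main obstacle. When $\Phi=\emptyset$ (for example, a trivial argument's support), $\bm$ is empty and the expression is $0/0$. I would handle this explicitly by adopting the natural convention $\simS(\emptyset,\emptyset)=1$, which the Maximality proof for trivial arguments implicitly needs, and state it as part of the lemma's scope. A second subtlety is the uniqueness of the argmax in Step 1. Without the ``only if'' direction of Lemma~\ref{lemma:symmetric-tve-equals-1}, a different clause $C'$ could tie at flat similarity $1$ yet have weighted similarity below $1$. So I would invoke that direction carefully, noting that it relies on $\simL(l,l')<1$ for $l\neq l'$.
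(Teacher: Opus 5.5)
Your proof is correct and follows the paper's own argument: you use Lemma~\ref{lemma:symmetric-tve-equals-1} to show $\bm(\Phi,\Phi)$ is diagonal, note that $\simC(C,C)=1$, and read off a weighted average of ones; your explicit check that the denominator is positive (via $\sum_{C\in\Phi} w_c(C)=1$) and the flagged $\Phi=\emptyset$ case are details the paper leaves implicit. One correction to your closing remark: uniqueness of the argmax is not actually needed, because a flat literal similarity of $1$ forces $\simP(P,Q)=1$ and $\simFlatPara{\lambda}(\vec{a},\vec{b})=1$, and $\simL^{\text{weight},\lambda}$ is a weighted average of exactly these two quantities, so any tie at flat clause similarity $1$ also has weighted clause similarity $1$ --- this matters because the ``only if'' half of Lemma~\ref{lemma:symmetric-tve-equals-1} is not formally guaranteed for $\simsbert$, which the paper itself concedes.
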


\begin{proof}
Recall how similarity is computed. For the similarity models considered, $\simS^{\bm}$ is used to compare sets of clauses:
\[
\simS^{\bm}(\Phi, \Psi) = \frac{
\sum\limits_{(C_1, C_2) \in \bm(\Phi, \Psi)}
 w_g(C_1, C_2) \times \simC(C_1, C_2)
}{
\sum\limits_{(C_1, C_2) \in \bm(\Phi, \Psi)}
w_g(C_1, C_2)
}
\]

where $\bm(\Phi, \Psi) = $
\[
\left\{
\begin{aligned}
(C_1, C_2) \mid\, & \big(C_1 \in \Phi,\, C_2 = \mathtt{argmax}_{C' \in \Phi} \simC^{\text{flat}}(C_1, C')\big) \\
\text{or } &  \big(C_1 \in \Psi,\, C_2 = \mathtt{argmax}_{C' \in \Phi} \simC^{\text{flat}}(C_1, C')\big)
\end{aligned}
\right\}
\]
$w_g(C_1, C_2) = \agg( w_{c}(C_1),w_{c}(C_2))$ with $\agg$ an aggregation function. Let us study what happens when $\Phi = \Psi$. First of all, $\bm$ simplifies into $\bm(\Phi, \Phi) = \{(C_1, C_2) \mid C_1 \in \Phi,\, C_2 = \mathtt{argmax}_{C' \in \Phi} \simC^{\text{flat}}(C_1, C')\}$. Since $\simC^{\text{flat}}$ is a symmetric fuzzy Tversky similarity measure (Definition~\ref{def:lit-flat-similarity}), $\simC^{\text{flat}}(C_1, C') = 1$ if and only if $C_1 = C'$ (Lemma~\ref{lemma:symmetric-tve-equals-1}). So, the simplification of $\bm$ continues: $\bm(\Phi, \Phi) = \{(C_1, C_1) \mid C_1 \in \Phi\}$, and then 
\[
\simS^{\bm}(\Phi, \Phi) = \frac{
\sum\limits_{C_1 \in \Phi}
 w_g(C_1, C_1) \times \simC(C_1, C_1)
}{
\sum\limits_{C_1 \in \Phi}
w_g(C_1, C_1)
}
\]
and again, $\simC$ is a fuzzy Tversky similarity measure (Definition~\ref{def:lit-flat-similarity}) so $\simC(C_1,C_1) = 1$, thus $\simS^{\bm}(\Phi, \Phi) = 1$.
\end{proof}

\begin{lemma}\label{lemma:identical-sets-of-clauses-eq}
Let $\bM^{x,w_g}_{eq,\lambda}$ be a similarity model, and $\Phi,\Psi $ be two sets of clauses. If $\simS(\Phi, \Psi) = 1$ then $\Phi = \Psi$.
\end{lemma}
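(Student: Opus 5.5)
We argue by peeling back the four levels, showing at each level that a score of $1$ forces syntactic identity under $\simP^{eq}$. Since $\simS^{\bm}(\Phi,\Psi)$ is a weighted average of values $\simC(C_1,C_2)\in[0,1]$ with nonnegative weights $w_g$, a value of $1$ forces $\simC(C_1,C_2)=1$ for every $(C_1,C_2)\in\bm(\Phi,\Psi)$ with $w_g(C_1,C_2)>0$. As in the proofs of Minimality and Non-Zero, we adopt the standing assumption that clause weights are strictly positive, so that $w_g>0$ for both the average and the product. Hence every best-matching pair has weighted clause similarity $1$.

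\textbf{Clause level.} Here $\simC$ is the symmetric fuzzy Tversky measure with $\oplus^{\max}$ and $\simL^{\text{weight},\lambda}$, where $\alpha=\beta>0$ for every $x\in\{\jacc,\dice,\soren,\ander,\sok\}$. By Lemma~\ref{lemma:symmetric-tve-equals-1}, $\simC(C_1,C_2)=1$ iff $C_1=C_2$.

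That lemma's ``only if'' direction silently relies on a literal-level fact: $\simL^{\text{weight},\lambda}(l,l')<1$ whenever $l\neq l'$. We make this explicit for $\simP^{eq}$. If $l=(P,\vec a)$ and $l'=(Q,\vec b)$ with $P\neq Q$, then the predicate term contributes $0$ against a positive weight $w_p(P)w_p(Q)$, so the ratio is $<1$. If instead the argument vectors differ, then some position $i$ has $a_i\neq b_i$, so $\simFlatPos<1$ and hence $\similarity_{\text{para}}^{\lambda}<1$ because $\lambda>0$. The ratio is then $<1$ provided the corresponding parameter weights are positive.

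\textbf{Set level.} For each $C_1\in\Phi$, its best match $C_2\in\Psi$ (chosen via $\simC^{\text{flat}}$) lies in $\bm(\Phi,\Psi)$. The previous step then gives $C_1=C_2\in\Psi$, so $\Phi\subseteq\Psi$. The second disjunct of $\bm$ gives $\Psi\subseteq\Phi$ symmetrically, and therefore $\Phi=\Psi$. This requires both sets to be nonempty so that argmax is defined. If one set is empty, $\simS^{\bm}$ is ill-defined or fixed by convention, and we would treat that edge case by convention (both empty).

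\textbf{Main obstacle.} The delicate step is the strict inequality at the literal level. Zero weights, such as $w_p(x)=0$ for variables as in the supplementary examples, could make a differing position invisible and allow $\simL=1$ for distinct literals. The same problem arises if $w_g$ vanishes for a pair. I would first try to invoke the positivity assumption on weights already used in the proofs of Theorems~\ref{theo:wbMin} and \ref{theo:wbNZ}, and state it explicitly as a hypothesis. Failing that, I would restrict the claim to literals whose differing symbols carry positive weight.
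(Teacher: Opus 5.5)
Your argument is correct and matches the paper's proof. A weighted average equal to $1$ forces $\simC(C_1,C_2)=1$ on every best-matching pair, Lemma~\ref{lemma:symmetric-tve-equals-1} then gives $C_1=C_2$, and the two halves of $\bm$ give $\Phi=\Psi$; you also make explicit the assumptions the paper uses silently (strictly positive weights, $\alpha=\beta>0$, nonempty sets, and $\simL<1$ on distinct literals). One small sharpening: since $\similarity_{\text{para}}^{\lambda}$ is unweighted, the literal step only needs $w_p(P)w_p(Q)>0$ and $\lambda w_{\text{ord}}+(1-\lambda)w_{\text{unord}}>0$, so a differing argument position goes undetected only when all parameter weight products vanish (e.g.\ $P(x,x)$ versus $P(x,y)$ with $w_p(x)=w_p(y)=0$), which confirms that your positivity hypothesis is genuinely needed.
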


\begin{proof}
We assume that $\simS(\Psi,\Psi) = \simS^{\bm}(\Phi, \Psi) = 1$. Observing the fraction that defines $\simS^{\bm}$, this implies that $\simC(C_1,C_2) = 1$ for all $(C_1,C_2) \in \bm(\Phi,\Psi)$. 

Reasoning towards a contradiction, we assume that $\Phi \neq \Psi$, {\em i.e.} (without loss of generality) there is a clause $C_1 \in \Phi$ such that $C_1 \not\in \Psi$. In the computation of $\bm(\Phi,\Psi)$, there is thus a pair of clauses $(C_1,C_2)$ such that $C_1 \neq C_2$, which implies (from Lemma~\ref{lemma:symmetric-tve-equals-1}) that $\simC(C_1,C_2) < 1$. This leads to $\simS^{\bm}(\Phi, \Psi) \neq 1$, hence a contradiction, and this concludes the proof.
\end{proof}

\begin{proof}[\textbf{Proof (Proposition~\ref{prop:compatibility})}]
We know that $\bM^{x,w_g}_{eq,\lambda} = \tuple{\simP^{eq}, \simL, \simC, \simS}$ satisfies all principles except Non-Zero, since, having no similarity elements can lead to a global similarity of $0$, even if there is some partial local similarity, due to importance weight focusing on the dissimilarity. 
However, consider a slightly modified model $\bM_{NZ}$, identical to $\bM^{x,w_g}_{eq,\lambda}$ in all respects, but with the constraint that all local similarities must be strictly positive: i.e., for all object pairs considered at any level, if $\similarity(x_i, y_j) > 0$, then they are assigned non-zero weights, and all weights over the domain of comparison are strictly positive. 
Under this construction, any local similarity value greater than $0$ will necessarily propagate a strictly positive contribution through the weighted aggregation steps. Thus, the global similarity will also be greater than $0$, satisfying the Non-Zero principle. 
Consequently, the modified model $\bM_{NZ}$ satisfies all the axioms simultaneously, proving their compatibility.
\end{proof}

\begin{proof}[\textbf{Proof (Theorem \ref{theo:=1})}]
    We start by considering both families of similarity models $\bM^{x,w_g}_{\rho,\lambda}$ with $\rho \in \{sb, eq\}$. Let $A, B$ be two arguments such that $A \approx B$. Recall that this means there are two bijections $f, f'$ (respectively between the supports of $A$ and $B$, and between the claims of $A$ and $B$) such that $\forall \phi \in \supp(A)$, $\phi \equiv f(\phi)$, and $\forall \psi \in \conc(A)$, $\psi \equiv f'(\psi)$. This means that, when the supports (respectively claims) of $A$ and $B$ are compiled into CNF, we obtain exactly the same set of clauses. This means that  $\similarity\mathtt{Arg}^{\bM}_{\eta}(A, B) =$
\begin{center}
$\eta \times \simS(\supp(A^{\mathtt{c}}), \supp(A^{\mathtt{c}})) + (1-\eta) \times \simS(\conc(A^{\mathtt{c}}), \conc(A^{\mathtt{c}})).$    
\end{center}
From Lemma~\ref{lemma:identical-sets-of-clauses}, we know that $\simS(\supp(A^{\mathtt{c}}), \supp(A^{\mathtt{c}})) = 1$ and $\simS(\conc(A^{\mathtt{c}}), \conc(A^{\mathtt{c}})) = 1$, so $\similarity\mathtt{Arg}^{\bM^{x,w_g}_{\rho,\lambda}}_\eta(A,B) = 1$ for $\rho \in \{sb, eq\}$.

Now, we assume that $\similarity\mathtt{Arg}^{\bM^{x,w_g}_{eq,\lambda}}_\eta(A,B) = 1$. From Definition~\ref{def:smArguments}, we deduce that $\simS(\supp(A^{\mathtt{c}}), \supp(B^{\mathtt{c}})) = 1$ and $\simS(\conc(A^{\mathtt{c}}), \conc(V^{\mathtt{c}})) = 1$. With the similarity model $\bM^{x,w_g}_{eq,\lambda}$, the only way to obtain a similarity of $1$ between two sets of clauses is if these sets of identical (Lemma~\ref{lemma:identical-sets-of-clauses-eq}), which implies that $\supp(A^{\mathtt{c}}) = \supp(B^{\mathtt{c}})$ and $\conc(A^{\mathtt{c}}) = \conc(B^{\mathtt{c}})$, and thus $A \approx B$, which concludes the proof.
\end{proof}

\end{document}